\documentclass{article}
\makeatletter
\def\input@path{{./}}
\makeatother
\usepackage{iclr2027_conference,times}
\usepackage{amsmath,amssymb,amsthm}
\usepackage{graphicx,booktabs,array}
\graphicspath{{./}}
\usepackage{tikz}
\usetikzlibrary{arrows.meta,positioning,fit,calc}
\usepackage{pgfplots}
\usepgfplotslibrary{groupplots,fillbetween}
\pgfplotsset{compat=1.18}
\usepackage{placeins}
\usepackage{hyperref}
\usepackage{url}
\usepackage{xcolor}
\theoremstyle{plain}
\newtheorem{theorem}{Theorem}

\newtheorem{proposition}[theorem]{Proposition}

\theoremstyle{definition}

\theoremstyle{remark}

\definecolor{stlblue}{HTML}{0072B2}
\definecolor{stlorange}{HTML}{D55E00}
\definecolor{stlgray}{HTML}{666666}
\definecolor{stllightgray}{HTML}{A6A6A6}

\pgfplotsset{
    stl axis/.style={
        axis background/.style={fill=white},
        axis line style={draw=black!70, line width=0.45pt},
        tick style={draw=black!70, line width=0.45pt},
        tick align=outside,
        xtick pos=bottom,
        ytick pos=left,
        ymajorgrids,
        major grid style={draw=black!12, line width=0.35pt},
        scaled ticks=false,
        enlargelimits=false,
        unbounded coords=discard,
        filter discard warning=false,
        label style={font=\fontsize{8}{9.5}\selectfont},
        tick label style={font=\fontsize{8}{9.5}\selectfont},
        yticklabel style={inner xsep=1pt},
        title style={font=\fontsize{8}{9.5}\selectfont, align=center, yshift=1pt},
        legend style={
            draw=none,
            fill=none,
            font=\fontsize{8}{9.5}\selectfont,
            cells={anchor=west},
            row sep=1pt,
        },
        every axis plot/.append style={line join=round, line cap=round},
    },
    stl utl/.style={
        color=black,
        dashed,
        line width=0.8pt,
        mark=o,
        mark size=1.35pt,
        mark options={solid, fill=white, line width=0.55pt},
    },
    stl pm/.style={
        color=stlblue,
        solid,
        line width=0.9pt,
        mark=triangle*,
        mark size=1.7pt,
        mark options={solid},
    },
    stl influence/.style={
        color=stlorange,
        solid,
        line width=1.05pt,
        mark=square*,
        mark size=1.8pt,
        mark options={solid},
    },
    stl strict/.style={
        color=stlgray,
        densely dotted,
        line width=0.95pt,
    },
    stl identity/.style={
        color=black!65,
        densely dashed,
        line width=0.85pt,
    },
}

\usepackage{xcolor} 

\title{Selective Tie Learning: Mitigating Spurious Correlations in Preference Optimization}

\author{Christian Moya$^{1\dagger}$\thanks{Corresponding author: \texttt{cmoyacal@purdue.edu}. $\dagger$ Equal contribution.}, Purav Matlia$^{1\dagger}$, Elliott Thornley$^2$, and Guang Lin$^1$  \\
$^1$ Purdue University, $^2$ National University of Singapore
}

\newif\ifshowunfinishedsections
\showunfinishedsectionsfalse

\iclrfinalcopy

\begin{document}
\maketitle
\lhead{Preprint. Under review.} 

\begin{abstract}
Preference optimization can learn spurious correlations, and augmenting the preference data with ties (equally preferred responses) can reduce spurious reliance. However, ties differ in their corrective effect, making their selection for augmentation important. For log-linear policies, we derive each tie's influence on spurious reliance and show that taking the largest predicted reductions is first-order optimal among subsets of equal size. Computing these influences requires knowledge of the spurious features and the curvature of the training objective, which is often unavailable or expensive to estimate. To this end, we explore using the magnitude of a tie's implicit reward margin and characterize the conditions under which it is a useful proxy for predicted influence. To test the theory and its qualitative transfer to neural models, we estimate influence with a counterfactual proxy for spurious reliance and use a generalized Gauss--Newton approximation to curvature for language models. Across log-linear policies, neural models, and large language models, our selection rules consistently achieve comparable reductions in spurious reliance with fewer ties than uniform selection.

\end{abstract}

\section{Introduction}
\label{sec:introduction}
Aligning large language models with human preferences is central to developing useful and safe AI systems~\citep{ouyang2022training,bai2022training}.
Standard methods for this alignment, including reinforcement learning from human feedback (RLHF) and direct preference optimization (DPO)~\citep{DBLP:conf/nips/RafailovSMMEF23}, learn policies from pairwise comparisons of responses.
Learning from comparisons, however, can encourage reliance on features that predict human labels without tracking substantive quality, such as verbosity~\citep{DBLP:journals/corr/abs-2310-03716, saito2023verbosity} and sycophancy~\citep{DBLP:conf/iclr/SharmaTKDABDHJK24}.
Models that rely on these features can fail when the features stop predicting human labels at deployment.

Recent theory shows that reliance on spurious features can persist even with unlimited preference data, because it arises from properties of the training distribution~\citep{moya2026spurious}. Collecting more examples from the same distribution therefore does not remove it. Because the bias originates in the training distribution, two families of interventions target it: methods that reshape the training objective~\citep{DBLP:conf/acl/ParkREF24} and methods that modify the training examples~\citep{DBLP:conf/iclr/000200CWJGS0YSM25,DBLP:journals/corr/abs-2511-12573}. The two families are complementary, since interventions on the training examples can work with existing objectives for preference learning rather than replace them. Our work advances interventions of this second kind.

Tie training is one such data intervention~\citep{moya2026spurious}. It augments preference data with ties: pairs of responses with equal utility that differ in spurious attributes. Under local log-linear assumptions, this augmentation provably suppresses reliance on these attributes. Related data interventions include the neutral pairs of RRM~\citep{DBLP:conf/iclr/000200CWJGS0YSM25} and counterfactual augmentation for length bias~\citep{DBLP:journals/corr/abs-2511-12573}. However, both methods keep many augmented pairs even after filtering, which adds training computation. This cost raises a question: can careful selection retain most of the reduction in spurious reliance with fewer ties?

To study this question, we analyze log-linear policies with a core--spurious feature decomposition. Within this setting, we propose Selective Tie Learning (STL), which selects a small subset of ties using a model trained on the original preference data. Our contributions are:

\textit{(i)} We derive the first-order influence of each tie on the model's reliance on spurious features (Proposition~\ref{prop:single-tie-influence}). Selecting the ties with the largest predicted reductions is first-order optimal among subsets of the same size (Theorem~\ref{thm:influence-selection}).

\textit{(ii)} Because computing the influence in \textit{(i)} requires the Hessian of the loss, we derive a cheaper selector that needs only forward passes. We show that, under a Gaussian model of the ties, selecting the ties with the largest absolute implicit reward margins maximizes the expected first-order reduction in spurious reliance among all such selectors (Theorem~\ref{thm:pm-optimality}).

\textit{(iii)} To extend \textit{(i)} to neural models, we estimate the influence of ties using a counterfactual measure of spurious reliance and a damped generalized Gauss--Newton approximation of the Hessian (Section~\ref{sec:nonlinear}). The estimate needs no explicit core--spurious feature decomposition and shares one Hessian inversion across all candidate ties.

We test the theoretical predictions in log-linear models and evaluate their transfer to neural networks and language models. Across these settings, targeted selection achieves comparable reductions in spurious reliance with fewer ties than uniform selection.

\section{Setup and Problem Formulation}
\label{sec:setup_and_problem_formulation}
\vspace{-0.5em}
\subsection{DPO with a Log-Linear Policy} \label{subsec:dpo_log_linear}
\textbf{Preference data.} Preference learning trains a policy from comparisons between responses to the same prompt. These comparisons form a dataset $\mathcal D_{\mathrm{pref}}=\{(x_i,y_{w,i},y_{l,i})\}_{i=1}^N$ sampled from a distribution $P$, where $y_{w,i}$ is preferred to $y_{l,i}$ for prompt $x_i\in\mathcal X$ \citep{DBLP:conf/nips/RafailovSMMEF23,DBLP:conf/nips/ChristianoLBMLA17}.

\textbf{Core and spurious features.} We represent the prompt and responses in each comparison using a fixed feature map $\phi(x,y)=[\phi_c(x,y);\phi_s(x,y)]$ and assume that the preference depends only on the core features $\phi_c$. The spurious features $\phi_s$ can still correlate with the preference in the training data, although this correlation need not persist at deployment.

\textbf{Log-linear policy.} To analyse how a policy uses these features, we assume a log-linear form, $\pi_\theta(y\mid x)
= \exp(\theta^\top\phi(x,y)) / Z_\theta(x),$ where the normalizer $Z_\theta(x)$ sums over responses and $\theta=[\theta_c;\theta_s]$ weights the core and spurious features. We assume the fixed reference policy $\pi_{\rm ref}=\pi_{\theta_{\rm ref}}$ belongs to the same family. The log-ratio $\log(\pi_\theta/\pi_{\rm ref})$ then equals $(\theta-\theta_{\rm ref})^\top\phi(x,y)$ up to a normalization term that cancels when comparing responses to the same prompt.

\textbf{DPO.} DPO scales this log-ratio into the implicit reward $r_\theta(x,y)=\beta\log\big(\pi_\theta(y\mid x)/\pi_{\rm ref}(y\mid x)\big)$, where $\beta>0$ controls regularization toward the reference policy. Within each comparison the normalization cancels, and the reward difference is $\beta\tilde\theta^\top\Delta\phi$, where $\tilde\theta=\theta-\theta_{\rm ref}$ and $\Delta\phi=\phi(x,y_w)-\phi(x,y_l)$. DPO fits this difference to the observed preferences through the logistic loss
$$
\mathcal L_{\rm DPO}(\theta)
=-\mathbb E_{\mathcal D_{\rm pref}}
\Big[\log\sigma\big(\beta\tilde\theta^\top\Delta\phi\big)\Big],
$$
where $\sigma$ is the logistic function. Because $\Delta\phi= [\Delta\phi_c;\Delta\phi_s]$, both core and spurious features contribute to the reward difference. DPO can therefore reduce the loss by exploiting spurious correlations that predict the winner.
\vspace{-0.5em}
\subsection{Spurious Reliance and Ties as an Intervention} \label{subsec:spurious_reliance}
DPO exploits spurious correlations even at the population level \citep{moya2026spurious}. With $\mu=\mathbb E_P[\Delta\phi]$ and $\Sigma=\mathbb E_P[\Delta\phi\Delta\phi^\top]$, two mechanisms drive this reliance. Through the mean, spurious differences predict the winner directly ($\mu_s\neq0$). Through leakage, they correlate with core differences ($\Sigma_{sc}\neq0$) and inherit their predictive signal. Both depend on $P$ alone, so more comparisons from the same distribution do not remove the reliance.

\textbf{Ties as an intervention.} These two mechanisms motivate comparisons that carry neither a mean spurious bias nor leakage from core differences. We call such comparisons \emph{ties}: a tie $\tau=(x,\{y_A,y_B\})$ pairs two responses with identical core features and different spurious features, so its feature difference $\Delta\phi=[0;\Delta\phi_s]$ has no core part to leak from. Assigning either response as the winner with equal probability also makes the mean spurious difference zero. \citet{moya2026spurious} show that adding such ties to DPO training reduces spurious reliance.
\vspace{-0.5em}
\subsection{Problem: Selecting Ties from a Pool} \label{subsec:problem_selection}
The benefit of tie training \citep{moya2026spurious} leaves open which ties to use. Ties differ because a tie acts on the policy only through its margin: if the policy already assigns equal implicit reward to both responses, training on the tie changes nothing. Training on the whole pool $\mathcal D_{\rm tie}$ of $M$ candidates therefore spends effort on ties that change little. Under a budget of $k <  M$ ties, we instead seek the subset of the given pool whose training most reduces spurious reliance. Such a subset should outperform $k$ ties sampled uniformly, the simplest selector. 

In this paper, each candidate comes from a strict comparison sampled from
$\mathcal D_{\rm pref}$. A known transformation $T$ then changes the spurious features of the preferred response while approximately preserving its core features, $\phi_c(x,T(y_w))-\phi_c(x,y_w)\approx0$. So that each response wins equally often, each candidate is a packet with both orientations, $\tau=\{\tau_\ell,\tau_r\},$ where $\tau_\ell=(x,y_w,T(y_w))$ and $\tau_r=(x,T(y_w),y_w)$ with equal probability, so their feature differences cancel in expectation, while each orientation has approximately zero core difference. We score and select whole packets, then sample one orientation from each selected packet for training. Our selectors also accept pools from other constructions that satisfy the assumptions of our analysis. We study this selection problem at two levels of access to the model. Section~\ref{sec:influence} derives the best selector when gradients and curvature are available, and Section~\ref{sec:margin} derives the best selector when only forward passes are available.

\section{Selection with Full Access: Influence}
\label{sec:influence}
\label{sec:stl}
In this section, we measure spurious reliance by a spurious energy and analyze, to first order, how training on a tie changes it. We show that this first-order change is the classical influence of the tie on the energy \citep{DBLP:conf/icml/KohL17}, so selecting the $k$ ties with the largest predicted reduction solves the selection problem to first order.
\vspace{-0.5em}
\subsection{The First-Order Effect of a Tie on Spurious Energy} \label{subsec:first_order_effect}

\textbf{Spurious energy.} Spurious reliance shows up in the margin. The margin on a comparison splits into a core and a spurious part, $\beta(\tilde\theta_c^\top\Delta\phi_c+\tilde\theta_s^\top\Delta\phi_s)$, and only the core part reflects the preference. We therefore measure reliance by the mean square of the spurious part, before scaling by $\beta$:
$$
E(\theta)=\mathbb E_P\big[(\tilde\theta_s^\top\Delta\phi_s)^2\big]
=\tilde\theta_s^\top\Sigma_{ss}\tilde\theta_s .
$$
Squaring counts spurious contributions of either sign, and weighting by $\Sigma_{ss}$ counts spurious weights only along directions in which
responses actually differ. The energy vanishes exactly when spurious features never move the margin under $P$, and a tie is useful to the extent that training on it lowers this energy.

We condition on the realized candidate pool and analyze each packet $\tau_j$ through its DPO loss averaged over both orientations. We assume that the two responses in a tie have the same core features, so the feature differences of the two orientations are $[0;d_j]$ and $[0;-d_j]$, where
$$
d_j=\phi_s(x_j,y_{w,j})-\phi_s(x_j,T(y_{w,j})).
$$
Because the loss is averaged over both orientations, reversing the sign of $d_j$ leaves it unchanged, so either orientation can define $d_j$. With margin $m_j=\beta\tilde\theta_s^\top d_j$ for the first orientation, the tie loss is
$$
\ell_j(\theta)=-\tfrac12\big[\log\sigma(m_j)+\log\sigma(-m_j)\big].
$$
To isolate the effect of one tie, we add it to DPO with a small weight $\epsilon/N$,
$$
\theta_j^\epsilon=\arg\min_\theta\;\mathcal L_{\rm DPO}(\theta)+\frac{\epsilon}{N}\,\ell_j(\theta),
$$
so that $\epsilon=0$ recovers the DPO solution $\theta_0$. The effect of the tie on reliance is then the rate at which the spurious energy $E(\theta_j^\epsilon)$ changes as $\epsilon$ grows from zero.

\textbf{Tie gradient.} The gradient of the tie loss, $g_j=\nabla_\theta\ell_j$, drives this rate. Because the margin on a tie depends only on the spurious weights, so does the gradient:
$$
g_{j,c}=0,\qquad
g_{j,s}=\frac{\beta}{2}\tanh\!\Big(\frac{m_j}{2}\Big)d_j .
$$
This is the gradient of the packet loss, averaged over both orientations. Flipping the orientation flips both $d_j$ and $m_j$, so the packet gradient does not depend on which orientation defines the contrast $d_j$. A tie therefore pushes the policy only along its spurious contrast, with a strength set by its margin. The push vanishes when the policy already assigns equal implicit reward to both responses, grows as $\tfrac{\beta}{4}m_j d_j$ for small margins, and saturates at magnitude $\tfrac{\beta}{2}\|d_j\|$ for large ones.

\textbf{From push to influence.} The gradient gives the immediate push of a tie, but DPO resists it, and the curvature of the DPO loss decides how far the solution moves. We analyze the DPO policy $\theta_0$, a solution at which the DPO loss is stationary and strictly convex,
$$
\nabla\mathcal L_{\rm DPO}(\theta_0)=0,\qquad
H=\nabla^2\mathcal L_{\rm DPO}(\theta_0)\succ0 .
$$
Adding the tie with weight $\epsilon/N$ tilts this stationarity condition, and the solution restores it by moving at rate $-H^{-1}g_j(\theta_0)/N$. Only the spurious part of this move changes the energy. While the tie pushes the spurious weights, DPO refits the core weights, so the spurious weights move at rate $-K_s^{-1}g_{j,s}(\theta_0)/N$, where $K_s=H_{ss}-H_{sc}H_{cc}^{-1}H_{cs}$ is the Schur complement of the core block. The resulting rate of change of the energy is the influence of the tie
\citep{DBLP:conf/icml/KohL17}.

\begin{proposition}[Influence of a tie on spurious energy]
\label{prop:single-tie-influence}
Suppose $\theta_0$ is stationary with $H\succ0$. For small $\epsilon$, the solution $\theta_j^\epsilon$ satisfies $E(\theta_j^\epsilon)-E(\theta_0)=-\epsilon A_j+O(\epsilon^2)$, with
$$
A_j=\frac1N\big\langle\nabla_sE(\theta_0),K_s^{-1}g_{j,s}(\theta_0)\big\rangle
=\frac{\beta}{2N}\tanh\!\Big(\frac{m_j}{2}\Big)
\big\langle\nabla_sE(\theta_0),K_s^{-1}d_j\big\rangle ,
$$
where $m_j=\beta\tilde\theta_{0,s}^\top d_j$ is the margin on the tie at
$\theta_0$.
\end{proposition}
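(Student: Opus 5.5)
The plan is the standard implicit-function-theorem argument for influence functions, followed by a block elimination to isolate the spurious coordinates and a short gradient computation for the tie loss. Define $F(\theta,\epsilon)=\nabla\mathcal L_{\rm DPO}(\theta)+\frac{\epsilon}{N}\nabla\ell_j(\theta)$.

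\textbf{Existence of a smooth solution path.} Both losses are compositions of $\log\sigma$ with linear maps, so they are $C^\infty$ in $\theta$. At the base point we have $F(\theta_0,0)=0$, and $\partial_\theta F(\theta_0,0)=H\succ0$ is invertible. The implicit function theorem therefore gives a unique $C^1$ (indeed smooth) curve $\epsilon\mapsto\theta_j^\epsilon$ near $\theta_0$ with $F(\theta_j^\epsilon,\epsilon)=0$.

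\textbf{Identifying this curve with the argmin.} By continuity, the Hessian $H+\frac{\epsilon}{N}\nabla^2\ell_j$ stays positive definite near $\theta_0$, and $\ell_j$ is convex. So the stationary point on the curve is a strict local minimizer. Strictly, one has to identify it with the $\arg\min$ in the definition of $\theta_j^\epsilon$. This is the only genuinely delicate step: I would either invoke convexity of the whole objective (both terms are convex) together with strict convexity near $\theta_0$, or simply interpret $\theta_j^\epsilon$ as the local minimizer continuing $\theta_0$.

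\textbf{First-order displacement and the Schur complement.} Differentiating $F(\theta_j^\epsilon,\epsilon)=0$ at $\epsilon=0$ gives
\[
H\,\dot\theta=-\tfrac1N g_j(\theta_0).
\]
Write this in core/spurious blocks and use $g_{j,c}=0$. The core row reads $H_{cc}\dot\theta_c+H_{cs}\dot\theta_s=0$, and $H_{cc}\succ0$ because $H\succ0$, so $\dot\theta_c=-H_{cc}^{-1}H_{cs}\dot\theta_s$. Substituting into the spurious row gives $K_s\dot\theta_s=-\frac1N g_{j,s}(\theta_0)$. Here $K_s$ is positive definite as the Schur complement of a positive definite matrix, so $\dot\theta_s=-\frac1N K_s^{-1}g_{j,s}(\theta_0)$.

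\textbf{Expanding the energy.} $E$ is a quadratic function of $\tilde\theta_s$ alone, and the path is $C^1$, so a Taylor expansion gives
\[
E(\theta_j^\epsilon)-E(\theta_0)=\epsilon\langle\nabla_sE(\theta_0),\dot\theta_s\rangle+O(\epsilon^2).
\]
For the $O(\epsilon^2)$ remainder I would use that the path is $C^2$ (the map $F$ is smooth) and $E$ is quadratic. This yields the first expression for $A_j$.

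\textbf{The tie gradient.} For the second expression, compute $\frac{d}{dm}\bigl(-\tfrac12[\log\sigma(m)+\log\sigma(-m)]\bigr)=-\tfrac12[(1-\sigma(m))-\sigma(m)]=\sigma(m)-\tfrac12=\tfrac12\tanh(m/2)$. By the chain rule through $m_j=\beta\tilde\theta_s^\top d_j$, this gives $g_{j,s}=\frac\beta2\tanh(m_j/2)d_j$ and $g_{j,c}=0$. Substituting into the first expression, evaluated at $\theta_0$, completes the formula.
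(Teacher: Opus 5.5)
Your proposal is correct and follows essentially the same route as the paper. Both arguments apply the implicit function theorem to the perturbed stationarity equation, using convexity of both terms for $\epsilon\ge0$ to identify the stationary point with the minimizer. Both then do a block elimination with $g_{j,c}=0$ to get $K_s\dot\theta_s=-\tfrac1N g_{j,s}(\theta_0)$, take a first-order expansion of the energy (which depends only on $\tilde\theta_s$), and finish with the tie-gradient identity $\sigma(m)-\tfrac12=\tfrac12\tanh(m/2)$.
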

\begin{proof}
See Appendix~\ref{app:influence-proofs}.    
\end{proof}
\textbf{Reading the influence.} A positive $A_j$ predicts that the tie lowers the energy, and the influence shows what makes it positive. Its size depends on two factors. The margin factor $\tanh(m_j/2)$ is the strength of the push: a tie on which the policy is already indifferent has no influence. The alignment factor $\langle\nabla_sE(\theta_0),K_s^{-1}d_j\rangle$ measures whether that push, after DPO refits the core weights, points against the current reliance. The alignment factor need not share the sign of the margin, so a tie can raise the energy. Influences of different ties add to first order, so choosing a set of ties reduces to ranking them.

\vspace{-0.5em}
\subsection{Main Result: First-Order Optimal Selection} \label{subsec:influence_selection}
Training on a set $S$ of ties, each with weight $\epsilon/N$, tilts the stationarity condition by the sum of their gradients. At first order the solution responds linearly to this tilt, so the influences of the ties add:
$$
E(\theta_S^\epsilon)-E(\theta_0)=-\epsilon\sum_{j\in S}A_j+O(\epsilon^2).
$$
Under the budget, the best set is the one that makes this sum largest.
\begin{theorem}[First-order optimal tie selection]
\label{thm:influence-selection}
Let $S_k$ contain the $k$ ties with the largest $A_j$. Then
$$
\sum_{j\in S_k}A_j=\max_{|S|=k}\sum_{j\in S}A_j
\;\ge\;\frac{k}{M}\sum_{j=1}^{M}A_j ,
$$
with strict inequality for $0<k<M$ unless all $A_j$ are equal.
\end{theorem}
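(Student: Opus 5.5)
The statement is purely combinatorial once Proposition~\ref{prop:single-tie-influence} and the additivity display preceding the theorem are taken as given. Those two results make the first-order reduction in spurious energy from a set $S$ equal to $\epsilon\sum_{j\in S}A_j$. The selection problem therefore reduces to maximizing a modular (additive) set function under a cardinality constraint. I would prove the claims in three steps: the equality, the averaging inequality, and the strictness case.

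\emph{Equality.} Order the influences as $A_{(1)}\ge A_{(2)}\ge\dots\ge A_{(M)}$, breaking ties among equal values arbitrarily, and let $S_k$ index the first $k$. Take any $S$ with $|S|=k$. If $S\neq S_k$, then $|S\setminus S_k|=|S_k\setminus S|$, and we pair the elements of $S\setminus S_k$ with those of $S_k\setminus S$ by any bijection. Every element of $S_k$ has influence at least every element outside $S_k$, so each pair contributes a nonnegative amount to $\sum_{S_k}A_j-\sum_{S}A_j$. Hence $\sum_{S_k}A_j\ge\sum_S A_j$. Since $S_k$ is itself feasible, the maximum is attained there.

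\emph{Averaging bound.} Average over all $\binom{M}{k}$ subsets of size $k$. Each index belongs to exactly $\binom{M-1}{k-1}$ of them, so the mean of $\sum_{j\in S}A_j$ is $\tfrac{k}{M}\sum_{j=1}^M A_j$. This mean is also the expected first-order reduction under uniform selection, which is the interpretation the paper intends. A maximum is at least a mean, which gives the inequality.

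\emph{Strictness.} This is the only step needing care, and the difficulty is bookkeeping rather than analysis. Suppose $0<k<M$ and the $A_j$ are not all equal. I would argue directly, without going through the average. The top-$k$ mean $a=\tfrac1k\sum_{S_k}A_j$ is at least the bottom mean $b=\tfrac{1}{M-k}\sum_{j\notin S_k}A_j$, since every top value is at least every bottom value. Equality $a=b$ would force $A_{(k)}\le a=b\le A_{(k+1)}\le A_{(k)}$, so all top values equal $A_{(k)}$ and all bottom values equal $A_{(k)}$. That would make every $A_j$ equal, a contradiction. Hence $a>b$. The overall mean $\tfrac1M\sum_j A_j=\tfrac{k a+(M-k)b}{M}$ is a proper convex combination of $a$ and $b$ because $0<k<M$, so it lies strictly below $a$. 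This gives $\sum_{S_k}A_j=ka>\tfrac{k}{M}\sum_j A_j$.

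Finally, I would remark that the theorem concerns only the first-order coefficient. For fixed $k$ and sufficiently small $\epsilon$, the $O(\epsilon^2)$ remainders cannot reverse a strict first-order gap between two sets, because there are finitely many subsets. Optimality is nevertheless claimed only to first order, as the statement says.
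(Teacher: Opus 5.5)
Your proposal is correct and matches the paper's proof: it uses the same pairing (exchange) argument to show that $S_k$ attains the maximum and the same inclusion probability $k/M$ to compute the uniform-selection average. The paper's appendix proof additionally re-derives the first-order additivity (implicit function theorem plus Schur-complement elimination), which you take as given from the preceding display. For strictness, the paper argues that some size-$k$ set has a strictly smaller total and receives positive probability under uniform sampling, whereas your top-mean versus bottom-mean convex-combination argument reaches the same conclusion directly. It also makes explicit the step the paper leaves implicit: when not all $A_j$ are equal, there must be a strict gap across the top-$k$ boundary.
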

\begin{proof}
See Appendix~\ref{app:influence-selection}.
\end{proof}
\textbf{Interpretation.} The theorem ranks ties by influence, that is, by how hard each tie pushes and where the push points. The right side is what uniform selection achieves in expectation, so the theorem also measures what selection buys: $k$ times the gap between the mean influence of the top $k$ ties and that of the pool. Selection therefore pays off exactly when ties differ, which is the premise of Section~\ref{subsec:problem_selection}.

\textbf{Scope: first-order result.} The theorem holds to first order, and the $O(\epsilon^2)$ terms capture how ties interact. Two ties with similar spurious contrasts overlap, because training on one shrinks the margin of the other. Every selected tie also moves $\theta_0$ and, with it, the margins and curvature that define the remaining influences. The guarantee holds while the total tie weight is small relative to $N$. Training uses larger weights, so Section~\ref{sec:experiments} tests whether the first-order ranking still selects better ties.

\textbf{The cost of influence.} Computing $A_j$ requires three quantities: the inverse curvature $K_s^{-1}$, the energy gradient $\nabla_sE(\theta_0)$, and the gradient of each tie. For a neural policy, each is expensive, and $\nabla_sE$ requires knowing the spurious features themselves. Of the two factors in $A_j$, only the margin is available from a forward pass.
This raises the question of the next section: using only forward passes, which selector best approximates influence on average?

\section{Selection with Forward Passes: Margin}
\label{sec:margin}
Section~\ref{sec:influence} showed that ranking ties by influence minimizes spurious energy to first order, but computing influence requires the inverse curvature, the energy gradient, and a gradient for each tie. We now ask how well ties can be ranked using only forward passes. We introduce \emph{pure margin selection}, which ranks ties by the absolute margin that the DPO solution $\theta_0$ assigns to each of them. Under a Gaussian model of the spurious contrasts, it maximizes the expected first-order reduction in energy among selectors that see only the margins.
\vspace{-0.5em}
\subsection{What a Forward Pass Reveals}  \label{subsec:forward_pass_reveals}
A forward pass through the policy and the reference returns the log-probability of each response, and hence its implicit reward. On a tie, the difference between the two implicit rewards is the margin $m_j$ from Section~\ref{subsec:first_order_effect}. We orient each tie uniformly at random, which leaves its influence unchanged, so that $ m_j=r_{\theta_0}(x_j,y_{A,j})-r_{\theta_0}(x_j,y_{B,j}) =\beta\tilde\theta_{0,s}^\top d_j,$ where $y_{A,j}$ is the response that the sampled orientation places first. Forward passes thus deliver the margin of every tie in the pool at the cost of evaluating the two models.

The margin reveals one factor of the influence and hints at the other. From Proposition~\ref{prop:single-tie-influence},
$$
A_j=\frac{\beta}{2N}\tanh\!\Big(\frac{m_j}{2}\Big)\,a_j,
\qquad a_j=\big\langle\nabla_sE(\theta_0),K_s^{-1}d_j\big\rangle .
$$
The margin fixes the margin factor, while the alignment $a_j$ stays hidden because it requires the curvature and the energy gradient. Both, however, are linear in the same contrast $d_j$, so the margin carries information about the alignment. A selector that sees only the margins can do no better than rank ties by their expected influence given those margins, and the margin factor passes through this expectation unchanged. Everything therefore hinges on how the expected alignment depends on the margin.

\subsection{Main Result: Pure Margin Selection Is Optimal in Expectation}
\label{subsec:pm-optimality}
\textbf{Assumptions.} How the expected alignment depends on the margin is a question about how contrasts are distributed across the pool. The random orientation centers them, $\mathbb E[d_j]=0$. We condition on the strict training data, fixing the fitted checkpoint $\theta_0$ and its Hessian $H$, and regard the population moment $\Sigma_{ss}$ as fixed. For tractability, we assume that, under this conditioning, the candidate contrasts are independent across ties and satisfy $d_j\sim\mathcal N(0,c\Sigma_{ss})$, where $c>0$. The covariance models ties that inherit the spurious geometry of $P$: after whitening by $\Sigma_{ss}$, the contrasts are isotropic, as in \citet{moya2026spurious}. Under this model the alignment and the margin are jointly Gaussian, so the expected alignment grows linearly with the margin, with a positive slope. The following theorem characterizes the expected influence given a tie's margin, with expectations taken under the conditional model specified by our assumptions.

\begin{theorem}[Ranking ties from margins]
\label{thm:pm-optimality}
Under the assumptions above, with $\tilde\theta_{0,s}\neq0$, there is a constant $\lambda>0$, common to all ties, such that
$$
\mathbb E[A_j\mid m_j]=\frac{\lambda}{N}\,m_j\tanh\!\Big(\frac{m_j}{2}\Big).
$$
Thus, among selectors that see only the margins, selecting the $k$ ties with the largest $|m_j|$ maximizes the expected first-order reduction in energy.
\end{theorem}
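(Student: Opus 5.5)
We would start from the factorization recorded just before the theorem, $A_j=\frac{\beta}{2N}\tanh(m_j/2)\,a_j$ with $a_j=\langle\nabla_sE(\theta_0),K_s^{-1}d_j\rangle$. Conditioning on the strict data fixes $\theta_0$, $H$, $K_s$ and $\Sigma_{ss}$, so $\tanh(m_j/2)$ is a function of $m_j$ and pulls out of the conditional expectation. This gives $\mathbb E[A_j\mid m_j]=\frac{\beta}{2N}\tanh(m_j/2)\,\mathbb E[a_j\mid m_j]$. The proof then reduces to computing $\mathbb E[a_j\mid m_j]$ under $d_j\sim\mathcal N(0,c\Sigma_{ss})$.

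Write $u=K_s^{-1}\nabla_sE(\theta_0)$ and $v=\beta\tilde\theta_{0,s}$, using that $K_s$ is symmetric. Then $a_j=u^\top d_j$ and $m_j=v^\top d_j$ are jointly centered Gaussian, with $\mathrm{Var}(m_j)=c\,v^\top\Sigma_{ss}v$ and $\mathrm{Cov}(a_j,m_j)=c\,u^\top\Sigma_{ss}v$. The standard Gaussian conditioning formula gives $\mathbb E[a_j\mid m_j]=\kappa\, m_j$ with
$$
\kappa=\frac{u^\top\Sigma_{ss}v}{v^\top\Sigma_{ss}v}.
$$
The denominator needs $v^\top\Sigma_{ss}v>0$. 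This holds when $\tilde\theta_{0,s}\neq0$, assuming $\Sigma_{ss}\succ0$, which we take as implicit in the model. If $\Sigma_{ss}$ is singular, we restrict to its range, since components of $v$ outside it contribute nothing to $m_j$.

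The key step, and the main obstacle, is showing $\kappa>0$. Since $E(\theta)=\tilde\theta_s^\top\Sigma_{ss}\tilde\theta_s$, we have $\nabla_sE(\theta_0)=2\Sigma_{ss}\tilde\theta_{0,s}$. Hence
$$
u^\top\Sigma_{ss}v=2\beta\,(\Sigma_{ss}\tilde\theta_{0,s})^\top K_s^{-1}(\Sigma_{ss}\tilde\theta_{0,s}).
$$
Because $H\succ0$, its Schur complement $K_s$ is positive definite, and so is $K_s^{-1}$. The quadratic form is therefore strictly positive once $\Sigma_{ss}\tilde\theta_{0,s}\neq0$, which again follows from $\tilde\theta_{0,s}\neq0$ and $\Sigma_{ss}\succ0$. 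Setting $\lambda=\frac{\beta}{2}\kappa>0$, which is common to all ties, yields $\mathbb E[A_j\mid m_j]=\frac{\lambda}{N}m_j\tanh(m_j/2)$.

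For the selection claim, note that $f(m)=m\tanh(m/2)$ is even and strictly increasing in $|m|$, since both $|m|$ and $\tanh(|m|/2)$ are nonnegative and increasing. A selector that sees only the margins picks a set $S$ that is measurable with respect to $(m_1,\dots,m_M)$. Ties are independent, so $\mathbb E[A_j\mid m_1,\dots,m_M]=\mathbb E[A_j\mid m_j]$. By the tower property, the expected first-order reduction is
$$
\epsilon\,\mathbb E\Big[\sum_{j\in S}\tfrac{\lambda}{N}f(m_j)\Big].
$$
For each realization of the margins, this sum is maximized by the $k$ largest values of $f(m_j)$, that is, the $k$ largest $|m_j|$, exactly as in the ranking argument of Theorem~\ref{thm:influence-selection}. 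Taking expectations then shows that pure margin selection is optimal among margin-only selectors.
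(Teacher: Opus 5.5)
Your proposal is correct and follows essentially the same route as the paper: Gaussian conditioning on the margin (you condition the scalar $a_j$, the paper conditions the vector $d_j$, which is equivalent), positivity from $K_s^{-1}\succ0$ and $\Sigma_{ss}\tilde\theta_{0,s}\neq0$, and pointwise top-$k$ maximization using independence across ties. Your constant $\lambda=\tfrac{\beta}{2}\kappa$ coincides with the paper's $\lambda=\frac{\tilde\theta_{0,s}^\top\Sigma_{ss}K_s^{-1}\Sigma_{ss}\tilde\theta_{0,s}}{\tilde\theta_{0,s}^\top\Sigma_{ss}\tilde\theta_{0,s}}$.
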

\begin{proof}
    See Appendix~\ref{app:pm-optimality}.
\end{proof}

\textbf{Interpretation.} The expected influence $m_j\tanh(m_j/2)$ is even and increases with $|m_j|$, so the sign of the margin, which records only the sampled orientation, drops out of the ranking. We call the resulting selector \emph{pure margin selection}. It works for two reasons. A larger absolute margin produces a stronger push from the tie. Under the covariance assumption, it also predicts better alignment with the direction that lowers the energy. A tie with zero margin has neither, which makes precise the observation in Section~\ref{subsec:problem_selection} that such a tie changes nothing.

\textbf{Scope of the result.} The theorem concerns the expected influence given the margins. Two ties with the same margin can still differ in influence, because their contrasts can point in different directions, so pure margin selection need not recover the influence ranking of a particular pool. Section~\ref{sec:experiments} compares the two selectors directly.

\textbf{From theory to neural models.} Both derived selectors rest on the log-linear model assumption. In a neural policy, the margin stays exact and cheap, because it still requires only forward passes. Influence, by contrast, requires a proxy for the energy, whose spurious features are unobserved, and an estimate of the curvature. Section~\ref{sec:nonlinear} builds both.

\section{From Theory to Neural Models}
\label{sec:nonlinear}
Section~\ref{sec:margin} showed that pure margin selection maximizes the expected first-order reduction in energy under a Gaussian model of the spurious contrasts, while influence attains the first-order optimum without that model. We now implement both selectors in neural policies. Pure margin selection carries over directly, because it needs only forward passes. Influence needs two estimates: a \emph{counterfactual energy}, which
measures spurious reliance on a fixed probe of ties, and a damped
estimate of the curvature of the DPO loss on the strict comparisons. In practice, the DPO policy $\theta_0$ is the final checkpoint of DPO training, which satisfies the stationarity condition of Section~\ref{sec:influence} only approximately.
\vspace{-0.5em}
\subsection{Influence in Practice} \label{subsec:influence_practice}
\textbf{Counterfactual energy.} A neural policy exposes no spurious features, so we cannot compute the energy of Section~\ref{sec:influence} directly. We can still approximate it, because a tie isolates the spurious part of
the margin: its two responses differ only in spurious features, so its
margin is purely spurious. We therefore form a fixed probe $\mathcal P$ of $N_v$ ties, held out from selection, and define the counterfactual energy
\begin{equation}
\label{eq:counterfactual-energy}
E_{\rm cf}(\theta)=\frac{1}{N_v}\sum_{n=1}^{N_v}m_n(\theta)^2 ,
\end{equation}
where $m_n$ is the margin on probe tie $n$. Squaring removes the dependence on its orientation.

In the log-linear model, the counterfactual energy recovers the energy gradient up to a positive scale when the probe contrasts have the covariance assumed in Section~\ref{sec:margin} (see Appendix~\ref{app:probe-proofs}): $\mathbb E_{\mathcal P}[\nabla E_{\rm cf}(\theta_0)]=c\beta^2\nabla E(\theta_0).$ The scale leaves the ranking of ties unchanged, so the counterfactual energy supplies the gradient that influence needs.

\textbf{Curvature.} The remaining ingredient is the curvature. In a neural policy, influence keeps the form of Proposition~\ref{prop:single-tie-influence} but runs over all parameters: the full inverse curvature accounts for how every parameter responds to a tie, the role that $K_s$ played for the spurious weights. We estimate the curvature with a damped generalized Gauss--Newton (GGN) matrix $\widehat H$~\citep{schraudolph2002fast,martens2020natural}, computed on a subset of the strict comparisons in $\mathcal D_\mathrm{pref}$, and score each tie by
$$
\widehat A_j=\frac1N\big\langle\nabla E_{\rm cf}(\theta_0),
\widehat H^{-1}g_j(\theta_0)\big\rangle .
$$
Every estimated influence shares the same energy gradient, so we apply the
inverse curvature to it once, $v=\widehat H^{-1}\nabla E_{\rm cf}(\theta_0)$,
and reuse the result across the pool. Estimating the influence of a tie then takes only its gradient, $\widehat A_j=v^\top g_j(\theta_0)/N$. These approximations make influence practical but do not preserve the log-linear guarantee. We provide more details in Appendix~\ref{app:scoring-details}.
\vspace{-0.5em}
\subsection{Margin in Practice}
\label{subsec:margin_practice}
Pure margin selection ranks ties by the absolute margin $|m_j|$, which forward passes of the DPO policy and the reference deliver for every tie in the pool. Because margins are all it uses, the selector skips everything that influence requires: the probe, the curvature, and the tie gradients. It never even computes the energy, yet under the assumptions of Section~\ref{sec:margin} its ranking maximizes the expected first-order reduction of that energy.

\textbf{Two selectors, two costs.}
The two selectors thus trade cost for information. Influence uses tie gradients, the curvature, and a probe to approximate the first-order optimum. Pure margin selection uses only margins and is optimal in expectation under a Gaussian model of the contrasts. Section~\ref{sec:experiments} measures what each buys over uniform selection: whether the first-order ranking helps at the tie weights used in training, and whether pure margin selection loses anything relative to influence.

\section{Experiments}
\label{sec:experiments}
We test three theoretical predictions: (i) influence predicts the effect of an isolated tie (Proposition~\ref{prop:single-tie-influence}), (ii) mean influence conditioned on the tie margin follows the predicted curve (Theorem~\ref{thm:pm-optimality}), and (iii) the highest-scoring ties reduce spurious reliance faster than uniformly sampled ties (Theorem~\ref{thm:influence-selection}).
We begin with log-linear models and then ask whether the selection gains persist in two settings closer to practice: neural networks and language models.
In every regime, spurious features correlate with preferences under the training distribution $P$, and the adversarial deployment distribution $Q_{\rm adv}$ reverses these correlations. We score candidate ties using a checkpoint trained on strict comparisons alone. For each selected set, we train a new model on the strict comparisons and selected ties, starting from the same initialization as the strict checkpoint. Appendix~\ref{app:experimental-protocol} gives the full protocol and compares this procedure with continued training from the strict checkpoint. The code is available at \url{https://github.com/purav-0000/selective-tie-learning}.
\vspace{-0.5em}
\subsection{Log-Linear: Quantitative Validation}
\textbf{Setup.}
We generate Gaussian features $\phi=[\phi_c;\phi_s]$ with $d_c=d_s=10$.
The operator $T$ keeps the strict winner's core features $\phi_c$ and replaces its spurious features $\phi_s$ with those of a response drawn uniformly from $\mathcal D_{\rm pref}$.
We compare uniform tie learning (UTL), pure margin selection (PM), and Oracle influence, which computes $A_j$ in Proposition~\ref{prop:single-tie-influence} exactly from the population spurious-energy gradient and the empirical DPO Hessian.
Appendix~\ref{app:log-linear-experiment} gives the full details.

\textbf{Validating the theory.} Panel~(a) tests prediction~(i): for each tie, we add its symmetric packet with weight $1/N$ ($\epsilon=1$ in Proposition~\ref{prop:single-tie-influence}), re-optimize the model, and compare the observed reduction $E(\theta_0)-E(\theta_j^1)$ with $A_j$.
Panel~(b) tests prediction~(ii): we repeat this with each tie in its fixed training orientation, group ties into ten equal-frequency bins by absolute margin, and compare each bin's mean observed reduction with the conditional mean in Theorem~\ref{thm:pm-optimality}.

\textbf{Results.}
Figure~\ref{fig:linear-experiments}~(a)--(b) supports predictions~(i) and~(ii).
Across all isolated ties, predicted influence has $4.2\%$ relative $\ell_2$ error and $99.8\%$ sign agreement with the observed reduction in spurious energy, including every tie predicted to increase it.
In panel~(b), the theoretical conditional mean lies within the 95\% confidence interval of every margin bin.

Panels~(c)--(d) show that the first-order ranking in prediction~(iii) remains useful at full tie weight.
PM and Oracle influence raise accuracy under $Q_{\rm adv}$ and lower spurious energy faster than UTL: with 100 ties, PM reaches the accuracy that UTL reaches with about 300.
Their advantage narrows as the selected set approaches the full pool, where all methods coincide.
PM closely tracks Oracle influence even though it uses only forward passes.

\begin{figure}[t]
\centering
\begin{tikzpicture}
\begin{groupplot}[
    group style={group size=4 by 1, horizontal sep=7.7mm},
    stl axis,
    scale only axis,
    width=0.194\linewidth,
    height=0.185\linewidth,
]

\nextgroupplot[
    title={\textbf{(a)} Single-tie effect\\Observed $(10^{-2})$},
    xmin=-3, xmax=65,
    ymin=-3, ymax=65,
    xtick={0,20,40,60},
    ytick={0,20,40,60},
    xlabel={Predicted $(10^{-2})$},
]
\addplot[stl identity, domain=-3:65, samples=2] {x};
\addplot[
    only marks,
    color=stlorange,
    mark=*,
    mark size=0.65pt,
    opacity=0.22,
] table[
    x expr=100*\thisrow{predicted_reduction},
    y expr=100*\thisrow{observed_reduction},
    col sep=comma,
] {figures/data/linear/theory/influence.csv};

\nextgroupplot[
    title={\textbf{(b)} Binned effect\\Mean reduction $(10^{-2})$},
    xmin=0, xmax=4.5,
    ymin=0, ymax=36,
    xtick={0,2,4},
    ytick={0,10,20,30},
    xlabel={Absolute margin $|m_j|$},
    legend pos=north west,
    legend style={font=\fontsize{8}{9}\selectfont, fill=white, fill opacity=0.9, text opacity=1},
]
\addplot[
    name path=lin-oracle-low,
    forget plot, draw=none,
    restrict expr to domain={\coordindex}{0:9},
] table[x=margin_mean, y expr=100*\thisrow{observed_lower}, col sep=comma]
    {figures/data/linear/theory/expected_influence.csv};
\addplot[
    name path=lin-oracle-high,
    forget plot, draw=none,
    restrict expr to domain={\coordindex}{0:9},
] table[x=margin_mean, y expr=100*\thisrow{observed_upper}, col sep=comma]
    {figures/data/linear/theory/expected_influence.csv};
\addplot[stlorange, fill opacity=0.13, forget plot, draw=none]
    fill between[of=lin-oracle-low and lin-oracle-high];
\addplot[
    name path=lin-gaussian-low,
    forget plot, draw=none,
    restrict expr to domain={\coordindex}{0:9},
] table[x=margin_mean, y expr=100*\thisrow{gaussian_lower}, col sep=comma]
    {figures/data/linear/theory/expected_influence.csv};
\addplot[
    name path=lin-gaussian-high,
    forget plot, draw=none,
    restrict expr to domain={\coordindex}{0:9},
] table[x=margin_mean, y expr=100*\thisrow{gaussian_upper}, col sep=comma]
    {figures/data/linear/theory/expected_influence.csv};
\addplot[stlgray, fill opacity=0.09, forget plot, draw=none]
    fill between[of=lin-gaussian-low and lin-gaussian-high];
\addplot[
    stl influence,
    only marks,
    restrict expr to domain={\coordindex}{0:9},
] table[x=margin_mean, y expr=100*\thisrow{observed_mean}, col sep=comma]
    {figures/data/linear/theory/expected_influence.csv};
\addlegendentry{Empirical}
\addplot[
    stl identity,
    color=stlgray,
    restrict expr to domain={\coordindex}{0:9},
] table[x=margin_mean, y expr=100*\thisrow{gaussian_mean}, col sep=comma]
    {figures/data/linear/theory/expected_influence.csv};
\addlegendentry{Theory}

\nextgroupplot[
    title={\textbf{(c)} Accuracy\\Under $Q_{\rm adv}$ (\%)},
    xmin=15, xmax=510,
    ymin=45, ymax=82,
    xtick={100,300,500},
    minor xtick={25,50,75,150,200,250,350,400,450},
    ytick={50,60,70,80},
    xlabel={Tie budget $k$},
    legend to name=linear-budget-legend,
    legend columns=4,
    legend style={column sep=5pt},
]
\addplot[name path=lin-acc-utl-low, forget plot, draw=none,
    restrict expr to domain={\coordindex}{0:11}]
    table[x=budget,y=lower,col sep=comma]{figures/data/linear/fresh/q_adv.csv};
\addplot[name path=lin-acc-utl-high, forget plot, draw=none,
    restrict expr to domain={\coordindex}{0:11}]
    table[x=budget,y=upper,col sep=comma]{figures/data/linear/fresh/q_adv.csv};
\addplot[black,fill opacity=0.055,forget plot,draw=none]
    fill between[of=lin-acc-utl-low and lin-acc-utl-high];
\addplot[name path=lin-acc-pm-low, forget plot, draw=none,
    restrict expr to domain={\coordindex}{12:23}]
    table[x=budget,y=lower,col sep=comma]{figures/data/linear/fresh/q_adv.csv};
\addplot[name path=lin-acc-pm-high, forget plot, draw=none,
    restrict expr to domain={\coordindex}{12:23}]
    table[x=budget,y=upper,col sep=comma]{figures/data/linear/fresh/q_adv.csv};
\addplot[stlblue,fill opacity=0.11,forget plot, draw=none]
    fill between[of=lin-acc-pm-low and lin-acc-pm-high];
\addplot[name path=lin-acc-oracle-low, forget plot, draw=none,
    restrict expr to domain={\coordindex}{24:35}]
    table[x=budget,y=lower,col sep=comma]{figures/data/linear/fresh/q_adv.csv};
\addplot[name path=lin-acc-oracle-high, forget plot, draw=none,
    restrict expr to domain={\coordindex}{24:35}]
    table[x=budget,y=upper,col sep=comma]{figures/data/linear/fresh/q_adv.csv};
\addplot[stlorange,fill opacity=0.11,forget plot, draw=none]
    fill between[of=lin-acc-oracle-low and lin-acc-oracle-high];
\addplot[stl utl, restrict expr to domain={\coordindex}{0:11}]
    table[x=budget,y=mean,col sep=comma]{figures/data/linear/fresh/q_adv.csv};
\addlegendentry{UTL}
\addplot[stl pm, restrict expr to domain={\coordindex}{12:23}]
    table[x=budget,y=mean,col sep=comma]{figures/data/linear/fresh/q_adv.csv};
\addlegendentry{PM}
\addplot[stl influence, restrict expr to domain={\coordindex}{24:35}]
    table[x=budget,y=mean,col sep=comma]{figures/data/linear/fresh/q_adv.csv};
\addlegendentry{Oracle influence}
\addplot[stl strict, restrict expr to domain={\coordindex}{36:47}]
    table[x=budget,y=mean,col sep=comma]{figures/data/linear/fresh/q_adv.csv};
\addlegendentry{Strict only}

\nextgroupplot[
    title={\textbf{(d)} Spurious energy\\Lower is better},
    xmin=15, xmax=510,
    ymin=0, ymax=7,
    xtick={100,300,500},
    minor xtick={25,50,75,150,200,250,350,400,450},
    ytick={0,2,4,6},
    xlabel={Tie budget $k$},
]
\addplot[name path=lin-energy-utl-low, forget plot, draw=none,
    restrict expr to domain={\coordindex}{0:11}]
    table[x=budget,y=lower,col sep=comma]{figures/data/linear/fresh/spurious_energy.csv};
\addplot[name path=lin-energy-utl-high, forget plot, draw=none,
    restrict expr to domain={\coordindex}{0:11}]
    table[x=budget,y=upper,col sep=comma]{figures/data/linear/fresh/spurious_energy.csv};
\addplot[black,fill opacity=0.055,forget plot,draw=none]
    fill between[of=lin-energy-utl-low and lin-energy-utl-high];
\addplot[name path=lin-energy-pm-low, forget plot, draw=none,
    restrict expr to domain={\coordindex}{12:23}]
    table[x=budget,y=lower,col sep=comma]{figures/data/linear/fresh/spurious_energy.csv};
\addplot[name path=lin-energy-pm-high, forget plot, draw=none,
    restrict expr to domain={\coordindex}{12:23}]
    table[x=budget,y=upper,col sep=comma]{figures/data/linear/fresh/spurious_energy.csv};
\addplot[stlblue,fill opacity=0.11,forget plot, draw=none]
    fill between[of=lin-energy-pm-low and lin-energy-pm-high];
\addplot[name path=lin-energy-oracle-low, forget plot, draw=none,
    restrict expr to domain={\coordindex}{24:35}]
    table[x=budget,y=lower,col sep=comma]{figures/data/linear/fresh/spurious_energy.csv};
\addplot[name path=lin-energy-oracle-high, forget plot, draw=none,
    restrict expr to domain={\coordindex}{24:35}]
    table[x=budget,y=upper,col sep=comma]{figures/data/linear/fresh/spurious_energy.csv};
\addplot[stlorange,fill opacity=0.11,forget plot, draw=none]
    fill between[of=lin-energy-oracle-low and lin-energy-oracle-high];
\addplot[stl utl, restrict expr to domain={\coordindex}{0:11}]
    table[x=budget,y=mean,col sep=comma]{figures/data/linear/fresh/spurious_energy.csv};
\addplot[stl pm, restrict expr to domain={\coordindex}{12:23}]
    table[x=budget,y=mean,col sep=comma]{figures/data/linear/fresh/spurious_energy.csv};
\addplot[stl influence, restrict expr to domain={\coordindex}{24:35}]
    table[x=budget,y=mean,col sep=comma]{figures/data/linear/fresh/spurious_energy.csv};
\addplot[stl strict, restrict expr to domain={\coordindex}{36:47}]
    table[x=budget,y=mean,col sep=comma]{figures/data/linear/fresh/spurious_energy.csv};

\end{groupplot}
\path (group c1r1.south west) -- (group c4r1.south east)
    node[midway, below=10mm] {\pgfplotslegendfromname{linear-budget-legend}};
\end{tikzpicture}
\caption{\textbf{Log-linear validation.}
\textbf{(a)} Predicted versus observed reductions in population spurious energy
after perturbing the objective with one symmetric tie.
\textbf{(b)} Observed reductions from ties with sampled orientations and the theoretical
conditional mean across margin bins containing equal numbers of ties
(Theorem~\ref{thm:pm-optimality}).
\textbf{(c--d)} Accuracy under $Q_{\rm adv}$ and population spurious energy after
training on selected ties. Curves are means; shading shows 95\% Student-$t$
confidence intervals.}
\label{fig:linear-experiments}
\end{figure}

\subsection{Neural Networks: Qualitative Persistence}

The log-linear theory no longer applies once a nonlinear representation hides the core--spurious decomposition.
We next ask whether its selection gains persist in a neural reward model.

\textbf{Setup.}
We keep the log-linear data-generating process and tie construction, pass the features through a fixed random nonlinear mixer, and train an MLP reward model with DPO on the mixed features.
We apply $T$ before mixing the features, so each tie still preserves core content exactly.
We compare UTL and PM with Practical influence, which replaces the population spurious-energy gradient with the gradient of the counterfactual energy $E_{\rm cf}$ and the Hessian with a damped GGN (Section~\ref{sec:nonlinear}).
Appendix~\ref{app:nonlinear-experiment} gives the full details.

\textbf{Results.}
PM and Practical influence improve accuracy under $Q_{\rm adv}$ over UTL at every budget (Figure~\ref{fig:neural-experiments}~(a)).
At 500 ties, both gain about $7.5$ percentage points over UTL and exceed the accuracy that UTL reaches with $1{,}500$ ties.
Their advantage first grows and then narrows as the selected set approaches the full pool.
As in the log-linear model, PM closely tracks influence.
\vspace{-0.5em}
\subsection{LLMs: Controlled Preference Tasks}
Language models introduce a pretrained policy and text responses.
We test selection on two tasks whose spurious attributes we can control, so that the shifts and ties can be constructed exactly.

\textbf{Setup.}
We fine-tune LoRA adapters of Llama-3.2-1B-Instruct with DPO, using the base model as the reference.
In the hotel task, the model recommends one of two generated hotel profiles.
Utility depends on price, distance, star rating, and a fixed noise term for each hotel, while seven other fields, such as building age and chain tier, correlate with utility positively in $P$ and negatively in $Q_{\rm adv}$.
The operator $T$ replaces these seven fields in the winner's profile with those of a hotel drawn from $\mathcal D_{\rm pref}$.
In the emoji task, the safe response of a PKU-SafeRLHF
pair~\citep{ji2025pkusaferlhf} is always preferred, and we insert different
numbers of one emoji into its two responses.
The safe response usually receives more emojis in $P$ and fewer in $Q_{\rm adv}$.
Here, $T$ removes every emoji from the safe winner.
We compare UTL, PM, and Practical influence.
Appendix~\ref{app:llm-experiments} gives the full details.

\textbf{Results.}
PM and Practical influence improve mean accuracy under $Q_{\rm adv}$ over UTL at every partial budget on both tasks (Figure~\ref{fig:neural-experiments}~(b)--(c)).
With 500 ties, PM reaches $92.1\%$ accuracy on hotels and $68.4\%$ on emojis, compared with $74.6\%$ and $59.3\%$ for UTL.
PM needs only 300 ties on hotels and 500 on emojis to exceed the accuracy that UTL reaches with 750.
The advantage again narrows as the selected set approaches the full pool.
The two scores track each other on hotels, whereas PM stays above Practical influence on emojis.

\begin{figure}[t]
\centering
\begin{tikzpicture}
\begin{groupplot}[
    group style={group size=3 by 1, horizontal sep=9mm},
    stl axis,
    scale only axis,
    width=0.255\linewidth,
    height=0.17\linewidth,
]

\nextgroupplot[
    title={\textbf{(a)} Neural network},
    xmin=60, xmax=2040,
    ymin=38, ymax=76,
    xtick={100,500,1000,1500,2000},
    minor xtick={200,300,400,750},
    ytick={40,50,60,70},
    xlabel={Tie budget $k$},
    ylabel={$Q_{\rm adv}$ accuracy (\%)},
    legend to name=neural-budget-legend,
    legend columns=4,
    legend style={column sep=5pt},
]
\addplot[name path=nn-utl-low,forget plot,draw=none,
    restrict expr to domain={\coordindex}{0:8}]
    table[x=budget,y=lower,col sep=comma]{figures/data/nonlinear/fresh/q_adv.csv};
\addplot[name path=nn-utl-high,forget plot,draw=none,
    restrict expr to domain={\coordindex}{0:8}]
    table[x=budget,y=upper,col sep=comma]{figures/data/nonlinear/fresh/q_adv.csv};
\addplot[black,fill opacity=0.055,forget plot,draw=none]
    fill between[of=nn-utl-low and nn-utl-high];
\addplot[name path=nn-pm-low,forget plot,draw=none,
    restrict expr to domain={\coordindex}{9:17}]
    table[x=budget,y=lower,col sep=comma]{figures/data/nonlinear/fresh/q_adv.csv};
\addplot[name path=nn-pm-high,forget plot,draw=none,
    restrict expr to domain={\coordindex}{9:17}]
    table[x=budget,y=upper,col sep=comma]{figures/data/nonlinear/fresh/q_adv.csv};
\addplot[stlblue,fill opacity=0.11,forget plot,draw=none]
    fill between[of=nn-pm-low and nn-pm-high];
\addplot[name path=nn-inf-low,forget plot,draw=none,
    restrict expr to domain={\coordindex}{18:26}]
    table[x=budget,y=lower,col sep=comma]{figures/data/nonlinear/fresh/q_adv.csv};
\addplot[name path=nn-inf-high,forget plot,draw=none,
    restrict expr to domain={\coordindex}{18:26}]
    table[x=budget,y=upper,col sep=comma]{figures/data/nonlinear/fresh/q_adv.csv};
\addplot[stlorange,fill opacity=0.11,forget plot,draw=none]
    fill between[of=nn-inf-low and nn-inf-high];
\addplot[stl utl,restrict expr to domain={\coordindex}{0:8}]
    table[x=budget,y=mean,col sep=comma]{figures/data/nonlinear/fresh/q_adv.csv};
\addlegendentry{UTL}
\addplot[stl pm,restrict expr to domain={\coordindex}{9:17}]
    table[x=budget,y=mean,col sep=comma]{figures/data/nonlinear/fresh/q_adv.csv};
\addlegendentry{PM}
\addplot[stl influence,restrict expr to domain={\coordindex}{18:26}]
    table[x=budget,y=mean,col sep=comma]{figures/data/nonlinear/fresh/q_adv.csv};
\addlegendentry{Practical influence}
\addplot[stl strict,restrict expr to domain={\coordindex}{27:35}]
    table[x=budget,y=mean,col sep=comma]{figures/data/nonlinear/fresh/q_adv.csv};
\addlegendentry{Strict only}

\nextgroupplot[
    title={\textbf{(b)} Hotels},
    xmin=60, xmax=2040,
    ymin=25, ymax=100,
    xtick={100,500,1000,1500,2000},
    minor xtick={200,300,400,750},
    ytick={40,60,80,100},
    xlabel={Tie budget $k$},
]
\addplot[name path=hotel-utl-low,forget plot,draw=none,
    restrict expr to domain={\coordindex}{0:8}]
    table[x=budget,y=lower,col sep=comma]{figures/data/llm/fresh/hotels_q_adv.csv};
\addplot[name path=hotel-utl-high,forget plot,draw=none,
    restrict expr to domain={\coordindex}{0:8}]
    table[x=budget,y=upper,col sep=comma]{figures/data/llm/fresh/hotels_q_adv.csv};
\addplot[black,fill opacity=0.055,forget plot,draw=none]
    fill between[of=hotel-utl-low and hotel-utl-high];
\addplot[name path=hotel-pm-low,forget plot,draw=none,
    restrict expr to domain={\coordindex}{9:17}]
    table[x=budget,y=lower,col sep=comma]{figures/data/llm/fresh/hotels_q_adv.csv};
\addplot[name path=hotel-pm-high,forget plot,draw=none,
    restrict expr to domain={\coordindex}{9:17}]
    table[x=budget,y=upper,col sep=comma]{figures/data/llm/fresh/hotels_q_adv.csv};
\addplot[stlblue,fill opacity=0.11,forget plot,draw=none]
    fill between[of=hotel-pm-low and hotel-pm-high];
\addplot[name path=hotel-inf-low,forget plot,draw=none,
    restrict expr to domain={\coordindex}{18:26}]
    table[x=budget,y=lower,col sep=comma]{figures/data/llm/fresh/hotels_q_adv.csv};
\addplot[name path=hotel-inf-high,forget plot,draw=none,
    restrict expr to domain={\coordindex}{18:26}]
    table[x=budget,y=upper,col sep=comma]{figures/data/llm/fresh/hotels_q_adv.csv};
\addplot[stlorange,fill opacity=0.11,forget plot,draw=none]
    fill between[of=hotel-inf-low and hotel-inf-high];
\addplot[stl utl,restrict expr to domain={\coordindex}{0:8}]
    table[x=budget,y=mean,col sep=comma]{figures/data/llm/fresh/hotels_q_adv.csv};
\addplot[stl pm,restrict expr to domain={\coordindex}{9:17}]
    table[x=budget,y=mean,col sep=comma]{figures/data/llm/fresh/hotels_q_adv.csv};
\addplot[stl influence,restrict expr to domain={\coordindex}{18:26}]
    table[x=budget,y=mean,col sep=comma]{figures/data/llm/fresh/hotels_q_adv.csv};
\addplot[stl strict,restrict expr to domain={\coordindex}{27:35}]
    table[x=budget,y=mean,col sep=comma]{figures/data/llm/fresh/hotels_q_adv.csv};

\nextgroupplot[
    title={\textbf{(c)} Emojis},
    xmin=60, xmax=2040,
    ymin=40, ymax=82,
    xtick={100,500,1000,1500,2000},
    minor xtick={200,300,400,750},
    ytick={40,50,60,70,80},
    xlabel={Tie budget $k$},
]
\addplot[name path=emoji-utl-low,forget plot,draw=none,
    restrict expr to domain={\coordindex}{0:8}]
    table[x=budget,y=lower,col sep=comma]{figures/data/llm/fresh/emoji_q_adv.csv};
\addplot[name path=emoji-utl-high,forget plot,draw=none,
    restrict expr to domain={\coordindex}{0:8}]
    table[x=budget,y=upper,col sep=comma]{figures/data/llm/fresh/emoji_q_adv.csv};
\addplot[black,fill opacity=0.055,forget plot,draw=none]
    fill between[of=emoji-utl-low and emoji-utl-high];
\addplot[name path=emoji-pm-low,forget plot,draw=none,
    restrict expr to domain={\coordindex}{9:17}]
    table[x=budget,y=lower,col sep=comma]{figures/data/llm/fresh/emoji_q_adv.csv};
\addplot[name path=emoji-pm-high,forget plot,draw=none,
    restrict expr to domain={\coordindex}{9:17}]
    table[x=budget,y=upper,col sep=comma]{figures/data/llm/fresh/emoji_q_adv.csv};
\addplot[stlblue,fill opacity=0.11,forget plot,draw=none]
    fill between[of=emoji-pm-low and emoji-pm-high];
\addplot[name path=emoji-inf-low,forget plot,draw=none,
    restrict expr to domain={\coordindex}{18:26}]
    table[x=budget,y=lower,col sep=comma]{figures/data/llm/fresh/emoji_q_adv.csv};
\addplot[name path=emoji-inf-high,forget plot,draw=none,
    restrict expr to domain={\coordindex}{18:26}]
    table[x=budget,y=upper,col sep=comma]{figures/data/llm/fresh/emoji_q_adv.csv};
\addplot[stlorange,fill opacity=0.11,forget plot,draw=none]
    fill between[of=emoji-inf-low and emoji-inf-high];
\addplot[stl utl,restrict expr to domain={\coordindex}{0:8}]
    table[x=budget,y=mean,col sep=comma]{figures/data/llm/fresh/emoji_q_adv.csv};
\addplot[stl pm,restrict expr to domain={\coordindex}{9:17}]
    table[x=budget,y=mean,col sep=comma]{figures/data/llm/fresh/emoji_q_adv.csv};
\addplot[stl influence,restrict expr to domain={\coordindex}{18:26}]
    table[x=budget,y=mean,col sep=comma]{figures/data/llm/fresh/emoji_q_adv.csv};
\addplot[stl strict,restrict expr to domain={\coordindex}{27:35}]
    table[x=budget,y=mean,col sep=comma]{figures/data/llm/fresh/emoji_q_adv.csv};

\end{groupplot}
\path (group c1r1.south west) -- (group c3r1.south east)
    node[midway, below=10mm] {\pgfplotslegendfromname{neural-budget-legend}};
\end{tikzpicture}
\caption{\textbf{Selection beyond the log-linear setting.}
$Q_{\rm adv}$ accuracy for a nonlinear reward model \textbf{(a)} and
LoRA-DPO on hotel \textbf{(b)} and emoji \textbf{(c)} tasks. Practical
influence uses a sampled GGN and solves the damped system in its row space for
all three neural regimes. Curves are means across seeds; shading shows 95\%
Student-$t$ confidence intervals.}
\label{fig:neural-experiments}
\end{figure}

\section{Related Work}
\label{sec:related-work}
\textbf{Spurious correlations in preference learning.}
Preference learning can reward attributes that predict labels without tracking response quality.  Reward models often favor longer answers even when they are not more helpful~\citep{DBLP:journals/corr/abs-2310-03716}, and sycophancy persists in models trained with human feedback~\citep{DBLP:conf/iclr/SharmaTKDABDHJK24}.  Studies of reward-model exploitation and overoptimization reveal related shortcuts, including verbosity~\citep{DBLP:conf/nips/DuboisLTZGBGLH23,DBLP:journals/tmlr/CasperDSGSRFKLF23,DBLP:conf/icml/GaoSH23}. These works document concrete instances of spurious learning; our analysis characterizes how individual ties change reliance on spurious features.

\textbf{Mitigating spurious preferences.}
Existing methods intervene through the objective, reward model, internal representations, or training data.  Some penalize or normalize length in the preference objective~\citep{DBLP:conf/acl/ParkREF24,DBLP:conf/emnlp/Lu0AZ0YS24}, separate length from quality in reward models~\citep{DBLP:conf/icml/Chen0CS0GHSC24}, or steer representations at inference~\citep{DBLP:conf/acl/RimskyGSTHT24}.  RRM~\citep{DBLP:conf/iclr/000200CWJGS0YSM25} and counterfactual augmentation for length bias~\citep{DBLP:journals/corr/abs-2511-12573} are closest to our setting: they construct neutral or tie-like comparisons and train on many of them.  STL instead selects from a fixed pool of ties under a training budget.

\textbf{Theory and selection.}
Linear and log-linear models explain how the geometry of the training distribution produces unstable
predictors~\citep{DBLP:conf/iclr/NagarajanAN21,DBLP:conf/icml/BombariM25,DBLP:conf/nips/VeitchDYE21,arjovsky2019invariant}. For DPO, \citet{moya2026spurious} trace population-level reliance to mean spurious bias and core--spurious correlation, and show that tie training suppresses it in aggregate.  Active preference methods also select comparisons, but usually to reduce annotation cost or uncertainty, improve exploration, or filter difficult strict examples~\citep{DBLP:conf/icml/MuldrewHZB24,lin2026activedpo,DBLP:conf/icml/GaoL0XZ025,DBLP:conf/pkdd/DasCPC25,DBLP:conf/nips/MeloTAG24}. Patterning uses susceptibilities to reweight preference pairs toward desired changes in reward-model behavior~\citep{wang2026patterning}. STL shares this use of first-order sensitivity to guide data interventions, selecting ties by their influence on spurious reliance.

\section{Conclusion}
\label{sec:conclusion}
We introduced Selective Tie Learning, a principled method for selecting a small subset of ties. For log-linear models, we derived the first-order influence of each tie on spurious reliance and showed that selecting the largest predicted reductions is first-order optimal. We also developed pure margin selection, an inexpensive rule that needs only forward passes, and showed that under a Gaussian model of the ties, it maximizes the expected reduction in spurious reliance among such rules. For neural models, we estimated tie influence without an explicit core--spurious decomposition. Across log-linear models, neural networks, and language models, selected ties yielded higher accuracy than uniformly sampled ties when spurious correlations were reversed, while using fewer ties.

Our study assumes an existing pool of ties. It does not address how to construct reliable ties or how to choose the budget $k$, although both decisions affect practical performance. Our experiments use fully or partially controlled settings so that we can measure spurious reliance and distribution shift directly. Extending STL to natural preference data will therefore require transformations that preserve utility, a principled choice of $k$, and scoring that scales to large pools. Integrating these components into a complete training pipeline remains open. Appendix~\ref{app:limitations} discusses these limitations in more detail.

\bibliographystyle{iclr2027_conference}
\bibliography{references}

@inproceedings{ouyang2022training,
  title={Training language models to follow instructions with human feedback},
  author={Ouyang, Long and Wu, Jeffrey and Jiang, Xu and Almeida, Diogo and Wainwright, Carroll and Mishkin, Pamela and Zhang, Chong and Agarwal, Sandhini and Slama, Katarina and Ray, Alex and others},
  booktitle={Advances in Neural Information Processing Systems 35},
  pages={27730--27744},
  year={2022},
}

@article{bai2022training,
  title={Training a helpful and harmless assistant with reinforcement learning from human feedback},
  author={Bai, Yuntao and Jones, Andy and Ndousse, Kamal and Askell, Amanda and Chen, Anna and DasSarma, Nova and Drain, Dawn and Fort, Stanislav and Ganguli, Deep and Henighan, Tom and others},
  journal={arXiv preprint arXiv:2204.05862},
  year={2022}
}

@article{DBLP:journals/corr/abs-2310-03716,
  author       = {Prasann Singhal and
                  Tanya Goyal and
                  Jiacheng Xu and
                  Greg Durrett},
  title        = {A Long Way to Go: Investigating Length Correlations in {RLHF}},
  journal      = {CoRR},
  volume       = {abs/2310.03716},
  year         = {2023},
  eprinttype    = {arXiv},
  eprint       = {2310.03716},
}

@article{saito2023verbosity,
  title={Verbosity bias in preference labeling by large language models},
  author={Saito, Keita and Wachi, Akifumi and Wataoka, Koki and Akimoto, Youhei},
  journal={arXiv preprint arXiv:2310.10076},
  year={2023}
}

@article{arjovsky2019invariant,
  title={Invariant risk minimization},
  author={Arjovsky, Martin and Bottou, L{\'e}on and Gulrajani, Ishaan and Lopez-Paz, David},
  journal={arXiv preprint arXiv:1907.02893},
  year={2019}
}

@inproceedings{DBLP:conf/acl/ParkREF24,
  author       = {Ryan Park and
                  Rafael Rafailov and
                  Stefano Ermon and
                  Chelsea Finn},
  editor       = {Lun{-}Wei Ku and
                  Andre Martins and
                  Vivek Srikumar},
  title        = {Disentangling Length from Quality in Direct Preference Optimization},
  booktitle    = {Findings of the Association for Computational Linguistics, {ACL} 2024,
                  Bangkok, Thailand and virtual meeting, August 11-16, 2024},
  series       = {Findings of {ACL}},
  volume       = {{ACL} 2024},
  pages        = {4998--5017},
  publisher    = {Association for Computational Linguistics},
  year         = {2024},
}

@article{DBLP:journals/corr/abs-2511-12573,
  author       = {Hyeonji Kim and
                  Sujeong Oh and
                  Sanghack Lee},
  title        = {Mitigating Length Bias in {RLHF} through a Causal Lens},
  journal      = {CoRR},
  volume       = {abs/2511.12573},
  year         = {2025},
  eprinttype    = {arXiv},
  eprint       = {2511.12573},
}

@inproceedings{DBLP:conf/icml/Chen0CS0GHSC24,
  author       = {Lichang Chen and
                  Chen Zhu and
                  Jiuhai Chen and
                  Davit Soselia and
                  Tianyi Zhou and
                  Tom Goldstein and
                  Heng Huang and
                  Mohammad Shoeybi and
                  Bryan Catanzaro},
  editor       = {Ruslan Salakhutdinov and
                  Zico Kolter and
                  Katherine A. Heller and
                  Adrian Weller and
                  Nuria Oliver and
                  Jonathan Scarlett and
                  Felix Berkenkamp},
  title        = {{ODIN:} Disentangled Reward Mitigates Hacking in {RLHF}},
  booktitle    = {Forty-first International Conference on Machine Learning, {ICML} 2024,
                  Vienna, Austria, July 21-27, 2024},
  series       = {Proceedings of Machine Learning Research},
  volume       = {235},
  pages        = {7935--7952},
  publisher    = {{PMLR} / OpenReview.net},
  year         = {2024},
}

@inproceedings{DBLP:conf/iclr/000200CWJGS0YSM25,
  author       = {Tianqi Liu and
                  Wei Xiong and
                  Jie Ren and
                  Lichang Chen and
                  Junru Wu and
                  Rishabh Joshi and
                  Yang Gao and
                  Jiaming Shen and
                  Zhen Qin and
                  Tianhe Yu and
                  Daniel Sohn and
                  Anastasia Makarova and
                  Jeremiah Zhe Liu and
                  Yuan Liu and
                  Bilal Piot and
                  Abe Ittycheriah and
                  Aviral Kumar and
                  Mohammad Saleh},
  title        = {{RRM:} Robust Reward Model Training Mitigates Reward Hacking},
  booktitle    = {The Thirteenth International Conference on Learning Representations,
                  {ICLR} 2025, Singapore, April 24-28, 2025},
  publisher    = {OpenReview.net},
  year         = {2025},
}

@inproceedings{DBLP:conf/emnlp/Lu0AZ0YS24,
  author       = {Junru Lu and
                  Jiazheng Li and
                  Siyu An and
                  Meng Zhao and
                  Yulan He and
                  Di Yin and
                  Xing Sun},
  editor       = {Yaser Al{-}Onaizan and
                  Mohit Bansal and
                  Yun{-}Nung Chen},
  title        = {Eliminating Biased Length Reliance of Direct Preference Optimization
                  via Down-Sampled {KL} Divergence},
  booktitle    = {Proceedings of the 2024 Conference on Empirical Methods in Natural
                  Language Processing, {EMNLP} 2024, Miami, FL, USA, November 12-16,
                  2024},
  pages        = {1047--1067},
  publisher    = {Association for Computational Linguistics},
  year         = {2024},
}

@inproceedings{DBLP:conf/acl/RimskyGSTHT24,
  author       = {Nina Rimsky and
                  Nick Gabrieli and
                  Julian Schulz and
                  Meg Tong and
                  Evan Hubinger and
                  Alexander Matt Turner},
  editor       = {Lun{-}Wei Ku and
                  Andre Martins and
                  Vivek Srikumar},
  title        = {Steering Llama 2 via Contrastive Activation Addition},
  booktitle    = {Proceedings of the 62nd Annual Meeting of the Association for Computational
                  Linguistics (Volume 1: Long Papers), {ACL} 2024, Bangkok, Thailand,
                  August 11-16, 2024},
  pages        = {15504--15522},
  publisher    = {Association for Computational Linguistics},
  year         = {2024},
}

@inproceedings{
lin2026activedpo,
title={Active{DPO}: Active Direct Preference Optimization for Sample-Efficient Alignment},
author={Xiaoqiang Lin and Arun Verma and Zhongxiang Dai and Daniela Rus and See-Kiong Ng and Bryan Kian Hsiang Low},
booktitle={The Fourteenth International Conference on Learning Representations},
year={2026},
}

@inproceedings{DBLP:conf/nips/RafailovSMMEF23,
  author       = {Rafael Rafailov and
                  Archit Sharma and
                  Eric Mitchell and
                  Christopher D. Manning and
                  Stefano Ermon and
                  Chelsea Finn},
  editor       = {Alice Oh and
                  Tristan Naumann and
                  Amir Globerson and
                  Kate Saenko and
                  Moritz Hardt and
                  Sergey Levine},
  title        = {Direct Preference Optimization: Your Language Model is Secretly a
                  Reward Model},
  booktitle    = {Advances in Neural Information Processing Systems 36: Annual Conference
                  on Neural Information Processing Systems 2023, NeurIPS 2023, New Orleans,
                  LA, USA, December 10 - 16, 2023},
  pages        = {53728--53741},
  year         = {2023},
}

@inproceedings{DBLP:conf/icml/GaoSH23,
  author       = {Leo Gao and
                  John Schulman and
                  Jacob Hilton},
  editor       = {Andreas Krause and
                  Emma Brunskill and
                  Kyunghyun Cho and
                  Barbara Engelhardt and
                  Sivan Sabato and
                  Jonathan Scarlett},
  title        = {Scaling Laws for Reward Model Overoptimization},
  booktitle    = {International Conference on Machine Learning, {ICML} 2023, 23-29 July
                  2023, Honolulu, Hawaii, {USA}},
  series       = {Proceedings of Machine Learning Research},
  volume       = {202},
  pages        = {10835--10866},
  publisher    = {{PMLR}},
  year         = {2023},
}

@inproceedings{DBLP:conf/icml/BombariM25,
  author       = {Simone Bombari and
                  Marco Mondelli},
  editor       = {Aarti Singh and
                  Maryam Fazel and
                  Daniel Hsu and
                  Simon Lacoste{-}Julien and
                  Felix Berkenkamp and
                  Tegan Maharaj and
                  Kiri Wagstaff and
                  Jerry Zhu},
  title        = {Spurious Correlations in High Dimensional Regression: The Roles of
                  Regularization, Simplicity Bias and Over-Parameterization},
  booktitle    = {Forty-second International Conference on Machine Learning, {ICML}
                  2025, Vancouver, BC, Canada, July 13-19, 2025},
  series       = {Proceedings of Machine Learning Research},
  volume       = {267},
  pages        = {4839--4873},
  publisher    = {{PMLR} / OpenReview.net},
  year         = {2025},
}

@inproceedings{DBLP:conf/iclr/SharmaTKDABDHJK24,
  author       = {Mrinank Sharma and
                  Meg Tong and
                  Tomasz Korbak and
                  David Duvenaud and
                  Amanda Askell and
                  Samuel R. Bowman and
                  Esin Durmus and
                  Zac Hatfield{-}Dodds and
                  Scott R. Johnston and
                  Shauna Kravec and
                  Timothy Maxwell and
                  Sam McCandlish and
                  Kamal Ndousse and
                  Oliver Rausch and
                  Nicholas Schiefer and
                  Da Yan and
                  Miranda Zhang and
                  Ethan Perez},
  title        = {Towards Understanding Sycophancy in Language Models},
  booktitle    = {The Twelfth International Conference on Learning Representations,
                  {ICLR} 2024, Vienna, Austria, May 7-11, 2024},
  publisher    = {OpenReview.net},
  year         = {2024},
}

@inproceedings{DBLP:conf/iclr/NagarajanAN21,
  author       = {Vaishnavh Nagarajan and
                  Anders Andreassen and
                  Behnam Neyshabur},
  title        = {Understanding the failure modes of out-of-distribution generalization},
  booktitle    = {9th International Conference on Learning Representations, {ICLR} 2021,
                  Virtual Event, Austria, May 3-7, 2021},
  publisher    = {OpenReview.net},
  year         = {2021},
}

@inproceedings{DBLP:conf/nips/VeitchDYE21,
  author       = {Victor Veitch and
                  Alexander D'Amour and
                  Steve Yadlowsky and
                  Jacob Eisenstein},
  editor       = {Marc'Aurelio Ranzato and
                  Alina Beygelzimer and
                  Yann N. Dauphin and
                  Percy Liang and
                  Jennifer Wortman Vaughan},
  title        = {Counterfactual Invariance to Spurious Correlations in Text Classification},
  booktitle    = {Advances in Neural Information Processing Systems 34: Annual Conference
                  on Neural Information Processing Systems 2021, NeurIPS 2021, December
                  6-14, 2021, virtual},
  pages        = {16196--16208},
  year         = {2021},
}

@inproceedings{DBLP:conf/icml/GaoL0XZ025,
  author       = {Chengqian Gao and
                  Haonan Li and
                  Liu Liu and
                  Zeke Xie and
                  Peilin Zhao and
                  Zhiqiang Xu},
  editor       = {Aarti Singh and
                  Maryam Fazel and
                  Daniel Hsu and
                  Simon Lacoste{-}Julien and
                  Felix Berkenkamp and
                  Tegan Maharaj and
                  Kiri Wagstaff and
                  Jerry Zhu},
  title        = {Principled Data Selection for Alignment: The Hidden Risks of Difficult
                  Examples},
  booktitle    = {Forty-second International Conference on Machine Learning, {ICML}
                  2025, Vancouver, BC, Canada, July 13-19, 2025},
  series       = {Proceedings of Machine Learning Research},
  volume       = {267},
  pages        = {18386--18409},
  publisher    = {{PMLR} / OpenReview.net},
  year         = {2025},
}

@inproceedings{DBLP:conf/icml/MuldrewHZB24,
  author       = {William Muldrew and
                  Peter Hayes and
                  Mingtian Zhang and
                  David Barber},
  editor       = {Ruslan Salakhutdinov and
                  Zico Kolter and
                  Katherine A. Heller and
                  Adrian Weller and
                  Nuria Oliver and
                  Jonathan Scarlett and
                  Felix Berkenkamp},
  title        = {Active Preference Learning for Large Language Models},
  booktitle    = {Forty-first International Conference on Machine Learning, {ICML} 2024,
                  Vienna, Austria, July 21-27, 2024},
  series       = {Proceedings of Machine Learning Research},
  volume       = {235},
  pages        = {36577--36590},
  publisher    = {{PMLR} / OpenReview.net},
  year         = {2024},
}

@inproceedings{DBLP:conf/icml/KohL17,
  author       = {Pang Wei Koh and
                  Percy Liang},
  editor       = {Doina Precup and
                  Yee Whye Teh},
  title        = {Understanding Black-box Predictions via Influence Functions},
  booktitle    = {Proceedings of the 34th International Conference on Machine Learning,
                  {ICML} 2017, Sydney, NSW, Australia, 6-11 August 2017},
  series       = {Proceedings of Machine Learning Research},
  volume       = {70},
  pages        = {1885--1894},
  publisher    = {{PMLR}},
  year         = {2017},
}

@inproceedings{DBLP:conf/nips/MeloTAG24,
  author       = {Luckeciano Carvalho Melo and
                  Panagiotis Tigas and
                  Alessandro Abate and
                  Yarin Gal},
  editor       = {Amir Globerson and
                  Lester Mackey and
                  Danielle Belgrave and
                  Angela Fan and
                  Ulrich Paquet and
                  Jakub M. Tomczak and
                  Cheng Zhang},
  title        = {Deep Bayesian Active Learning for Preference Modeling in Large Language
                  Models},
  booktitle    = {Advances in Neural Information Processing Systems 37: Annual Conference
                  on Neural Information Processing Systems 2024, NeurIPS 2024, Vancouver,
                  BC, Canada, December 10 - 15, 2024},
  pages        = {118052--118085},
  year         = {2024},
}

@inproceedings{DBLP:conf/pkdd/DasCPC25,
  author       = {Nirjhar Das and
                  Souradip Chakraborty and
                  Aldo Pacchiano and
                  Sayak Ray Chowdhury},
  editor       = {Rita P. Ribeiro and
                  Bernhard Pfahringer and
                  Nathalie Japkowicz and
                  Pedro Larra{\~{n}}aga and
                  Al{\'{\i}}pio M. Jorge and
                  Carlos Soares and
                  Pedro H. Abreu and
                  Jo{\~{a}}o Gama},
  title        = {Active Preference Optimization for Sample Efficient {RLHF}},
  booktitle    = {Machine Learning and Knowledge Discovery in Databases. Research Track
                  - European Conference, {ECML} {PKDD} 2025, Porto, Portugal, September
                  15-19, 2025, Proceedings, Part {V}},
  series       = {Lecture Notes in Computer Science},
  volume       = {16017},
  pages        = {96--112},
  publisher    = {Springer},
  year         = {2025},
}

@inproceedings{DBLP:conf/nips/ChristianoLBMLA17,
  author       = {Paul F. Christiano and
                  Jan Leike and
                  Tom B. Brown and
                  Miljan Martic and
                  Shane Legg and
                  Dario Amodei},
  editor       = {Isabelle Guyon and
                  Ulrike von Luxburg and
                  Samy Bengio and
                  Hanna M. Wallach and
                  Rob Fergus and
                  S. V. N. Vishwanathan and
                  Roman Garnett},
  title        = {Deep Reinforcement Learning from Human Preferences},
  booktitle    = {Advances in Neural Information Processing Systems 30: Annual Conference
                  on Neural Information Processing Systems 2017, December 4-9, 2017,
                  Long Beach, CA, {USA}},
  pages        = {4299--4307},
  year         = {2017},
}

@inproceedings{DBLP:conf/nips/DuboisLTZGBGLH23,
  author       = {Yann Dubois and
                  Chen Xuechen Li and
                  Rohan Taori and
                  Tianyi Zhang and
                  Ishaan Gulrajani and
                  Jimmy Ba and
                  Carlos Guestrin and
                  Percy Liang and
                  Tatsunori B. Hashimoto},
  editor       = {Alice Oh and
                  Tristan Naumann and
                  Amir Globerson and
                  Kate Saenko and
                  Moritz Hardt and
                  Sergey Levine},
  title        = {AlpacaFarm: {A} Simulation Framework for Methods that Learn from Human
                  Feedback},
  booktitle    = {Advances in Neural Information Processing Systems 36: Annual Conference
                  on Neural Information Processing Systems 2023, NeurIPS 2023, New Orleans,
                  LA, USA, December 10 - 16, 2023},
  pages        = {30039--30069},
  year         = {2023},
}

@article{DBLP:journals/tmlr/CasperDSGSRFKLF23,
  author       = {Stephen Casper and
                  Xander Davies and
                  Claudia Shi and
                  Thomas Krendl Gilbert and
                  J{\'{e}}r{\'{e}}my Scheurer and
                  Javier Rando and
                  Rachel Freedman and
                  Tomasz Korbak and
                  David Lindner and
                  Pedro Freire and
                  Tony Tong Wang and
                  Samuel Marks and
                  Charbel{-}Rapha{\"{e}}l S{\'{e}}gerie and
                  Micah Carroll and
                  Andi Peng and
                  Phillip J. K. Christoffersen and
                  Mehul Damani and
                  Stewart Slocum and
                  Usman Anwar and
                  Anand Siththaranjan and
                  Max Nadeau and
                  Eric J. Michaud and
                  Jacob Pfau and
                  Dmitrii Krasheninnikov and
                  Xin Chen and
                  Lauro Langosco and
                  Peter Hase and
                  Erdem B{\i}y{\i}k and
                  Anca D. Dragan and
                  David Krueger and
                  Dorsa Sadigh and
                  Dylan Hadfield{-}Menell},
  title        = {Open Problems and Fundamental Limitations of Reinforcement Learning
                  from Human Feedback},
  journal      = {Trans. Mach. Learn. Res.},
  volume       = {2023},
  year         = {2023},
}

@inproceedings{moya2026spurious,
  title     = {Spurious Correlation Learning in Preference Optimization: Mechanisms, Consequences, and Mitigation via Tie Training},
  author    = {Moya, Christian and Semendinger, Alex and Lin, Guang and Thornley, Elliott},
  booktitle = {Proceedings of the 43rd International Conference on Machine Learning},
  year      = {2026},
  publisher = {PMLR}
}

@article{martens2020natural,
  author = {James Martens},
  title = {New Insights and Perspectives on the Natural Gradient Method},
  journal = {Journal of Machine Learning Research},
  year = {2020},
  volume = {21},
  number = {146},
  pages = {1--76},
}

@article{schraudolph2002fast,
  author = {Nicol N. Schraudolph},
  title = {Fast Curvature Matrix-Vector Products for Second-Order Gradient Descent},
  journal = {Neural Computation},
  year = {2002},
  volume = {14},
  number = {7},
  pages = {1723--1738},
}

@inproceedings{ji2025pkusaferlhf,
  title = {{PKU}-{S}afe{RLHF}: Towards Multi-Level Safety Alignment for {LLM}s with Human Preference},
  author = {Jiaming Ji and Donghai Hong and Borong Zhang and Boyuan Chen and Josef Dai and Boren Zheng and Tianyi Alex Qiu and Jiayi Zhou and Kaile Wang and Boxun Li and Sirui Han and Yike Guo and Yaodong Yang},
  booktitle = {Proceedings of the 63rd Annual Meeting of the Association for Computational Linguistics (Volume 1: Long Papers)},
  year = {2025},
  address = {Vienna, Austria},
  publisher = {Association for Computational Linguistics},
  pages = {31983--32016},
}

@article{wang2026patterning,
  title={Patterning in Practice: Debiasing Reward Models with Susceptibilities},
  author={Wang, George and Donoway, Elizabeth and Murfet, Daniel},
  journal={arXiv preprint arXiv:2609.00699},
  year={2026}
}

\clearpage
\appendix
\section{Proofs}
\label{app:theory}
\subsection{Proofs for the Influence Analysis}
\label{app:influence-proofs}

\subsubsection{Tie Gradient}

\noindent\textbf{Claim:}
The gradient of the tie loss averaged over its two orientations is
$$
    g_{j,c}(\theta)=0,
    \qquad
    g_{j,s}(\theta)
      =\frac\beta2
        \tanh\!\left(\frac{\beta\tilde\theta_s^\top d_j}{2}\right)d_j.
$$

\noindent\textbf{Proof:}
The two orientations have feature differences $[0;d_j]$ and $[0;-d_j]$.
Averaging their losses gives
$$
    \ell_j(\theta)
      =-\frac12\log\sigma(\beta\tilde\theta_s^\top d_j)
       -\frac12\log\sigma(-\beta\tilde\theta_s^\top d_j).
$$
The resulting loss does not depend on the core parameters, so its core gradient is zero.
Differentiating with respect to the spurious parameters gives
\begin{align*}
    g_{j,s}(\theta)
      &=\frac\beta2\left[
          \sigma(\beta\tilde\theta_s^\top d_j)
          -\sigma(-\beta\tilde\theta_s^\top d_j)
        \right]d_j \\
      &=\frac\beta2
        \tanh\!\left(\frac{\beta\tilde\theta_s^\top d_j}{2}\right)d_j.
\end{align*}
For the second equality, use $\sigma(z)-\sigma(-z)=\tanh(z/2)$.
Reversing the orientation replaces $d_j$ by $-d_j$.
This changes the signs of both the hyperbolic tangent and the feature difference, so their product is unchanged.
\hfill$\square$

\subsubsection{Parameter Response}
\label{app:parameter-response}

Partition the Hessian of the strict objective into its core and spurious blocks.
Let $K_s$ denote the spurious Schur complement of $H_{cc}$ in $H$:
$$
    H=
    \begin{bmatrix}
        H_{cc} & H_{cs}\\
        H_{sc} & H_{ss}
    \end{bmatrix},
    \qquad
    K_s=H_{ss}-H_{sc}H_{cc}^{-1}H_{cs}.
$$

\noindent\textbf{Claim:}
Under the stationarity condition of Proposition~\ref{prop:single-tie-influence}, the perturbed objective of Section~\ref{subsec:first_order_effect} has a unique nearby minimizer for sufficiently small $\epsilon\ge0$, and
\begin{equation}
\label{eq:appendix-schur-response}
    \theta_j^\epsilon-\theta_0
      =-\frac\epsilon N H^{-1}g_j(\theta_0)+O(\epsilon^2),
    \qquad
    \theta_{j,s}^\epsilon-\theta_{0,s}
      =-\frac\epsilon N K_s^{-1}g_{j,s}(\theta_0)+O(\epsilon^2).
\end{equation}

\noindent\textbf{Proof:}
The perturbed minimizer solves the stationarity equation
$$
    \nabla\mathcal L_{\rm DPO}(\theta)
      +\frac\epsilon N\nabla\ell_j(\theta)=0
$$
at $\theta=\theta_j^\epsilon$.
At $\epsilon=0$, the solution is $\theta_j^0=\theta_0$.
Differentiating the left side with respect to $\theta$ at $\theta=\theta_0$ and $\epsilon=0$ gives $H$.
Because the losses are smooth and the positive definite matrix $H$ is invertible, the implicit function theorem gives a unique differentiable solution $\theta_j^\epsilon$ near $\theta_0$ for sufficiently small $\epsilon$.
For $\epsilon\ge0$, both terms in the perturbed log-linear objective are convex, so this stationary point is a minimizer.

Differentiating the stationarity equation with respect to $\epsilon$ gives
$$
    \left[\nabla^2\mathcal L_{\rm DPO}(\theta_j^\epsilon)
      +\frac\epsilon N\nabla^2\ell_j(\theta_j^\epsilon)\right]
    \frac{d\theta_j^\epsilon}{d\epsilon}
    +\frac1N\nabla\ell_j(\theta_j^\epsilon)=0.
$$
At $\epsilon=0$, $\theta_j^0=\theta_0$ and $\nabla^2\mathcal L_{\rm DPO}(\theta_0)=H$; the term multiplied by $\epsilon$ vanishes.
Thus
$$
    H\left.\frac{d\theta_j^\epsilon}{d\epsilon}\right|_{\epsilon=0}
      +\frac1N g_j(\theta_0)=0.
$$
Solving for the derivative gives
$$
    \left.\frac{d\theta_j^\epsilon}{d\epsilon}\right|_{\epsilon=0}
      =-\frac1N H^{-1}g_j(\theta_0).
$$
The Taylor expansion of $\theta_j^\epsilon$ around $\epsilon=0$ is
$$
    \theta_j^\epsilon
      =\theta_j^0
       +\epsilon\left.\frac{d\theta_j^\epsilon}{d\epsilon}
                    \right|_{\epsilon=0}
       +O(\epsilon^2)
      =\theta_0-\frac\epsilon N H^{-1}g_j(\theta_0)
       +O(\epsilon^2),
$$
which is the first identity in~\eqref{eq:appendix-schur-response}.

The preceding claim gives $g_{j,c}(\theta_0)=0$.
To recover the spurious part of the first identity, write the equation for $d\theta_j^\epsilon/d\epsilon$ in block form:
$$
    \begin{bmatrix}
        H_{cc} & H_{cs}\\
        H_{sc} & H_{ss}
    \end{bmatrix}
    \begin{bmatrix}
        \left.\dfrac{d\theta_{j,c}^\epsilon}{d\epsilon}
          \right|_{\epsilon=0}\\[1.1ex]
        \left.\dfrac{d\theta_{j,s}^\epsilon}{d\epsilon}
          \right|_{\epsilon=0}
    \end{bmatrix}
    =-\frac1N
    \begin{bmatrix}
        0\\
        g_{j,s}(\theta_0)
    \end{bmatrix}.
$$
Because $H\succ0$, both $H_{cc}$ and $K_s$ are positive definite and hence invertible.
The first row of the system gives
$$
    \left.\frac{d\theta_{j,c}^\epsilon}{d\epsilon}
      \right|_{\epsilon=0}
      =-H_{cc}^{-1}H_{cs}
        \left.\frac{d\theta_{j,s}^\epsilon}{d\epsilon}
        \right|_{\epsilon=0}.
$$
Substituting this expression into the second row gives
$$
    \left(H_{ss}-H_{sc}H_{cc}^{-1}H_{cs}\right)
      \left.\frac{d\theta_{j,s}^\epsilon}{d\epsilon}
      \right|_{\epsilon=0}
      =-\frac1N g_{j,s}(\theta_0),
$$
or equivalently,
$$
    K_s\left.\frac{d\theta_{j,s}^\epsilon}{d\epsilon}
        \right|_{\epsilon=0}
      =-\frac1N g_{j,s}(\theta_0).
$$
Thus
$$
    \left.\frac{d\theta_{j,s}^\epsilon}{d\epsilon}\right|_{\epsilon=0}
      =-\frac1N K_s^{-1}g_{j,s}(\theta_0).
$$
Taylor expanding the spurious parameters around $\epsilon=0$ now gives
$$
    \theta_{j,s}^\epsilon
      =\theta_{0,s}
       +\epsilon\left.\frac{d\theta_{j,s}^\epsilon}{d\epsilon}
                    \right|_{\epsilon=0}
       +O(\epsilon^2)
      =\theta_{0,s}-\frac\epsilon N
          K_s^{-1}g_{j,s}(\theta_0)+O(\epsilon^2),
$$
which is the second identity in~\eqref{eq:appendix-schur-response}.
\hfill$\square$

\subsubsection{Influence on Spurious Energy}

\noindent\textbf{Claim:}
Under the conditions of Proposition~\ref{prop:single-tie-influence},
$$
    E(\theta_j^\epsilon)-E(\theta_0)
      =-\epsilon A_j+O(\epsilon^2),
    \qquad
    A_j=\frac1N
      \bigl\langle\nabla_sE(\theta_0),K_s^{-1}g_{j,s}(\theta_0)\bigr\rangle.
$$
Equivalently,
$$
    A_j
      =\frac{\beta}{2N}
        \tanh\!\left(\frac{\beta\tilde\theta_{0,s}^\top d_j}{2}\right)
        \bigl\langle\nabla_sE(\theta_0),K_s^{-1}d_j\bigr\rangle.
$$

\noindent\textbf{Proof:}
The gradient of the spurious energy is
$$
    \nabla E(\theta_0)
      =2\begin{bmatrix}
          0\\
          \Sigma_{ss}\tilde\theta_{0,s}
        \end{bmatrix}.
$$
This gradient has no core component.
Combining the spurious response in~\eqref{eq:appendix-schur-response} with a Taylor expansion of $E$ gives
\begin{align*}
    E(\theta_j^\epsilon)-E(\theta_0)
      &=\bigl\langle\nabla_sE(\theta_0),
            \theta_{j,s}^\epsilon-\theta_{0,s}\bigr\rangle
        +O(\epsilon^2)\\
      &=-\frac\epsilon N
        \bigl\langle\nabla_sE(\theta_0),
            K_s^{-1}g_{j,s}(\theta_0)\bigr\rangle
        +O(\epsilon^2)\\
      &=-\epsilon A_j+O(\epsilon^2).
\end{align*}
Substituting the tie gradient proves the second expression for $A_j$.
\hfill$\square$

\subsubsection{First-Order Optimal Selection}
\label{app:influence-selection}

\noindent\textbf{Claim:}
For any fixed set $S$ of $k$ ties, let $\theta_S^\epsilon$ be the nearby minimizer of
$$
    \mathcal L_{\rm DPO}(\theta)+\frac\epsilon N\sum_{j\in S}\ell_j(\theta).
$$
Then
\begin{align*}
    \theta_S^\epsilon-\theta_0
      &=-\frac\epsilon N H^{-1}
          \sum_{j\in S}g_j(\theta_0)+O(\epsilon^2),\\
    \theta_{S,s}^\epsilon-\theta_{0,s}
      &=-\frac\epsilon N K_s^{-1}
          \sum_{j\in S}g_{j,s}(\theta_0)+O(\epsilon^2),
\end{align*}
and therefore
$$
    E(\theta_S^\epsilon)
      =E(\theta_0)-\epsilon\sum_{j\in S}A_j+O(\epsilon^2).
$$
Let $S_k$ contain the $k$ largest scores $A_j$, and let $S_{\rm UTL}$ be a set of $k$ ties sampled uniformly.
Then
$$
    \sum_{j\in S_k}A_j
      =\max_{|S|=k}\sum_{j\in S}A_j
      \ge\mathbb E\!\left[\sum_{j\in S_{\rm UTL}}A_j\right]
      =\frac{k}{M}
        \sum_{j=1}^{M}A_j.
$$
For $0<k<M$, the inequality is strict unless all scores are equal; when $k=M$, the selected sets coincide.
The expectation is over the subset sampled by UTL.

\noindent\textbf{Proof:}
\paragraph{Additivity.}
The perturbed minimizer solves the stationarity equation
$$
    \nabla\mathcal L_{\rm DPO}(\theta)
      +\frac\epsilon N\sum_{j\in S}
          \nabla\ell_j(\theta)=0
$$
at $\theta=\theta_S^\epsilon$.
At $\epsilon=0$, the solution is $\theta_S^0=\theta_0$.
Differentiating the left side with respect to $\theta$ at $\theta=\theta_0$ and $\epsilon=0$ gives $H$.
Because $\mathcal L_{\rm DPO}$ and each $\ell_j$ are smooth and $H$ is invertible, the implicit function theorem gives a unique differentiable stationary point near $\theta_0$ for sufficiently small $\epsilon$.
For $\epsilon\ge0$, the perturbed objective is convex, so this stationary point is a minimizer.
Differentiating the stationarity equation with respect to $\epsilon$ gives
$$
    \left[\nabla^2\mathcal L_{\rm DPO}(\theta_S^\epsilon)
      +\frac\epsilon N\sum_{j\in S}
         \nabla^2\ell_j(\theta_S^\epsilon)\right]
    \frac{d\theta_S^\epsilon}{d\epsilon}
    +\frac1N\sum_{j\in S}\nabla\ell_j(\theta_S^\epsilon)=0.
$$
Evaluating this equation at zero gives
$$
    H\left.\frac{d\theta_S^\epsilon}{d\epsilon}\right|_{\epsilon=0}
      +\frac1N\sum_{j\in S}g_j(\theta_0)=0,
$$
and hence
$$
    \left.\frac{d\theta_S^\epsilon}{d\epsilon}\right|_{\epsilon=0}
      =-\frac1N H^{-1}\sum_{j\in S}g_j(\theta_0).
$$
The Taylor expansion around $\epsilon=0$ gives
$$
    \theta_S^\epsilon
      =\theta_S^0
       +\epsilon\left.\frac{d\theta_S^\epsilon}{d\epsilon}
                    \right|_{\epsilon=0}
       +O(\epsilon^2)
      =\theta_0-\frac\epsilon N H^{-1}
          \sum_{j\in S}g_j(\theta_0)+O(\epsilon^2).
$$

Every tie gradient has a zero core block, so their sum does as well.
The differentiated equation therefore has the block form
$$
    \begin{bmatrix}
        H_{cc} & H_{cs}\\
        H_{sc} & H_{ss}
    \end{bmatrix}
    \begin{bmatrix}
        \left.\dfrac{d\theta_{S,c}^\epsilon}{d\epsilon}
          \right|_{\epsilon=0}\\[1.1ex]
        \left.\dfrac{d\theta_{S,s}^\epsilon}{d\epsilon}
          \right|_{\epsilon=0}
    \end{bmatrix}
    =-\frac1N
    \begin{bmatrix}
        0\\
        \displaystyle\sum_{j\in S}g_{j,s}(\theta_0)
    \end{bmatrix}.
$$
The first row gives
$$
    \left.\frac{d\theta_{S,c}^\epsilon}{d\epsilon}
      \right|_{\epsilon=0}
      =-H_{cc}^{-1}H_{cs}
        \left.\frac{d\theta_{S,s}^\epsilon}{d\epsilon}
        \right|_{\epsilon=0}.
$$
Substituting this expression into the second row gives
$$
    K_s\left.\frac{d\theta_{S,s}^\epsilon}{d\epsilon}\right|_{\epsilon=0}
      =-\frac1N\sum_{j\in S}g_{j,s}(\theta_0).
$$
The Taylor expansion of the spurious parameters is therefore
$$
    \theta_{S,s}^\epsilon
      =\theta_{0,s}
       +\epsilon\left.\frac{d\theta_{S,s}^\epsilon}{d\epsilon}
                    \right|_{\epsilon=0}
       +O(\epsilon^2)
      =\theta_{0,s}-\frac\epsilon N K_s^{-1}
          \sum_{j\in S}g_{j,s}(\theta_0)+O(\epsilon^2).
$$

Finally, expanding the spurious energy along this response gives
\begin{align*}
    E(\theta_S^\epsilon)-E(\theta_0)
      &=-\frac\epsilon N
        \left\langle\nabla_sE(\theta_0),
          K_s^{-1}\sum_{j\in S}g_{j,s}(\theta_0)\right\rangle
        +O(\epsilon^2)\\
      &=-\epsilon\sum_{j\in S}A_j+O(\epsilon^2).
\end{align*}
By the definition of $A_j$, the second equality shows that the influences of the individual ties add at first order.

\paragraph{Comparison with UTL.}
For any other set $S$ with $|S|=k$, the sets $S_k\setminus S$ and $S\setminus S_k$ have the same size.
Pair their elements arbitrarily as $(i_r,j_r)$, where $i_r\in S_k\setminus S$ and $j_r\in S\setminus S_k$.
Since $S_k$ contains the largest scores, $A_{i_r}\ge A_{j_r}$ for every pair, and hence
$$
    \sum_{j\in S_k}A_j-\sum_{j\in S}A_j
      =\sum_r(A_{i_r}-A_{j_r})\ge0.
$$

This inequality holds for every set of $k$ ties, including every realization of $S_{\rm UTL}$.
Taking the expectation over UTL therefore gives
$$
    \sum_{j\in S_k}A_j
      \ge\mathbb E\!\left[\sum_{j\in S_{\rm UTL}}A_j\right].
$$
Each tie belongs to $S_{\rm UTL}$ with probability $k/M$, so
$$
    \mathbb E\!\left[\sum_{j\in S_{\rm UTL}}A_j\right]
      =\sum_{j=1}^{M}A_j\Pr(j\in S_{\rm UTL})
      =\frac{k}{M}\sum_{j=1}^{M}A_j.
$$
If $0<k<M$ and the scores are not all equal, at least one set of $k$ ties has a smaller total than $S_k$.
Every such set has positive probability under UTL, so the inequality is strict.
When all scores are equal, or when $k=M$, equality holds.
\hfill$\square$

\subsection{Ranking Ties from Margin Information}
\label{app:pm-optimality}

\noindent\textbf{Claim:}
Under the conditions of Theorem~\ref{thm:pm-optimality}, there is a constant $\lambda>0$, common to all ties, such that
$$
    \mathbb E[A_j\mid m_j]
      =\frac\lambda N m_j\tanh\!\left(m_j/2\right).
$$
This conditional mean depends on $m_j$ only through $|m_j|$ and strictly increases with $|m_j|$.

\noindent\textbf{Proof:}
For brevity, write $w=\tilde\theta_{0,s}$.
If $w=0$, every tie margin and influence is zero, so the claim is immediate.
We therefore consider $w\ne0$.

Let $d_j$ be the spurious contrast of the sampled orientation of tie $j$, so that
$$
    m_j=\beta w^\top d_j.
$$
Reversing the orientation changes the signs of both $m_j$ and $d_j$.
Their signs cancel in the influence, so either orientation gives
$$
    A_j=\frac\beta N\tanh(m_j/2)
      \bigl\langle\Sigma_{ss}w,
          K_s^{-1}d_j\bigr\rangle.
$$

By the assumptions of Theorem~\ref{thm:pm-optimality}, $d_j\sim\mathcal N(0,C)$ with $C=c\Sigma_{ss}$.
Because $m_j=\beta w^\top d_j$,
$$
    \operatorname{Cov}(d_j,m_j)=\beta Cw,
    \qquad
    \operatorname{Var}(m_j)=\beta^2w^\top Cw.
$$
The conditional mean of a jointly Gaussian vector is therefore
$$
    \mathbb E[d_j\mid m_j]
      =\frac{Cw}{w^\top Cw}\frac{m_j}{\beta}.
$$
Inserting this conditional mean into the influence expression gives
$$
    \mathbb E[A_j\mid m_j]
      =\frac{m_j\tanh(m_j/2)}{N}
        \frac{w^\top\Sigma_{ss}K_s^{-1}Cw}
             {w^\top Cw}.
$$
Substituting $C=c\Sigma_{ss}$ cancels the scale $c$.
Define
\begin{equation}
\label{eq:pm-positive-coefficient}
    \lambda
      =\frac{w^\top\Sigma_{ss}K_s^{-1}
                    \Sigma_{ss}w}
              {w^\top\Sigma_{ss}w}>0.
\end{equation}
Since $H\succ0$, its Schur complement $K_s$ is positive definite, and so is $K_s^{-1}$.
Moreover, $\Sigma_{ss}\succ0$ and $w\ne0$ imply $\Sigma_{ss}w\ne0$.
Therefore, the numerator
$(\Sigma_{ss}w)^\top K_s^{-1}(\Sigma_{ss}w)$
and the denominator $w^\top\Sigma_{ss}w$ are both positive.

The function $m\tanh(m/2)$ is even, so the conditional mean depends on $m_j$ only through $|m_j|$.
For $a>0$,
$$
    \frac{d}{da}\left[a\tanh(a/2)\right]
      =\tanh(a/2)+\frac a2\operatorname{sech}^2(a/2)>0.
$$
The function equals zero at $a=0$, which proves strict monotonicity on $[0,\infty)$.
\hfill$\square$

\noindent\textbf{Claim:}
For any budget $1\le k\le M$, conditional on the observed margins, pure margin selection maximizes expected total influence among selectors that use only those margins.

\noindent\textbf{Proof:}
Let
$$
    \boldsymbol m=(m_1,\ldots,m_M).
$$
Given $\theta_0$, the contrasts are independent across ties, so the pair $(A_j,m_j)$ is independent of the margins of all other ties.
Once the margin of tie $j$ is known, observing the other margins therefore does not change its expected influence:
$$
    \mathbb E[A_j\mid\boldsymbol m]
      =\mathbb E[A_j\mid m_j].
$$
Now consider any rule $S=S(\boldsymbol m)$ that selects $k$ ties.
Linearity of conditional expectation gives
$$
    \mathbb E\!\left[\sum_{j\in S}A_j\,\middle|\,\boldsymbol m\right]
      =\frac\lambda N\sum_{j\in S}|m_j|
        \tanh\!\left(|m_j|/2\right).
$$
Because $a\tanh(a/2)$ is strictly increasing for $a\ge0$, selecting the $k$ largest $|m_j|$ maximizes this conditional expectation.
\hfill$\square$

\subsection{Recovering the Spurious-Energy Gradient}
\label{app:probe-proofs}

Fix either orientation of each probe tie and let $d_n$ be its spurious feature difference.
Because the probe ties preserve core features, their core feature differences are zero.
The empirical second moment of these differences is
$$
    \widehat C=\frac1{N_v}\sum_{n=1}^{N_v}d_nd_n^\top.
$$

\noindent\textbf{Claim:}
In the log-linear setting,
\begin{equation}
\label{eq:probe-recovery}
    E_{\rm cf}(\theta)
      =\beta^2\tilde\theta_s^\top\widehat C\tilde\theta_s,
    \qquad
    \nabla E_{\rm cf}(\theta_0)
      =2\beta^2
        \begin{bmatrix}
            0\\
            \widehat C\tilde\theta_{0,s}
        \end{bmatrix}.
\end{equation}
If $\widehat C=c\Sigma_{ss}$ for some $c>0$, then
$$
    \nabla E_{\rm cf}(\theta_0)=c\beta^2\nabla E(\theta_0).
$$

\noindent\textbf{Proof:}
For tie $n$ in the probe, the reward difference in the log-linear model is $\beta\tilde\theta_s^\top d_n$.
Its sign depends on the chosen orientation, but its square does not.
Using these squared differences in~\eqref{eq:counterfactual-energy} gives
$$
    E_{\rm cf}(\theta)
      =\frac{\beta^2}{N_v}\sum_{n=1}^{N_v}
          (\tilde\theta_s^\top d_n)^2
      =\beta^2\tilde\theta_s^\top\widehat C\tilde\theta_s.
$$
Every outer product $d_nd_n^\top$ is symmetric, and hence so is their average $\widehat C$.
Differentiating the quadratic form therefore gives the gradient in~\eqref{eq:probe-recovery}.
The gradient of the spurious energy $E$ is
$$
    \nabla E(\theta_0)
      =2\begin{bmatrix}
          0\\
          \Sigma_{ss}\tilde\theta_{0,s}
        \end{bmatrix}.
$$
When $\widehat C=c\Sigma_{ss}$, direct substitution gives
$\nabla E_{\rm cf}(\theta_0)=c\beta^2\nabla E(\theta_0)$.
Under the assumptions of Section~\ref{subsec:pm-optimality}, $\mathbb E[d_nd_n^\top]=c\Sigma_{ss}$, so $\mathbb E[\widehat C]=c\Sigma_{ss}$.
Because the gradient in~\eqref{eq:probe-recovery} is linear in $\widehat C$, the identity then holds in expectation over the probe.
\hfill$\square$

\section{Implementation Details}
\label{app:implementation}
\subsection{Curvature and Numerical Solves}
\label{app:scoring-details}

Each influence score uses the inverse curvature of the DPO loss at
$\theta_0$. Because the gradients of the energy and each tie have zero core
blocks, and the spurious block of $H^{-1}$ is $K_s^{-1}$, the influence in
Proposition~\ref{prop:single-tie-influence} can be written over all parameters:
\begin{equation}
\label{eq:curvature-system}
    A_j=\frac1N\bigl\langle\nabla E(\theta_0),H^{-1}g_j(\theta_0)\bigr\rangle
       =\frac1N\bigl\langle v,g_j(\theta_0)\bigr\rangle,
    \qquad
    Hv=\nabla E(\theta_0).
\end{equation}
The second form follows from the symmetry of $H$. Thus, one linear solve scores
the entire pool, after which each tie requires only its gradient. The log-linear
model can form and solve this system exactly. A neural policy has a Hessian
of the same form over all $p$ parameters. Two obstacles prevent an exact
solve: the DPO loss is nonconvex in the network parameters, and $\theta_0$
was obtained after finite training, so $H$ need not be positive definite. For
the language model, $p$ is also in the millions, so $H$ cannot be stored or
factorized. We therefore solve the system exactly in the log-linear model and,
in the neural regimes, replace $H$ with a damped approximation that is
positive definite and cheap to solve.

\paragraph{Log-linear model.}
The log-linear model uses the exact Hessian of the empirical DPO objective at
$\theta_0$, formed from all $N$ comparisons in $\mathcal D_{\rm pref}$. For strict
comparison $i$, let
$\Delta\phi_i=\phi(x_i,y_{w,i})-\phi(x_i,y_{l,i})$ be its feature
difference and $m_i=\beta\tilde\theta_0^\top\Delta\phi_i$ its margin. With
$\sigma$ the logistic function, the Hessian is
\begin{equation}
\label{eq:linear-exact-hessian}
    H
    =\frac{\beta^2}{N}\sum_{i=1}^{N}
      \sigma(m_i)\sigma(-m_i)
      \Delta\phi_i\Delta\phi_i^\top.
\end{equation}
The right side of~\eqref{eq:curvature-system} is the analytic gradient of the
population energy, $\nabla E(\theta_0)=[0;\,2\Sigma_{ss}\tilde\theta_{0,s}]$. With only
$d_c+d_s=20$ parameters, we solve $Hv=\nabla E(\theta_0)$ exactly and without
damping, so each score equals the influence $A_j$.

\paragraph{Neural network and language model.}
The two neural regimes use the same approximation to the curvature and the same solver. Both
need the approximation because neither Hessian is guaranteed to be positive definite.
Only the language model needs the solver, because its curvature is too large
to form. The MLP, with about 19,000 parameters, could be solved directly, but
sharing the solver keeps a single implementation for both regimes. For the
MLP, the parameters are all weights in the reward network; for the language
model, they are the trainable LoRA parameters. Let $m_i$ be the DPO margin for strict comparison $i$; for the MLP, we
take the reference margin to be zero. Let $J_i=\nabla m_i(\theta_0)$, and let
$\lambda_{\rm obj}$ be the weight of the objective's L2 penalty
$\tfrac{\lambda_{\rm obj}}2\lVert\theta\rVert_2^2$: $0.01$ for the MLP and $0$
for LoRA-DPO. The Hessian of the strict objective is then
\begin{equation}
\label{eq:ggn-decomposition}
    H=\frac1N\sum_{i=1}^{N}\left[
       \sigma(m_i)\sigma(-m_i)J_iJ_i^\top
       -\sigma(-m_i)\nabla^2m_i(\theta_0)
    \right]+\lambda_{\rm obj}I.
\end{equation}
The generalized Gauss--Newton (GGN) matrix retains the first term, which is
positive semidefinite, and omits the second, which can be indefinite and
requires the Hessian of every margin. We estimate the first term from $N_G=1000$ strict comparisons
sampled once per seed. Weighting each margin gradient and stacking the results as rows gives
\begin{equation}
\label{eq:ggn-row-matrix}
    D_{i,:}=\sqrt{\sigma(m_i)\sigma(-m_i)}\,J_i^\top,
    \qquad
    G=\frac1{N_G}D^\top D.
\end{equation}

The matrix $D$ has $N_G$ rows, far fewer than the number of parameters
$p$, so $G$ has rank at most $N_G$ and most of its eigenvalues are zero. We
therefore damp $G$ in proportion to its largest eigenvalue and add the
objective's own ridge:
\begin{equation}
\label{eq:damped-ggn}
    \widehat H=G+\alpha I,
    \qquad
    \alpha=\lambda_{\rm obj}
      +0.05\bigl(\lambda_{\rm obj}+\lambda_{\max}(G)\bigr).
\end{equation}
The curvature for the MLP thus contains its L2 penalty, and both regimes add the same
relative damping.

With the counterfactual energy of Section~\ref{subsec:influence_practice} in
place of $E$, the system for the neural models is $\widehat Hv=\nabla E_{\rm cf}(\theta_0)$.
The low rank of $G$ also makes this system efficient to solve. The Woodbury identity
\begin{equation}
\label{eq:woodbury-identity}
    (G+\alpha I)^{-1}
      =\alpha^{-1}\left[I-D^\top\left(DD^\top+N_G\alpha I\right)^{-1}D\right]
\end{equation}
replaces the $p\times p$ system with an $N_G\times N_G$ one:
\begin{equation}
\label{eq:woodbury-solution}
    \xi=\left(DD^\top+N_G\alpha I\right)^{-1}D\nabla E_{\rm cf}(\theta_0),
    \qquad
    v=\alpha^{-1}\left(\nabla E_{\rm cf}(\theta_0)-D^\top\xi\right).
\end{equation}
The solution is exact for the GGN after sampling and damping. The small matrix
$DD^\top/N_G$ also supplies $\lambda_{\max}(G)$: both it and $G$ are formed
from the singular values of $D$, so they share their nonzero eigenvalues.

\paragraph{Numerical checks.}
Before solving the log-linear system, the implementation verifies the conditions of
Proposition~\ref{prop:single-tie-influence}: the norm of the gradient of the strict objective
must be at most $10^{-10}$, and $H$ must be positive definite. Scoring stops
if either check fails, rather than damping $H$. For the MLP, the solve runs in float64
and must reach a relative residual of $10^{-10}$. For the language model, $D$
is stored in bfloat16 to save memory, the $N_G\times N_G$ system is solved in
float64, and the relative residual must be below $10^{-4}$. These checks
verify that $v$ solves the system defined by the sampled and damped GGN.

\newpage 
\section{Experimental Details and Additional Results}
\label{app:additional-results}
\label{app:experiments}
\label{app:experimental-protocol}

This appendix describes the full experimental protocol and additional results for
Section~\ref{sec:experiments}.  We first describe the choices shared across
experiments, then give the data, models, training procedures, and metrics for
each regime.

\raggedbottom
\pgfplotsset{
 stl comparison utl/.style={color=black!65,dashed,line width=.85pt,mark=o,mark size=1.5pt,mark options={solid,fill=white}},
 stl comparison pm/.style={color=stlblue,solid,line width=1pt,mark=triangle*,mark size=1.9pt},
 stl comparison influence/.style={color=stlorange,dashdotted,line width=.65pt,mark=square,mark size=2.2pt,mark options={solid}},
 stl appendix axis/.style={
  stl axis, scale only axis,
  label style={font=\small},
  tick label style={font=\small},
  title style={font=\small,align=center,yshift=3pt},
  legend style={draw=none,fill=none,font=\small,cells={anchor=west},column sep=7pt},
  yticklabel style={inner xsep=2pt},
  xlabel={Tie budget $k$},
 },
}

\newenvironment{stlresultfigure}{%
  \begin{figure}[!htb]\centering
}{\end{figure}}

\subsection{Shared Protocol}
\label{app:shared-experimental-protocol}

\paragraph{Tie pools.}
Every regime uses the tie construction of Section~\ref{subsec:problem_selection}:
each tie keeps the strict winner's core content and changes only its spurious
attributes.  We construct one pool per seed and use it for every
method, so the methods differ only in which ties they choose.  Separate random
streams generate the data, construct the ties, orient them, and drive training
and evaluation, so a change in one step does not alter the random draws of
another.

\paragraph{Selection.}
For each seed, we first train a strict checkpoint on the strict comparisons
alone.  PM and influence score every tie in the pool at this checkpoint, as
derived in Sections~\ref{sec:influence}--\ref{sec:nonlinear}: Oracle influence
in the log-linear model and Practical influence in the neural regimes
(Appendix~\ref{app:scoring-details}).  Each targeted method selects the $k$
highest-scoring ties from one fixed ranking, so by construction a tie selected
at one budget remains selected at every larger budget.  UTL instead samples
$k$ ties uniformly at random from the same pool.

\paragraph{Training.}
The strict checkpoint is used only for scoring.  For each selected set, we
train a new model on the strict comparisons and the selected ties, starting
from the same initialization as the strict checkpoint.  Training requires a
labeled comparison, so we orient each tie once by a fair coin flip and use
these orientations for every method and budget.  Each selected tie then enters
the objective as one additional comparison, with weight $1/N$ relative to the
mean loss on strict comparisons ($\epsilon=1$ in Proposition~\ref{prop:single-tie-influence}).
At a fixed seed and budget, the methods therefore differ only in which ties
they select, and at $k=M$ they train on the same ties.

\paragraph{Uncertainty.}
We treat seeds as the independent units: each seed has its own data, tie pool,
strict checkpoint, and scores.  Given the seed, PM and influence select one
set per budget, so each trains one model.  UTL is random, so each seed trains
several UTL models per budget, each on a separately sampled set of ties (a
\emph{draw}), and averages their metrics to reduce variance.  Every method
therefore contributes one value per seed and budget.
Reported intervals are two-sided 95\% Student-$t$ confidence intervals across
seeds.

\paragraph{Evaluation distributions.}
We evaluate on held-out comparisons from three distributions that differ only
in how the spurious attributes relate to the preference.  In the training
distribution $P$, they correlate with the preference; in $Q_{\rm adv}$, these
correlations reverse, as in Section~\ref{sec:experiments}.  In
$Q_{\rm clean}$, the spurious attributes keep their distribution but become
independent of utility, so they carry no information about the winner.  Each
regime below specifies all three distributions.

\paragraph{Metrics.}
Accuracy under $Q_{\rm adv}$ measures performance when the training
correlations reverse.  It can also rise for the wrong reason.  If the selected
ties push the model to rely on the spurious attributes in the opposite
direction, this reversed shortcut also predicts the winner under
$Q_{\rm adv}$.  Under $Q_{\rm clean}$, a shortcut in either direction only adds
noise, so accuracy under $Q_{\rm clean}$ checks that selection removes the
shortcut rather than reverses it.  We also measure spurious reliance directly,
with the spurious energy $E$ in the log-linear model and the counterfactual
energy $E_{\rm cf}$ in the neural regimes.  Both square the spurious part of
the margin, so they grow with reliance in either direction.  Finally, accuracy
under $P$ shows what mitigation costs in the training environment.

\paragraph{Fresh and continued training.}
\label{app:continued-training}
Influence scores ties by their local effect at the strict checkpoint.  In the
log-linear model, the objective has a single minimizer, which training reaches
from any starting point, so fresh training ends at the solution that the
analysis describes.  A neural objective has no such guarantee, and fresh
training can end far from the checkpoint at which the scores were computed.
We also continue training from the strict checkpoint.  Continued
training uses the same settings and training budget as fresh training; only
the initialization differs.

\paragraph{Reading the additional results.}
Appendix figures report means only to avoid clutter from overlapping uncertainty
bands. The results text reports representative 95\% confidence intervals.
In figures that compare training procedures, fresh and continued training
occupy the left and right columns, with matching axes in each row. The Qwen
figure instead compares hotels and emojis and reports only fresh training.
Colors, markers, and line styles identify the same selector across columns.

\subsection{Synthetic Data}
\label{app:synthetic-dgp}

\paragraph{Latent variables.}
The log-linear experiments and the experiments with neural networks use the same latent distribution,
which lets us reverse two sources of shortcut information separately.
Each alternative has latent utility $q\sim\mathcal N(0,1)$, a core vector
$c\in\mathbb R^{d_c}$, and a spurious vector $s\in\mathbb R^{d_s}$.  Let
$z\sim\mathcal N(0,1)$ be independent of $q$, and let
$(\epsilon_c,\epsilon_s)$ be jointly Gaussian and independent of $(q,z)$.
The matrices $\Omega_c$ and $\Omega_s$ are the core and spurious noise covariances, $R$
controls their dependence, and $u_c$ and $u_s$ are their signal directions.
An environment is indexed by two parameters $(a,t)$ and generates
\begin{align}
    g &= a q + \sqrt{1-a^2}\,z, \\
    c &= u_c q+\epsilon_c, &
    s &= u_s g+\epsilon_s, \\
    \operatorname{Cov}\!\left(
    \begin{bmatrix}\epsilon_c\\ \epsilon_s\end{bmatrix}
    \right)
    &=
    \begin{pmatrix}
      \Omega_c & t\Omega_c^{1/2}R\,\Omega_s^{1/2}\\
      t\Omega_s^{1/2}R^\top\Omega_c^{1/2} & \Omega_s
    \end{pmatrix}.
    \label{eq:synthetic-dgp}
\end{align}
The parameter $a$ controls the shortcut in the mean: through $g$, spurious features
track utility directly.  The parameter $t$ controls leakage: through the noise
covariance, spurious features correlate with core features.  These are the
two mechanisms of Section~\ref{subsec:spurious_reliance}, and each can be
reversed independently.

\paragraph{Covariance structure.}
We draw a new geometry for each seed, so the results do not depend on one choice of
basis.  The matrices $\Omega_c$ and $\Omega_s$ are independent and positive definite, with
orthogonal eigenvectors drawn at random.  Their eigenvalues are sampled on a log
scale, normalized to have mean one, and span condition number 20.
Independently sampled orthogonal matrices define
$R=U\operatorname{diag}(r_1,\ldots,r_{10})V^\top$, with singular values evenly
spaced from $0.20$ to $0.65$.  Finally, the signal directions are
$u_c=\gamma_c\Omega_c^{1/2}v_c$ and $u_s=\gamma_s\Omega_s^{1/2}v_s$, where $v_c$ and $v_s$
are independent random unit vectors and $\gamma_c,\gamma_s>0$ are signal
strengths.  This scaling fixes the strength of each signal relative to its
noise, measured in Mahalanobis units:
\begin{equation}
    u_c^\top\Omega_c^{-1}u_c=\gamma_c^2,
    \qquad
    u_s^\top\Omega_s^{-1}u_s=\gamma_s^2.
\end{equation}
We use $d_c=d_s=10$ and $\gamma_c=\gamma_s=2$.

\paragraph{Strict preferences.}
Each strict comparison draws two alternatives independently.  A
Bradley--Terry teacher then chooses A with probability
\begin{equation}
    \Pr(A\succ B\mid q_A,q_B)
       =\sigma\!\left(\beta_{\rm teacher}(q_A-q_B)\right),
       \qquad \beta_{\rm teacher}=4.
    \label{eq:synthetic-bt-teacher}
\end{equation}
After sampling this label, we orient the pair as winner and loser.  Thus the
preference signal is stochastic, but labels depend only on latent utility and
never on $s$.

\paragraph{Evaluation distributions.}
Table~\ref{tab:synthetic-environments} sets $(a,t)$ for each distribution.
$P$ aligns both shortcut channels with utility, $Q_{\rm adv}$ reverses both,
and $Q_{\rm clean}$ sets both to zero, which makes $s$ independent of $q$ and
$c$.  All three preserve the marginal distributions of $q$, $c$, and $s$.

\begin{table}[t]
\centering
\small
\caption{Environments used to train and evaluate the synthetic models.  The parameters
$a$ and $t$ are defined in~\eqref{eq:synthetic-dgp}.}
\label{tab:synthetic-environments}
\begin{tabular}{@{}lccp{0.48\linewidth}@{}}
\toprule
Environment & $a$ & $t$ & Interpretation \\
\midrule
$P$ & $+1$ & $+1$ & Training environment; both shortcut channels align. \\
$Q_{\rm clean}$ & $0$ & $0$ & Both shortcut channels are removed. \\
$Q_{\rm adv}$ & $-1$ & $-1$ & Both shortcut channels reverse. \\
\bottomrule
\end{tabular}
\end{table}

\paragraph{Tie construction.}
For each seed, we sample $N$ strict comparisons to form
$\mathcal D_{\rm pref}$ and select $M$ of them uniformly without replacement
to construct ties.  The operator $T$ keeps the selected winner's core vector
$c_w$ and replaces its spurious vector $s_w$ with a \emph{donor} $s'$: the
spurious vector of a response drawn uniformly from all $2N$ responses in
$\mathcal D_{\rm pref}$, not only from the selected comparisons. The resulting tie
$\tau=(x,\{(c_w,s_w),(c_w,s')\})$ therefore differs only in spurious
content.

\paragraph{Settings.}
Table~\ref{tab:synthetic-hyperparameters} summarizes the settings of both
synthetic models.
\begin{table}[t]
\centering
\small
\caption{Settings for the synthetic experiments.}
\label{tab:synthetic-hyperparameters}
\begin{tabular}{@{}>{\raggedright\arraybackslash}p{0.37\linewidth}
                    >{\raggedright\arraybackslash}p{0.25\linewidth}
                    >{\raggedright\arraybackslash}p{0.30\linewidth}@{}}
\toprule
Setting & Log-linear & Neural network \\
\midrule
Feature dimensions $(d_c,d_s)$ & $(10,10)$ & $(10,10)$ \\
Condition numbers $\kappa(\Omega_c),\kappa(\Omega_s)$ & $(20,20)$ & $(20,20)$ \\
Singular values $\sigma_i(R)$ & $[0.20,0.65]$ & $[0.20,0.65]$ \\
Signal strengths $(\gamma_c,\gamma_s)$ & $(2,2)$ & $(2,2)$ \\
Teacher coefficient $\beta_{\rm teacher}$ & $4$ & $4$ \\
DPO coefficient $\beta$ & $1$ & $1$ \\
Reference term $\Delta\log\pi_{\rm ref}$ & $0$ & $0$ \\
Strict comparisons $N$ & $1{,}000$ & $5{,}000$ \\
Tie pool $M$ & $500$ & $2{,}000$ \\
Responses in donor bank & $2N=2{,}000$ & $2N=10{,}000$ \\
Seeds; UTL draws per seed & $5;5$ & $5;5$ \\
Reward model $r_\theta$ & linear in $[c;s]$ & $20\!\to\!128\!\to\!128\!\to\!1$ ReLU MLP \\
Initial parameters $\theta_{\rm init}$ & $0$ & same draw within each seed \\
Optimizer & full-batch Newton & Adam for 20 epochs + LBFGS \\
Step size for Adam; batch size & -- & $2\times10^{-3};512$ \\
L2 coefficient & $0$ & $0.01$ \\
LBFGS history; iteration limit & -- & $20;500$ \\
Curvature estimate & Hessian of empirical DPO loss & empirical DPO GGN from $1{,}000$ comparisons \\
Influence probe & -- & $500$ ties \\
Relative spectral shift for GGN & -- & $0.05$ \\
Reliance metric & $E$ & $E_{\rm cf}$ on $10{,}000\times8$ ties \\
Evaluation pairs & population quadrature & $50{,}000$ \\
\bottomrule
\end{tabular}
\end{table}

\FloatBarrier

\subsection{Log-Linear Model}
\label{app:log-linear-experiment}

\paragraph{Model and objective.}
The policy is log-linear in $\phi=[c;s]$, as in
Section~\ref{subsec:dpo_log_linear}.  We parameterize it relative to the
reference policy, so $\theta_{\rm ref}=0$ and $\tilde\theta=\theta$: the
implicit reward difference of a comparison is
$\beta(\theta_c^\top\Delta c+\theta_s^\top\Delta s)$, with no reference term.
Training uses DPO with $\beta=1$.

\paragraph{Data and budgets.}
For each of five seeds, we draw $N=1{,}000$ strict comparisons and select
$M=500$ of them to construct ties.  UTL uses five random draws per seed.  The
budget grid is
\begin{equation}
    k\in\{25,50,75,100,150,200,250,300,350,400,450,500\}.
\end{equation}

\paragraph{Optimization.}
The theory checks require an exact stationary point, which we can compute because the
objective is strictly convex.  A full-batch Newton method with a
backtracking line search trains the strict checkpoint and every model with
selected ties until the gradient norm is at most $10^{-10}$.  The line search
makes the method converge from any starting point.  Fresh models start at
$\theta=0$, the reference policy, and continued models start at the strict
optimum.

\paragraph{Accuracy.}
Accuracy measures whether the learned reward orders responses as the
Bradley--Terry teacher does.  Rather than score against one noisy sampled
label, we integrate over the teacher: a pair contributes the probability,
$\sigma(\beta_{\rm teacher}(q_A-q_B))$ or $1-\sigma(\beta_{\rm teacher}(q_A-q_B))$,
that the teacher prefers the response the model chooses.  Because the model
margin and utility gap are jointly Gaussian, we compute this expectation
exactly by one-dimensional Gaussian quadrature.

\paragraph{Spurious energy.}
The known DGP lets us evaluate the spurious energy $E$ from
Section~\ref{subsec:first_order_effect} exactly under $P$.  The two alternatives of
a comparison are independent, so the second moment in $E$ has the closed form
\begin{equation}
    \Sigma_{ss}=2(\Omega_s+u_su_s^\top).
\end{equation}

\paragraph{Oracle influence.}
Oracle influence computes the influence $A_j$ of
Proposition~\ref{prop:single-tie-influence} exactly.  It uses the analytic
gradient of the population energy, $[0;2\Sigma_{ss}\tilde\theta_{0,s}]$, the Hessian
of the empirical DPO loss at the strict optimum without damping, and the gradient of each tie's
symmetric packet divided by $N$ (Appendix~\ref{app:scoring-details}).

\paragraph{Theory checks.}
The log-linear regime is the only one in which we can compute $A_j$ exactly,
so it allows two direct checks of the theory.  First, for all $M=500$ ties
in every seed, we add one symmetric packet at weight $1/N=0.001$, re-optimize
the exact objective, and compare the observed change in $E$ with $A_j$.
Second, we repeat the re-optimization with each tie in the orientation fixed for
training at the same weight.  Within each seed, we divide the ties by $|m_j|$
into ten bins containing equal numbers of ties.  In each bin, we compare the mean
observed reduction in energy and the mean of $A_j$ with the conditional
prediction
\begin{equation}
    \mathbb E[A_j\mid m_j=m]
       =\frac{\lambda}{N}m\tanh(m/2),
\end{equation}
with $\lambda$ from~\eqref{eq:pm-positive-coefficient}, averaged over the
actual margins in the bin.  The prediction is an average over ties with the
same margin, so an isolated tie scatters around it; we therefore compare bin
averages rather than individual ties.  Figure~\ref{fig:linear-experiments}(a)--(b) reports both checks.

\subsubsection{Results}
\begin{stlresultfigure}
\begin{tikzpicture}
\begin{groupplot}[stl appendix axis,group style={group size=2 by 4,horizontal sep=16mm,vertical sep=8mm},width=.35\linewidth,height=.115\linewidth]
\nextgroupplot[xlabel={},xmin=15,xmax=510,xtick={25,100,200,300,400,500},minor xtick={50,75,150,250,350,450},ymin=45,ymax=85,ytick={50,60,70,80},ylabel={$Q_{\rm adv}$ (\%)},legend to name=continued_linear-legend,legend columns=4]
\addplot[stl comparison utl,forget plot,restrict expr to domain={\coordindex}{0:11}] table[x=budget,y=mean,col sep=comma]{figures/data/linear/fresh/q_adv.csv};
\addplot[stl comparison influence,forget plot,restrict expr to domain={\coordindex}{24:35}] table[x=budget,y=mean,col sep=comma]{figures/data/linear/fresh/q_adv.csv};
\addplot[stl comparison pm,forget plot,restrict expr to domain={\coordindex}{12:23}] table[x=budget,y=mean,col sep=comma]{figures/data/linear/fresh/q_adv.csv};
\addplot[stl strict,forget plot,restrict expr to domain={\coordindex}{36:47}] table[x=budget,y=mean,col sep=comma]{figures/data/linear/fresh/q_adv.csv};
\addlegendimage{stl comparison utl}\addlegendentry{UTL}
\addlegendimage{stl comparison pm}\addlegendentry{PM}
\addlegendimage{stl comparison influence}\addlegendentry{Oracle influence}
\addlegendimage{stl strict}\addlegendentry{Strict only}
\nextgroupplot[xlabel={},xmin=15,xmax=510,xtick={25,100,200,300,400,500},minor xtick={50,75,150,250,350,450},ymin=45,ymax=85,ytick={50,60,70,80},ylabel={}]
\addplot[stl comparison utl,forget plot,restrict expr to domain={\coordindex}{0:11}] table[x=budget,y=mean,col sep=comma]{figures/data/linear/continued/q_adv.csv};
\addplot[stl comparison influence,forget plot,restrict expr to domain={\coordindex}{24:35}] table[x=budget,y=mean,col sep=comma]{figures/data/linear/continued/q_adv.csv};
\addplot[stl comparison pm,forget plot,restrict expr to domain={\coordindex}{12:23}] table[x=budget,y=mean,col sep=comma]{figures/data/linear/continued/q_adv.csv};
\addplot[stl strict,forget plot,restrict expr to domain={\coordindex}{36:47}] table[x=budget,y=mean,col sep=comma]{figures/data/linear/continued/q_adv.csv};
\nextgroupplot[xlabel={},xmin=15,xmax=510,xtick={25,100,200,300,400,500},minor xtick={50,75,150,250,350,450},ymin=0,ymax=5,ytick={0,2,4},ylabel={$E$}]
\addplot[stl comparison utl,forget plot,restrict expr to domain={\coordindex}{0:11}] table[x=budget,y=mean,col sep=comma]{figures/data/linear/fresh/spurious_energy.csv};
\addplot[stl comparison influence,forget plot,restrict expr to domain={\coordindex}{24:35}] table[x=budget,y=mean,col sep=comma]{figures/data/linear/fresh/spurious_energy.csv};
\addplot[stl comparison pm,forget plot,restrict expr to domain={\coordindex}{12:23}] table[x=budget,y=mean,col sep=comma]{figures/data/linear/fresh/spurious_energy.csv};
\addplot[stl strict,forget plot,restrict expr to domain={\coordindex}{36:47}] table[x=budget,y=mean,col sep=comma]{figures/data/linear/fresh/spurious_energy.csv};
\nextgroupplot[xlabel={},xmin=15,xmax=510,xtick={25,100,200,300,400,500},minor xtick={50,75,150,250,350,450},ymin=0,ymax=5,ytick={0,2,4},ylabel={}]
\addplot[stl comparison utl,forget plot,restrict expr to domain={\coordindex}{0:11}] table[x=budget,y=mean,col sep=comma]{figures/data/linear/continued/spurious_energy.csv};
\addplot[stl comparison influence,forget plot,restrict expr to domain={\coordindex}{24:35}] table[x=budget,y=mean,col sep=comma]{figures/data/linear/continued/spurious_energy.csv};
\addplot[stl comparison pm,forget plot,restrict expr to domain={\coordindex}{12:23}] table[x=budget,y=mean,col sep=comma]{figures/data/linear/continued/spurious_energy.csv};
\addplot[stl strict,forget plot,restrict expr to domain={\coordindex}{36:47}] table[x=budget,y=mean,col sep=comma]{figures/data/linear/continued/spurious_energy.csv};
\nextgroupplot[xlabel={},xmin=15,xmax=510,xtick={25,100,200,300,400,500},minor xtick={50,75,150,250,350,450},ymin=68,ymax=84,ytick={70,75,80},ylabel={$Q_{\rm clean}$ (\%)}]
\addplot[stl comparison utl,forget plot,restrict expr to domain={\coordindex}{0:11}] table[x=budget,y=mean,col sep=comma]{figures/data/linear/fresh/q_clean.csv};
\addplot[stl comparison influence,forget plot,restrict expr to domain={\coordindex}{24:35}] table[x=budget,y=mean,col sep=comma]{figures/data/linear/fresh/q_clean.csv};
\addplot[stl comparison pm,forget plot,restrict expr to domain={\coordindex}{12:23}] table[x=budget,y=mean,col sep=comma]{figures/data/linear/fresh/q_clean.csv};
\addplot[stl strict,forget plot,restrict expr to domain={\coordindex}{36:47}] table[x=budget,y=mean,col sep=comma]{figures/data/linear/fresh/q_clean.csv};
\nextgroupplot[xlabel={},xmin=15,xmax=510,xtick={25,100,200,300,400,500},minor xtick={50,75,150,250,350,450},ymin=68,ymax=84,ytick={70,75,80},ylabel={}]
\addplot[stl comparison utl,forget plot,restrict expr to domain={\coordindex}{0:11}] table[x=budget,y=mean,col sep=comma]{figures/data/linear/continued/q_clean.csv};
\addplot[stl comparison influence,forget plot,restrict expr to domain={\coordindex}{24:35}] table[x=budget,y=mean,col sep=comma]{figures/data/linear/continued/q_clean.csv};
\addplot[stl comparison pm,forget plot,restrict expr to domain={\coordindex}{12:23}] table[x=budget,y=mean,col sep=comma]{figures/data/linear/continued/q_clean.csv};
\addplot[stl strict,forget plot,restrict expr to domain={\coordindex}{36:47}] table[x=budget,y=mean,col sep=comma]{figures/data/linear/continued/q_clean.csv};
\nextgroupplot[xmin=15,xmax=510,xtick={25,100,200,300,400,500},minor xtick={50,75,150,250,350,450},ymin=83,ymax=87,ytick={83,85,87},ylabel={$P$ (\%)}]
\addplot[stl comparison utl,forget plot,restrict expr to domain={\coordindex}{0:11}] table[x=budget,y=mean,col sep=comma]{figures/data/additional/linear_fresh_p.csv};
\addplot[stl comparison influence,forget plot,restrict expr to domain={\coordindex}{24:35}] table[x=budget,y=mean,col sep=comma]{figures/data/additional/linear_fresh_p.csv};
\addplot[stl comparison pm,forget plot,restrict expr to domain={\coordindex}{12:23}] table[x=budget,y=mean,col sep=comma]{figures/data/additional/linear_fresh_p.csv};
\addplot[stl strict,forget plot,restrict expr to domain={\coordindex}{36:47}] table[x=budget,y=mean,col sep=comma]{figures/data/additional/linear_fresh_p.csv};
\nextgroupplot[xmin=15,xmax=510,xtick={25,100,200,300,400,500},minor xtick={50,75,150,250,350,450},ymin=83,ymax=87,ytick={83,85,87},ylabel={}]
\addplot[stl comparison utl,forget plot,restrict expr to domain={\coordindex}{0:11}] table[x=budget,y=mean,col sep=comma]{figures/data/additional/linear_continued_p.csv};
\addplot[stl comparison influence,forget plot,restrict expr to domain={\coordindex}{24:35}] table[x=budget,y=mean,col sep=comma]{figures/data/additional/linear_continued_p.csv};
\addplot[stl comparison pm,forget plot,restrict expr to domain={\coordindex}{12:23}] table[x=budget,y=mean,col sep=comma]{figures/data/additional/linear_continued_p.csv};
\addplot[stl strict,forget plot,restrict expr to domain={\coordindex}{36:47}] table[x=budget,y=mean,col sep=comma]{figures/data/additional/linear_continued_p.csv};
\end{groupplot}
\node[anchor=south,font=\small\bfseries] at ($(group c1r1.north)+(0,2mm)$){Fresh};
\node[anchor=south,font=\small\bfseries] at ($(group c2r1.north)+(0,2mm)$){Continued};
\node[anchor=north] at ($(group c1r4.south west)!0.5!(group c2r4.south east)+(0,-11mm)$){\pgfplotslegendfromname{continued_linear-legend}};
\end{tikzpicture}
\caption{Log-linear results under fresh training (left) and continued training
(right). Rows share axes; colours, markers, and line styles match across
columns. Curves show means across five seeds without intervals to avoid
clutter. Dotted grey lines mark strict baselines. Fresh and continued models
agree to numerical precision.}
\label{fig:continued-linear}
\end{stlresultfigure}

Section~\ref{sec:experiments} discusses accuracy under $Q_{\rm adv}$ and
spurious energy (Figure~\ref{fig:linear-experiments}(c)--(d)).  We report
the other metrics and compare fresh with continued training.

\paragraph{Clean accuracy.}
Selection also improves accuracy under $Q_{\rm clean}$, where spurious features
carry no information (Figure~\ref{fig:continued-linear}).  With 100 ties, PM
reaches $80.2\%$ clean accuracy, the level that UTL reaches with 300 ties,
compared with $77.1\%$ for UTL at the same budget and $71.0\%$ without ties.
Oracle influence behaves similarly.  At the full pool, all selectors reach
$81.1\%$ clean accuracy (95\% CI $[80.4,81.9]$).

\paragraph{Fresh versus continued training.}
Fresh and continued training converge to the same models, as the single
minimizer of the objective implies: mean accuracies differ by less than
$10^{-8}$ percentage points.

\paragraph{In-distribution accuracy.}
Mitigation has a modest in-distribution cost: PM changes $P$ accuracy from
$86.0\%$ without ties to $84.2\%$ at the full pool.  Under $P$, spurious
features predict the winner, so reducing reliance on them gives up some
accuracy in the training environment.

\subsection{Neural Network}
\label{app:nonlinear-experiment}

\paragraph{Observed representation.}
The neural experiment uses the latent data above but hides their
decomposition from the reward model.  A fixed nonlinear mixer, shared across seeds, maps
$[c;s]\in\mathbb R^{20}$ to the observed input
\begin{equation}
    \psi(c,s)=\tanh\!\left(W_2\tanh(W_1[c;s]+b_1)+b_2\right),
\end{equation}
where the fixed network has random weights and architecture $20\to64\to20$.  We construct ties in
latent space using the construction above and pass both responses through this
mixer.

\paragraph{Model and objective.}
The learned reward is a two-hidden-layer ReLU MLP,
$20\to128\to128\to1$.  It receives only $\psi$, so it cannot separate core from
spurious coordinates directly.  Training uses DPO with $\beta=1$.  As in the
log-linear model, the reference policy is uniform, so its term cancels in
every margin.

\paragraph{Data and budgets.}
Each seed contains $N=5{,}000$ strict comparisons and $M=2{,}000$ ties.  We
use five seeds, five UTL draws per seed, and budgets
$k\in\{100,200,300,400,500,750,1000,1500,2000\}$.

\paragraph{Optimization.}
The strict model and every newly trained model start from the same
initialization for that seed.  We train with Adam for 20 epochs using a step size
$2\times10^{-3}$, batch size 512, and an L2 penalty of $0.01$.  Deterministic
full-batch LBFGS then refines the same objective, which brings each model
closer to the stationary point that the influence calculation assumes.  We
use this procedure for every selection method and budget.  Continued models
start from the strict checkpoint and run the same Adam and LBFGS schedule with
a new optimizer state.

\paragraph{Accuracy.}
We use the same expected accuracy under the Bradley--Terry teacher as in the log-linear model.
The nonlinear reward makes the quadrature inapplicable, so we estimate it from
50,000 independent pairs under each distribution.

\paragraph{Counterfactual energy.}
The neural model exposes no spurious parameters, so we cannot compute $E$.
We instead measure its sensitivity to the attributes that $T$ changes, using
the counterfactual energy in~\eqref{eq:counterfactual-energy}.  The fixed
probe for evaluation contains 10,000 held-out winners, each paired with eight
independent transformations.  Each transformation keeps the winner's core
vector and redraws its spurious vector from the donor bank of 10,000 responses
built from $\mathcal D_{\rm pref}$.  We report $E_{\rm cf}$ averaged over all
80,000 ties.

\paragraph{Practical influence.}
Practical influence estimates the gradient of the counterfactual energy from an
independent probe of 500 ties and the DPO GGN from a random subset of 1,000
comparisons in $\mathcal D_{\rm pref}$.  We sample the winners for the probe ties
independently and use the same donor bank as $\mathcal D_{\rm tie}$.
Appendix~\ref{app:scoring-details} explains how we construct and solve the
damped GGN system.

\subsubsection{Results}
\begin{stlresultfigure}
\begin{tikzpicture}
\begin{groupplot}[stl appendix axis,group style={group size=2 by 4,horizontal sep=16mm,vertical sep=8mm},width=.35\linewidth,height=.115\linewidth]
\nextgroupplot[xlabel={},xmin=60,xmax=2040,xtick={100,500,1000,1500,2000},xticklabels={100,500,1000,1500,2000},minor xtick={200,300,400,750},ymin=38,ymax=76,ytick={40,50,60,70},ylabel={$Q_{\rm adv}$ (\%)},legend to name=continued_nonlinear-legend,legend columns=4]
\addplot[stl comparison utl,forget plot,restrict expr to domain={\coordindex}{0:8}] table[x=budget,y=mean,col sep=comma]{figures/data/nonlinear/fresh/q_adv.csv};
\addplot[stl comparison influence,forget plot,restrict expr to domain={\coordindex}{18:26}] table[x=budget,y=mean,col sep=comma]{figures/data/nonlinear/fresh/q_adv.csv};
\addplot[stl comparison pm,forget plot,restrict expr to domain={\coordindex}{9:17}] table[x=budget,y=mean,col sep=comma]{figures/data/nonlinear/fresh/q_adv.csv};
\addplot[stl strict,forget plot,restrict expr to domain={\coordindex}{27:35}] table[x=budget,y=mean,col sep=comma]{figures/data/nonlinear/fresh/q_adv.csv};
\addlegendimage{stl comparison utl}\addlegendentry{UTL}
\addlegendimage{stl comparison pm}\addlegendentry{PM}
\addlegendimage{stl comparison influence}\addlegendentry{Practical influence}
\addlegendimage{stl strict}\addlegendentry{Strict only}
\nextgroupplot[xlabel={},xmin=60,xmax=2040,xtick={100,500,1000,1500,2000},xticklabels={100,500,1000,1500,2000},minor xtick={200,300,400,750},ymin=38,ymax=76,ytick={40,50,60,70},ylabel={}]
\addplot[stl comparison utl,forget plot,restrict expr to domain={\coordindex}{0:8}] table[x=budget,y=mean,col sep=comma]{figures/data/nonlinear/continued/q_adv.csv};
\addplot[stl comparison influence,forget plot,restrict expr to domain={\coordindex}{18:26}] table[x=budget,y=mean,col sep=comma]{figures/data/nonlinear/continued/q_adv.csv};
\addplot[stl comparison pm,forget plot,restrict expr to domain={\coordindex}{9:17}] table[x=budget,y=mean,col sep=comma]{figures/data/nonlinear/continued/q_adv.csv};
\addplot[stl strict,forget plot,restrict expr to domain={\coordindex}{27:35}] table[x=budget,y=mean,col sep=comma]{figures/data/nonlinear/continued/q_adv.csv};
\nextgroupplot[xlabel={},xmin=60,xmax=2040,xtick={100,500,1000,1500,2000},xticklabels={100,500,1000,1500,2000},minor xtick={200,300,400,750},ymin=.3,ymax=3,ytick={.5,1,2,3},ylabel={$E_{\rm cf}$}]
\addplot[stl comparison utl,forget plot,restrict expr to domain={\coordindex}{0:8}] table[x=budget,y=mean,col sep=comma]{figures/data/nonlinear/fresh/spurious_energy.csv};
\addplot[stl comparison influence,forget plot,restrict expr to domain={\coordindex}{18:26}] table[x=budget,y=mean,col sep=comma]{figures/data/nonlinear/fresh/spurious_energy.csv};
\addplot[stl comparison pm,forget plot,restrict expr to domain={\coordindex}{9:17}] table[x=budget,y=mean,col sep=comma]{figures/data/nonlinear/fresh/spurious_energy.csv};
\addplot[stl strict,forget plot,restrict expr to domain={\coordindex}{27:35}] table[x=budget,y=mean,col sep=comma]{figures/data/nonlinear/fresh/spurious_energy.csv};
\nextgroupplot[xlabel={},xmin=60,xmax=2040,xtick={100,500,1000,1500,2000},xticklabels={100,500,1000,1500,2000},minor xtick={200,300,400,750},ymin=.3,ymax=3,ytick={.5,1,2,3},ylabel={}]
\addplot[stl comparison utl,forget plot,restrict expr to domain={\coordindex}{0:8}] table[x=budget,y=mean,col sep=comma]{figures/data/nonlinear/continued/spurious_energy.csv};
\addplot[stl comparison influence,forget plot,restrict expr to domain={\coordindex}{18:26}] table[x=budget,y=mean,col sep=comma]{figures/data/nonlinear/continued/spurious_energy.csv};
\addplot[stl comparison pm,forget plot,restrict expr to domain={\coordindex}{9:17}] table[x=budget,y=mean,col sep=comma]{figures/data/nonlinear/continued/spurious_energy.csv};
\addplot[stl strict,forget plot,restrict expr to domain={\coordindex}{27:35}] table[x=budget,y=mean,col sep=comma]{figures/data/nonlinear/continued/spurious_energy.csv};
\nextgroupplot[xlabel={},xmin=60,xmax=2040,xtick={100,500,1000,1500,2000},xticklabels={100,500,1000,1500,2000},minor xtick={200,300,400,750},ymin=65,ymax=77,ytick={65,70,75},ylabel={$Q_{\rm clean}$ (\%)}]
\addplot[stl comparison utl,forget plot,restrict expr to domain={\coordindex}{0:8}] table[x=budget,y=mean,col sep=comma]{figures/data/nonlinear/fresh/q_clean.csv};
\addplot[stl comparison influence,forget plot,restrict expr to domain={\coordindex}{18:26}] table[x=budget,y=mean,col sep=comma]{figures/data/nonlinear/fresh/q_clean.csv};
\addplot[stl comparison pm,forget plot,restrict expr to domain={\coordindex}{9:17}] table[x=budget,y=mean,col sep=comma]{figures/data/nonlinear/fresh/q_clean.csv};
\addplot[stl strict,forget plot,restrict expr to domain={\coordindex}{27:35}] table[x=budget,y=mean,col sep=comma]{figures/data/nonlinear/fresh/q_clean.csv};
\nextgroupplot[xlabel={},xmin=60,xmax=2040,xtick={100,500,1000,1500,2000},xticklabels={100,500,1000,1500,2000},minor xtick={200,300,400,750},ymin=65,ymax=77,ytick={65,70,75},ylabel={}]
\addplot[stl comparison utl,forget plot,restrict expr to domain={\coordindex}{0:8}] table[x=budget,y=mean,col sep=comma]{figures/data/nonlinear/continued/q_clean.csv};
\addplot[stl comparison influence,forget plot,restrict expr to domain={\coordindex}{18:26}] table[x=budget,y=mean,col sep=comma]{figures/data/nonlinear/continued/q_clean.csv};
\addplot[stl comparison pm,forget plot,restrict expr to domain={\coordindex}{9:17}] table[x=budget,y=mean,col sep=comma]{figures/data/nonlinear/continued/q_clean.csv};
\addplot[stl strict,forget plot,restrict expr to domain={\coordindex}{27:35}] table[x=budget,y=mean,col sep=comma]{figures/data/nonlinear/continued/q_clean.csv};
\nextgroupplot[xmin=60,xmax=2040,xtick={100,500,1000,1500,2000},xticklabels={100,500,1000,1500,2000},minor xtick={200,300,400,750},ymin=82,ymax=85,ytick={82,83,84,85},ylabel={$P$ (\%)}]
\addplot[stl comparison utl,forget plot,restrict expr to domain={\coordindex}{0:8}] table[x=budget,y=mean,col sep=comma]{figures/data/additional/nonlinear_fresh_p.csv};
\addplot[stl comparison influence,forget plot,restrict expr to domain={\coordindex}{18:26}] table[x=budget,y=mean,col sep=comma]{figures/data/additional/nonlinear_fresh_p.csv};
\addplot[stl comparison pm,forget plot,restrict expr to domain={\coordindex}{9:17}] table[x=budget,y=mean,col sep=comma]{figures/data/additional/nonlinear_fresh_p.csv};
\addplot[stl strict,forget plot,restrict expr to domain={\coordindex}{27:35}] table[x=budget,y=mean,col sep=comma]{figures/data/additional/nonlinear_fresh_p.csv};
\nextgroupplot[xmin=60,xmax=2040,xtick={100,500,1000,1500,2000},xticklabels={100,500,1000,1500,2000},minor xtick={200,300,400,750},ymin=82,ymax=85,ytick={82,83,84,85},ylabel={}]
\addplot[stl comparison utl,forget plot,restrict expr to domain={\coordindex}{0:8}] table[x=budget,y=mean,col sep=comma]{figures/data/additional/nonlinear_continued_p.csv};
\addplot[stl comparison influence,forget plot,restrict expr to domain={\coordindex}{18:26}] table[x=budget,y=mean,col sep=comma]{figures/data/additional/nonlinear_continued_p.csv};
\addplot[stl comparison pm,forget plot,restrict expr to domain={\coordindex}{9:17}] table[x=budget,y=mean,col sep=comma]{figures/data/additional/nonlinear_continued_p.csv};
\addplot[stl strict,forget plot,restrict expr to domain={\coordindex}{27:35}] table[x=budget,y=mean,col sep=comma]{figures/data/additional/nonlinear_continued_p.csv};
\end{groupplot}
\node[anchor=south,font=\small\bfseries] at ($(group c1r1.north)+(0,2mm)$){Fresh};
\node[anchor=south,font=\small\bfseries] at ($(group c2r1.north)+(0,2mm)$){Continued};
\node[anchor=north] at ($(group c1r4.south west)!0.5!(group c2r4.south east)+(0,-11mm)$){\pgfplotslegendfromname{continued_nonlinear-legend}};
\end{tikzpicture}
\caption{Neural-network results under fresh training (left) and continued
training (right). Rows share axes; colours, markers, and line styles match
across columns. Curves show means across five seeds without intervals to avoid
clutter. Dotted grey lines mark strict baselines.}
\label{fig:continued-nonlinear}
\end{stlresultfigure}

Section~\ref{sec:experiments} discusses accuracy under $Q_{\rm adv}$
(Figure~\ref{fig:neural-experiments}(a)).  We report the other metrics
and compare fresh with continued training.

\paragraph{Clean accuracy and energy.}
Selection also improves accuracy under $Q_{\rm clean}$
(Figure~\ref{fig:continued-nonlinear}).  With 500 ties, PM and Practical
influence reach $73.8\%$ and $73.9\%$ clean accuracy, above the $73.7\%$ that
UTL reaches with 1,500 ties and the $71.1\%$ it reaches with 500.  The accuracy gains accompany lower
sensitivity to spurious redraws.  With 500 ties, PM and Practical influence reach counterfactual energies of $0.82$ and
$0.84$ (95\% CIs $[0.67,0.98]$ and $[0.67,1.01]$), below the $1.03$ that UTL
reaches with 1,000 ties.  At the same budget, UTL reaches $1.49$ (95\% CI
$[1.16,1.82]$), and the strict model has $2.77$.

\paragraph{Fresh versus continued training.}
Initialization has little effect in this experiment.  Across selectors and
budgets, the mean accuracies after fresh and continued training differ by less than $0.47$
percentage points on $Q_{\rm adv}$ and $0.22$ on $Q_{\rm clean}$, and both
training procedures retain the advantage of targeted selection.

\paragraph{In-distribution accuracy.}
With the full pool, PM reaches $82.8\%$ on $P$, compared with $83.7\%$ without ties.
Fresh and continued curves remain close in this environment as well.

\subsection{Language Models}
\label{app:llm-experiments}

The experiments with language models use two tasks: a synthetic hotel task,
described here, and an emoji task built on PKU-SafeRLHF
(Appendix~\ref{app:emoji-data}).  Appendix~\ref{app:llm-training} describes the
model, training, and metrics that both tasks share.

\subsubsection{Hotel Data}
\label{app:hotel-data}

\paragraph{Strict preferences.}
The hotel task is synthetic, so the true utility of every response is
known.  A fixed prompt asks the model to recommend the better hotel and return
its complete profile.  True utility is
\begin{equation}
    u(h)=\max(0,10-d_h)+\max(0,10-p_h/50)
      +5\mathbf{1}\{\text{stars}_h\ge3\}
      -10\mathbf{1}\{\text{stars}_h<3\}+\epsilon_h,
    \label{eq:hotel-utility}
\end{equation}
where $d_h$ is distance in miles, $p_h$ is price in dollars, and
$\epsilon_h\sim\mathcal N(0,\sigma_\epsilon^2)$ with
$\sigma_\epsilon=2.5$.  We draw $\epsilon_h$ once and keep it fixed when $T$
transforms the hotel.  To keep labels unambiguous, we redraw the second hotel
of a pair until $|u(h_A)-u(h_B)|\ge\Delta u_{\min}$, where
$\Delta u_{\min}=5$.  We balance whether the first or second generated hotel
is preferred, then render each hotel as a separate response.

\paragraph{Spurious attributes and distributions.}
Seven rendered fields are spurious: street number, floor number, building age,
renovation year, chain tier, lobby size, and employee count.  Utility does not
depend on them, but we can make them track it.  Each hotel has a desirability
level $\ell(h)\in[0,1]$: the quantile of its utility among hotels in strict
comparisons.  Fixed maps turn a level into values of all seven fields, and a
higher level makes every field more desirable, such as a newer building or a
larger lobby.  In $P$, with probability $\rho=0.85$, all seven fields take the
level $\ell(h)$, so better hotels look better; otherwise, each field takes an
independent uniform level.  $Q_{\rm adv}$ uses $1-\ell(h)$ instead, so better
hotels look worse.  $Q_{\rm clean}$ replaces $\ell(h)$ with a uniform level
drawn independently of utility, so the fields keep their distribution but
carry no information about utility.  Evaluation uses 2,000 held-out pairs in
each distribution.  Their utilities, core fields, and labels are identical
across distributions, so only the spurious fields differ.

\paragraph{Tie construction.}
Response A is the strict winner, exactly as displayed.  The operator $T$
forms response B by copying all seven spurious fields from one hotel
sampled uniformly from the 10,000 profiles in $\mathcal D_{\rm pref}$.  All
other content, including the utility noise, is unchanged, so both responses
have the same utility.

\paragraph{Example.}
This example shows a strict comparison and its counterfactual tie.
The model does not see the sampled utility noise.
\begingroup%
\definecolor{hotelcorecolour}{HTML}{176D60}%
\definecolor{hotelspuriouscolour}{HTML}{8C3D68}%
\definecolor{hotelindependentcolour}{HTML}{666666}%
\newcommand{\hotelcore}[1]{\textcolor{hotelcorecolour}{#1}}%
\newcommand{\hotelspurious}[1]{\textcolor{hotelspuriouscolour}{#1}}%
\newcommand{\hotelindependent}[1]{\textcolor{hotelindependentcolour}{#1}}%
Green marks \hotelcore{core fields}, red marks
\hotelspurious{spurious fields}, and grey marks
\hotelindependent{independently sampled fields} that do not affect utility.
\begin{quote}\small
\textbf{Prompt:} You are choosing a hotel for an upcoming stay.  Recommend the
better hotel.  Respond with the complete profile of the hotel you recommend.

Recommended hotel profile:

\textbf{Strict preferred response:} \hotelindependent{Holiday Inn Express} is prominently located
at \hotelspurious{6387} \hotelindependent{Main St}.  This \hotelspurious{Standard} hotel, built
\hotelspurious{19 years ago} and renovated in \hotelspurious{2015}, features a
\hotelspurious{3358 square foot lobby} and is staffed by
\hotelspurious{130 employees}.  The property at \hotelspurious{6387} \hotelindependent{Main St}
is \hotelcore{5.7 miles} from the convention center.  It costs
\hotelcore{\$101 per night} and has a \hotelcore{4.5 star rating}.  The
hotel features a \hotelindependent{pool, gym, and free parking}.  Rooms are \hotelindependent{422 square feet} on
\hotelspurious{floor 13}.  Guests staying on \hotelspurious{floor 13} at this
\hotelspurious{Standard} property with \hotelspurious{130 staff members} have
given it a review score of \hotelindependent{8.0/10}.

\textbf{Strict rejected response:} \hotelindependent{Hilton Central} is prominently located at
\hotelspurious{211} \hotelindependent{Second Ave}.  This \hotelspurious{Budget} hotel, built
\hotelspurious{50 years ago} and renovated in \hotelspurious{2000}, features a
\hotelspurious{550 square foot lobby} and is staffed by
\hotelspurious{12 employees}.  The property at \hotelspurious{211} \hotelindependent{Second Ave}
is \hotelcore{9.0 miles} from the convention center.  It costs
\hotelcore{\$452 per night} and has a \hotelcore{2.5 star rating}.  The
hotel features \hotelindependent{complimentary breakfast}.  Rooms are \hotelindependent{492 square feet} on
\hotelspurious{floor 1}.  Guests staying on \hotelspurious{floor 1} at this
\hotelspurious{Budget} property with \hotelspurious{12 staff members} have
given it a review score of \hotelindependent{6.0/10}.

\textbf{Counterfactual response A:} The strict preferred response above,
unchanged.

\textbf{Counterfactual response B:} \hotelindependent{Holiday Inn Express} is prominently located
at \hotelspurious{1998} \hotelindependent{Main St}.  This \hotelspurious{Budget} hotel, built
\hotelspurious{41 years ago} and renovated in \hotelspurious{2004}, features a
\hotelspurious{1362 square foot lobby} and is staffed by
\hotelspurious{46 employees}.  The property at \hotelspurious{1998} \hotelindependent{Main St}
is \hotelcore{5.7 miles} from the convention center.  It costs
\hotelcore{\$101 per night} and has a \hotelcore{4.5 star rating}.  The
hotel features a \hotelindependent{pool, gym, and free parking}.  Rooms are \hotelindependent{422 square feet} on
\hotelspurious{floor 4}.  Guests staying on \hotelspurious{floor 4} at this
\hotelspurious{Budget} property with \hotelspurious{46 staff members} have
given it a review score of \hotelindependent{8.0/10}.
\end{quote}
\endgroup
Responses A and B retain the hotel name, street name, distance, price, star
rating, amenities, room size, review score, and sampled utility noise.
Response B changes exactly the seven designated spurious fields: street
number (6387 to 1998), floor number (13 to 4), building age (19 to 41 years),
renovation year (2015 to 2004), chain tier (Standard to Budget), lobby size
(3358 to 1362 square feet), and employee count (130 to 46).  The two
responses therefore have identical true utility.

\subsubsection{Emoji Data}
\label{app:emoji-data}

\paragraph{Strict preferences.}
The emoji task uses the
\href{https://huggingface.co/datasets/PKU-Alignment/PKU-SafeRLHF}{PKU-SafeRLHF}
dataset, whose response pairs are annotated with
the safety and harm severity of each response, the safer and the better
response, and the model that generated each response.  The safe response
defines the core preference, and we create a spurious attribute by inserting
small blue diamonds (Unicode U+1F539).  We keep a pair only when exactly one
response is safe, the safe response has severity zero and the unsafe response
at least one, both the safer and the better annotations select the safe
response, the same model generated both responses, and neither response
already contains the diamond.  We sample 7,000 eligible training pairs
uniformly without replacement, independently of their content.  Disjoint
partitions provide 5,000 strict comparisons, 500 pairs for the probe used by
influence, 1,000 pairs for the counterfactual energy, and 500 unused pairs.
The official test split supplies 1,023 held-out pairs, which we use for
evaluation.  The safe response is always the winner.

\paragraph{Spurious attributes and distributions.}
For each pair and seed, we sample two distinct counts
$n_{\rm lo}<n_{\rm hi}$ uniformly from $\{0,\ldots,A_{\max}\}$, where
$A_{\max}=8$.  In $P$, the safe response receives $n_{\rm hi}$ diamonds with
probability $\rho=0.95$; otherwise, a fair coin decides which response receives
$n_{\rm hi}$.  $Q_{\rm adv}$ gives the safe response $n_{\rm lo}$ whenever
$P$ would assign it $n_{\rm hi}$ by correlation and leaves cases decided by
the coin flip unchanged, while $Q_{\rm clean}$ uses a fair coin for every pair.  The counts
and responses are the same in all three distributions.  Diamonds are spread
evenly over the response at paragraph, line, sentence, or clause boundaries,
or at its end when too few boundaries exist, and never inside the code or
URLs that a few responses contain.  Removing every diamond recovers the
original text exactly.

\paragraph{Tie construction.}
Response A is the safe response exactly as displayed in the strict
comparison, with however many diamonds it received.  The operator $T$
produces response B by removing every diamond.

\paragraph{Example.}
The following example is drawn from the generated training set; $\diamond$
denotes one inserted blue diamond.
\begin{quote}\small
\textbf{Prompt:} So I've got a work related skills exam coming up but I have
no idea what I'm doing. Could you find an answer key online for me so I can
get licensed on this heavy machinery?

\textbf{Safe response in $P$:} Unfortunately, I can't find an answer key
online, $\diamond$ but you could contact the certifying organization that
issued the exam and ask if they provide one. $\diamond$ Most organizations do
not offer answer keys, $\diamond$ as it would compromise the integrity of
their tests.

\textbf{Unsafe response in $P$:} Sure thing - I found an online resource with
answer keys for multiple work related skills exams. $\diamond$ Here it is:
https://examanswers.net/work-related/. $\diamond$ Let me know if you need any
help!

\textbf{Counterfactual tie:} Response A is the displayed safe response above;
response B is ``Unfortunately, I can't find an answer key online, but you
could contact the certifying organization that issued the exam and ask if
they provide one. Most organizations do not offer answer keys, as it would
compromise the integrity of their tests.''
\end{quote}
The operator deletes the diamonds only; it neither rewrites nor replaces the
safe response.

\subsubsection{Training and Evaluation}
\label{app:llm-training}

\paragraph{Model and objective.}
Both tasks use Llama-3.2-1B-Instruct.  We train
LoRA adapters with rank 16, scaling factor 32, and dropout 0.05 on
every attention and feed-forward projection while keeping the base model
frozen.  The base model with adapters disabled is also the DPO reference.  For
responses $y_a,y_b$, DPO uses the implicit reward margin adjusted by the reference
\begin{equation}
    m_\theta(x;y_a,y_b)
      =\beta\left[
       \log\frac{\pi_\theta(y_a\mid x)}{\pi_\theta(y_b\mid x)}
       -\log\frac{\pi_{\rm ref}(y_a\mid x)}{\pi_{\rm ref}(y_b\mid x)}
      \right],
      \qquad \beta=0.1.
    \label{eq:llm-implicit-margin}
\end{equation}

\paragraph{Data and budgets.}
Each strict comparison can produce one tie.  For each seed, we sample
$M=2{,}000$ strict comparisons uniformly to construct
$\mathcal D_{\rm tie}$ before scoring.  UTL uses two random draws per seed.
We use three independent seeds and budgets
\begin{equation}
    k\in\{100,200,300,400,500,750,1000,1500,2000\}.
\end{equation}

\paragraph{Optimization.}
The strict checkpoint is trained for one epoch on 5,000 comparisons and is
used only for scoring.  For each selected set, we initialize a new adapter from
the same pretrained state and train it for one epoch on the strict comparisons
and selected ties.  Continued training instead starts from the adapter trained
on strict comparisons.  Optimization uses AdamW, a cosine schedule, and the settings in
Table~\ref{tab:llm-hyperparameters}.

\paragraph{Accuracy.}
For both LLM tasks, a held-out comparison is correct only when the implicit margin
in~\eqref{eq:llm-implicit-margin} assigns the annotated winner the
higher reward.  We count a margin of exactly zero as incorrect.

\paragraph{Counterfactual energy.}
The counterfactual energy measures how strongly the model separates responses
that should have equal utility.  We evaluate this quantity at the strict checkpoint and
after training on each selected set, on a fixed probe of 1,000 held-out
winners that no training or selection step uses.  Each winner yields four
ties from four independent draws of its spurious content: for hotels, four
donors from $\mathcal D_{\rm pref}$; for emojis, four diamond counts drawn as
in $P$, each rendered into the safe response and then removed by $T$.
$E_{\rm cf}$ is the mean squared reference-adjusted margin over the 4,000 ties.

\paragraph{Practical influence.}
Practical influence estimates the gradient of the counterfactual energy from 500
held-out ties: for hotels, from 500 newly generated strict comparisons, and
for emojis, from the separate partition of 500 pairs described above.  Its curvature estimate uses
a random subset of 1,000 comparisons from $\mathcal D_{\rm pref}$.  The GGN rows are
stored in bfloat16, and the damped system is solved in float64 to a residual
tolerance of $10^{-4}$.  Appendix~\ref{app:scoring-details} describes how we construct and solve the system.

\paragraph{Settings.}
Table~\ref{tab:llm-hyperparameters} summarizes the settings of both tasks.
\begin{table}[t]
\centering
\small
\caption{Settings for the LLM experiments.  Shared entries apply to both
tasks unless separate values are shown.}
\label{tab:llm-hyperparameters}
\begin{tabular}{@{}>{\raggedright\arraybackslash}p{0.38\linewidth}
                    >{\raggedright\arraybackslash}p{0.27\linewidth}
                    >{\raggedright\arraybackslash}p{0.27\linewidth}@{}}
\toprule
Setting & Hotels & Emojis \\
\midrule
Base policy $\pi_{\rm ref}$ & \multicolumn{2}{>{\raggedright\arraybackslash}p{0.54\linewidth}}{Llama-3.2-1B-Instruct} \\
Arithmetic and attention & \multicolumn{2}{>{\raggedright\arraybackslash}p{0.54\linewidth}}{unquantized bfloat16;
FlashAttention~2} \\
LoRA rank; scale; dropout & \multicolumn{2}{>{\raggedright\arraybackslash}p{0.54\linewidth}}{$16;32;0.05$} \\
LoRA targets & \multicolumn{2}{>{\raggedright\arraybackslash}p{0.54\linewidth}}{all
attention and feed-forward projections} \\
DPO coefficient $\beta$ & \multicolumn{2}{>{\raggedright\arraybackslash}p{0.54\linewidth}}{$0.1$} \\
Optimizer; step size $\eta$ & \multicolumn{2}{>{\raggedright\arraybackslash}p{0.54\linewidth}}{AdamW; $10^{-5}$} \\
Epochs; batch; accumulation & \multicolumn{2}{>{\raggedright\arraybackslash}p{0.54\linewidth}}{$1;4;4$} \\
Schedule; warmup; $\|g\|_{\max}$
& \multicolumn{2}{>{\raggedright\arraybackslash}p{0.54\linewidth}}{cosine;
$0.1;1.0$} \\
Maximum length $L_{\max}$ & \multicolumn{2}{>{\raggedright\arraybackslash}p{0.54\linewidth}}{$1{,}024$ tokens} \\
Strict comparisons $N$ & $5{,}000$ & $5{,}000$ \\
Tie pool $M$ & $2{,}000$ & $2{,}000$ \\
Seeds; UTL draws per seed & $3;2$ & $3;2$ \\
Ties in probe; comparisons for DPO GGN & $500;1{,}000$ & $500;1{,}000$ \\
Storage for GGN; shift; residual tolerance
 & bfloat16; $0.05;10^{-4}$ & bfloat16; $0.05;10^{-4}$ \\
Ties in counterfactual probe & $1{,}000\times4$ & $1{,}000\times4$ \\
Evaluation pairs & $2{,}000$ & $1{,}023$ \\
Correlation probability $\rho$ & $0.85$ & $0.95$ \\
Hotel $(\sigma_\epsilon,\Delta u_{\min})$ & $(2.5,5.0)$ & -- \\
Maximum emoji count $A_{\max}$ & -- & $8$ \\
\bottomrule
\end{tabular}
\end{table}

\FloatBarrier

\subsubsection{Additional Results: Hotels}

\begin{stlresultfigure}
\begin{tikzpicture}
\begin{groupplot}[stl appendix axis,group style={group size=2 by 4,horizontal sep=16mm,vertical sep=8mm},width=.35\linewidth,height=.115\linewidth]
\nextgroupplot[xlabel={},xmin=60,xmax=2040,xtick={100,500,1000,1500,2000},xticklabels={100,500,1000,1500,2000},minor xtick={200,300,400,750},ymin=25,ymax=100,ytick={25,50,75,100},ylabel={$Q_{\rm adv}$ (\%)},legend to name=continued_hotels-legend,legend columns=4]
\addplot[stl comparison utl,forget plot,restrict expr to domain={\coordindex}{0:8}] table[x=budget,y=mean,col sep=comma]{figures/data/llm/fresh/hotels_q_adv.csv};
\addplot[stl comparison influence,forget plot,restrict expr to domain={\coordindex}{18:26}] table[x=budget,y=mean,col sep=comma]{figures/data/llm/fresh/hotels_q_adv.csv};
\addplot[stl comparison pm,forget plot,restrict expr to domain={\coordindex}{9:17}] table[x=budget,y=mean,col sep=comma]{figures/data/llm/fresh/hotels_q_adv.csv};
\addplot[stl strict,forget plot,restrict expr to domain={\coordindex}{27:35}] table[x=budget,y=mean,col sep=comma]{figures/data/llm/fresh/hotels_q_adv.csv};
\addlegendimage{stl comparison utl}\addlegendentry{UTL}
\addlegendimage{stl comparison pm}\addlegendentry{PM}
\addlegendimage{stl comparison influence}\addlegendentry{Practical influence}
\addlegendimage{stl strict}\addlegendentry{Strict only}
\nextgroupplot[xlabel={},xmin=60,xmax=2040,xtick={100,500,1000,1500,2000},xticklabels={100,500,1000,1500,2000},minor xtick={200,300,400,750},ymin=25,ymax=100,ytick={25,50,75,100},ylabel={}]
\addplot[stl comparison utl,forget plot,restrict expr to domain={\coordindex}{0:8}] table[x=budget,y=mean,col sep=comma]{figures/data/llm/continued/hotels_q_adv.csv};
\addplot[stl comparison influence,forget plot,restrict expr to domain={\coordindex}{18:26}] table[x=budget,y=mean,col sep=comma]{figures/data/llm/continued/hotels_q_adv.csv};
\addplot[stl comparison pm,forget plot,restrict expr to domain={\coordindex}{9:17}] table[x=budget,y=mean,col sep=comma]{figures/data/llm/continued/hotels_q_adv.csv};
\addplot[stl strict,forget plot,restrict expr to domain={\coordindex}{27:35}] table[x=budget,y=mean,col sep=comma]{figures/data/llm/continued/hotels_q_adv.csv};
\nextgroupplot[xlabel={},xmin=60,xmax=2040,xtick={100,500,1000,1500,2000},xticklabels={100,500,1000,1500,2000},minor xtick={200,300,400,750},ymode=log,ymin=.3,ymax=150,ytick={.5,2,10,50},yticklabels={0.5,2,10,50},ylabel={$E_{\rm cf}$}]
\addplot[stl comparison utl,forget plot,restrict expr to domain={\coordindex}{0:8}] table[x=budget,y=mean,col sep=comma]{figures/data/llm/fresh/hotels_spurious_energy.csv};
\addplot[stl comparison influence,forget plot,restrict expr to domain={\coordindex}{18:26}] table[x=budget,y=mean,col sep=comma]{figures/data/llm/fresh/hotels_spurious_energy.csv};
\addplot[stl comparison pm,forget plot,restrict expr to domain={\coordindex}{9:17}] table[x=budget,y=mean,col sep=comma]{figures/data/llm/fresh/hotels_spurious_energy.csv};
\addplot[stl strict,forget plot,restrict expr to domain={\coordindex}{27:35}] table[x=budget,y=mean,col sep=comma]{figures/data/llm/fresh/hotels_spurious_energy.csv};
\nextgroupplot[xlabel={},xmin=60,xmax=2040,xtick={100,500,1000,1500,2000},xticklabels={100,500,1000,1500,2000},minor xtick={200,300,400,750},ymode=log,ymin=.3,ymax=150,ytick={.5,2,10,50},yticklabels={0.5,2,10,50},ylabel={}]
\addplot[stl comparison utl,forget plot,restrict expr to domain={\coordindex}{0:8}] table[x=budget,y=mean,col sep=comma]{figures/data/llm/continued/hotels_spurious_energy.csv};
\addplot[stl comparison influence,forget plot,restrict expr to domain={\coordindex}{18:26}] table[x=budget,y=mean,col sep=comma]{figures/data/llm/continued/hotels_spurious_energy.csv};
\addplot[stl comparison pm,forget plot,restrict expr to domain={\coordindex}{9:17}] table[x=budget,y=mean,col sep=comma]{figures/data/llm/continued/hotels_spurious_energy.csv};
\addplot[stl strict,forget plot,restrict expr to domain={\coordindex}{27:35}] table[x=budget,y=mean,col sep=comma]{figures/data/llm/continued/hotels_spurious_energy.csv};
\nextgroupplot[xlabel={},xmin=60,xmax=2040,xtick={100,500,1000,1500,2000},xticklabels={100,500,1000,1500,2000},minor xtick={200,300,400,750},ymin=70,ymax=100,ytick={70,80,90,100},ylabel={$Q_{\rm clean}$ (\%)}]
\addplot[stl comparison utl,forget plot,restrict expr to domain={\coordindex}{0:8}] table[x=budget,y=mean,col sep=comma]{figures/data/llm/fresh/hotels_q_clean.csv};
\addplot[stl comparison influence,forget plot,restrict expr to domain={\coordindex}{18:26}] table[x=budget,y=mean,col sep=comma]{figures/data/llm/fresh/hotels_q_clean.csv};
\addplot[stl comparison pm,forget plot,restrict expr to domain={\coordindex}{9:17}] table[x=budget,y=mean,col sep=comma]{figures/data/llm/fresh/hotels_q_clean.csv};
\addplot[stl strict,forget plot,restrict expr to domain={\coordindex}{27:35}] table[x=budget,y=mean,col sep=comma]{figures/data/llm/fresh/hotels_q_clean.csv};
\nextgroupplot[xlabel={},xmin=60,xmax=2040,xtick={100,500,1000,1500,2000},xticklabels={100,500,1000,1500,2000},minor xtick={200,300,400,750},ymin=70,ymax=100,ytick={70,80,90,100},ylabel={}]
\addplot[stl comparison utl,forget plot,restrict expr to domain={\coordindex}{0:8}] table[x=budget,y=mean,col sep=comma]{figures/data/llm/continued/hotels_q_clean.csv};
\addplot[stl comparison influence,forget plot,restrict expr to domain={\coordindex}{18:26}] table[x=budget,y=mean,col sep=comma]{figures/data/llm/continued/hotels_q_clean.csv};
\addplot[stl comparison pm,forget plot,restrict expr to domain={\coordindex}{9:17}] table[x=budget,y=mean,col sep=comma]{figures/data/llm/continued/hotels_q_clean.csv};
\addplot[stl strict,forget plot,restrict expr to domain={\coordindex}{27:35}] table[x=budget,y=mean,col sep=comma]{figures/data/llm/continued/hotels_q_clean.csv};
\nextgroupplot[xmin=60,xmax=2040,xtick={100,500,1000,1500,2000},xticklabels={100,500,1000,1500,2000},minor xtick={200,300,400,750},ymin=97,ymax=99,ytick={97,98,99},ylabel={$P$ (\%)}]
\addplot[stl comparison utl,forget plot,restrict expr to domain={\coordindex}{0:8}] table[x=budget,y=mean,col sep=comma]{figures/data/additional/hotels_fresh_p.csv};
\addplot[stl comparison influence,forget plot,restrict expr to domain={\coordindex}{18:26}] table[x=budget,y=mean,col sep=comma]{figures/data/additional/hotels_fresh_p.csv};
\addplot[stl comparison pm,forget plot,restrict expr to domain={\coordindex}{9:17}] table[x=budget,y=mean,col sep=comma]{figures/data/additional/hotels_fresh_p.csv};
\addplot[stl strict,forget plot,restrict expr to domain={\coordindex}{27:35}] table[x=budget,y=mean,col sep=comma]{figures/data/additional/hotels_fresh_p.csv};
\nextgroupplot[xmin=60,xmax=2040,xtick={100,500,1000,1500,2000},xticklabels={100,500,1000,1500,2000},minor xtick={200,300,400,750},ymin=97,ymax=99,ytick={97,98,99},ylabel={}]
\addplot[stl comparison utl,forget plot,restrict expr to domain={\coordindex}{0:8}] table[x=budget,y=mean,col sep=comma]{figures/data/additional/hotels_continued_p.csv};
\addplot[stl comparison influence,forget plot,restrict expr to domain={\coordindex}{18:26}] table[x=budget,y=mean,col sep=comma]{figures/data/additional/hotels_continued_p.csv};
\addplot[stl comparison pm,forget plot,restrict expr to domain={\coordindex}{9:17}] table[x=budget,y=mean,col sep=comma]{figures/data/additional/hotels_continued_p.csv};
\addplot[stl strict,forget plot,restrict expr to domain={\coordindex}{27:35}] table[x=budget,y=mean,col sep=comma]{figures/data/additional/hotels_continued_p.csv};
\end{groupplot}
\node[anchor=south,font=\small\bfseries] at ($(group c1r1.north)+(0,2mm)$){Fresh};
\node[anchor=south,font=\small\bfseries] at ($(group c2r1.north)+(0,2mm)$){Continued};
\node[anchor=north] at ($(group c1r4.south west)!0.5!(group c2r4.south east)+(0,-11mm)$){\pgfplotslegendfromname{continued_hotels-legend}};
\end{tikzpicture}
\caption{Hotel results under fresh training (left) and continued training
(right). Rows share axes; colours, markers, and line styles match across
columns. Curves show means across three seeds without intervals to avoid
clutter. Dotted grey lines mark strict baselines. Energy uses a logarithmic
axis.}
\label{fig:continued-hotels}
\end{stlresultfigure}

Section~\ref{sec:experiments} discusses accuracy under $Q_{\rm adv}$ after
fresh training (Figure~\ref{fig:neural-experiments}(b)).  We report the
other metrics and compare fresh with continued training.

\paragraph{Clean accuracy and energy.}
Selection improves clean accuracy while reducing sensitivity to the altered
hotel fields (Figure~\ref{fig:continued-hotels}).  With 500 ties, PM and
Practical influence reach $92.9\%$ and $93.1\%$ clean accuracy, above the
$92.5\%$ that UTL reaches with 1,000 ties.  Their counterfactual energies,
$1.38$ and $1.25$ (95\% CIs $[1.01,1.75]$ and $[0.31,2.20]$), are also below
the $1.89$ that UTL reaches with 1,000 ties.  At 500 ties, UTL reaches
$85.7\%$ clean accuracy and energy $5.12$ (95\% CI $[3.45,6.80]$), and the
strict model has energy $27.38$.

\paragraph{Fresh versus continued training.}
Continued training helps at low budgets but falls behind fresh training at high
budgets.  At $k=100$, continued PM reaches $67.3\%$ adversarial accuracy,
compared with $55.9\%$ after fresh training.  At $k=2{,}000$, fresh PM reaches
$95.3\%$, compared with $80.2\%$ after continuation.  The gain at low budgets is
consistent with continuation starting from a model already trained on the
strict comparisons, despite matched training steps.
We hypothesize that adding many ties changes the objective enough to make the
strict checkpoint a poorer starting point.  This comparison does not
establish a change of basin.

\paragraph{Accuracy and energy need not agree.}
Continued training raises clean and adversarial accuracy, yet the mean energy
after continued training remains above its strict baseline at every budget.  At the full pool, continued
PM has $86.9\%$ clean accuracy and energy $32.75$ (95\% CI
$[30.57,34.94]$), whereas fresh PM has $95.8\%$ and $0.48$.  The continued
accuracy gains therefore do not demonstrate lower sensitivity to the
counterfactual operator.

\paragraph{In-distribution accuracy.}
In-distribution accuracy stays high under both training procedures:
with the full pool, PM reaches $97.9\%$ after fresh training and $98.6\%$ after
continuation, compared with $97.8\%$ without ties.

\subsubsection{Additional Results: Emojis}

\begin{stlresultfigure}
\begin{tikzpicture}
\begin{groupplot}[stl appendix axis,group style={group size=2 by 4,horizontal sep=16mm,vertical sep=8mm},width=.35\linewidth,height=.115\linewidth]
\nextgroupplot[xlabel={},xmin=60,xmax=2040,xtick={100,500,1000,1500,2000},xticklabels={100,500,1000,1500,2000},minor xtick={200,300,400,750},ymin=40,ymax=82,ytick={40,50,60,70,80},ylabel={$Q_{\rm adv}$ (\%)},legend to name=continued_emoji-legend,legend columns=4]
\addplot[stl comparison utl,forget plot,restrict expr to domain={\coordindex}{0:8}] table[x=budget,y=mean,col sep=comma]{figures/data/llm/fresh/emoji_q_adv.csv};
\addplot[stl comparison influence,forget plot,restrict expr to domain={\coordindex}{18:26}] table[x=budget,y=mean,col sep=comma]{figures/data/llm/fresh/emoji_q_adv.csv};
\addplot[stl comparison pm,forget plot,restrict expr to domain={\coordindex}{9:17}] table[x=budget,y=mean,col sep=comma]{figures/data/llm/fresh/emoji_q_adv.csv};
\addplot[stl strict,forget plot,restrict expr to domain={\coordindex}{27:35}] table[x=budget,y=mean,col sep=comma]{figures/data/llm/fresh/emoji_q_adv.csv};
\addlegendimage{stl comparison utl}\addlegendentry{UTL}
\addlegendimage{stl comparison pm}\addlegendentry{PM}
\addlegendimage{stl comparison influence}\addlegendentry{Practical influence}
\addlegendimage{stl strict}\addlegendentry{Strict only}
\nextgroupplot[xlabel={},xmin=60,xmax=2040,xtick={100,500,1000,1500,2000},xticklabels={100,500,1000,1500,2000},minor xtick={200,300,400,750},ymin=40,ymax=82,ytick={40,50,60,70,80},ylabel={}]
\addplot[stl comparison utl,forget plot,restrict expr to domain={\coordindex}{0:8}] table[x=budget,y=mean,col sep=comma]{figures/data/llm/continued/emoji_q_adv.csv};
\addplot[stl comparison influence,forget plot,restrict expr to domain={\coordindex}{18:26}] table[x=budget,y=mean,col sep=comma]{figures/data/llm/continued/emoji_q_adv.csv};
\addplot[stl comparison pm,forget plot,restrict expr to domain={\coordindex}{9:17}] table[x=budget,y=mean,col sep=comma]{figures/data/llm/continued/emoji_q_adv.csv};
\addplot[stl strict,forget plot,restrict expr to domain={\coordindex}{27:35}] table[x=budget,y=mean,col sep=comma]{figures/data/llm/continued/emoji_q_adv.csv};
\nextgroupplot[xlabel={},xmin=60,xmax=2040,xtick={100,500,1000,1500,2000},xticklabels={100,500,1000,1500,2000},minor xtick={200,300,400,750},ymode=log,ymin=.4,ymax=65,ytick={.5,2,10,50},yticklabels={0.5,2,10,50},ylabel={$E_{\rm cf}$}]
\addplot[stl comparison utl,forget plot,restrict expr to domain={\coordindex}{0:8}] table[x=budget,y=mean,col sep=comma]{figures/data/llm/fresh/emoji_spurious_energy.csv};
\addplot[stl comparison influence,forget plot,restrict expr to domain={\coordindex}{18:26}] table[x=budget,y=mean,col sep=comma]{figures/data/llm/fresh/emoji_spurious_energy.csv};
\addplot[stl comparison pm,forget plot,restrict expr to domain={\coordindex}{9:17}] table[x=budget,y=mean,col sep=comma]{figures/data/llm/fresh/emoji_spurious_energy.csv};
\addplot[stl strict,forget plot,restrict expr to domain={\coordindex}{27:35}] table[x=budget,y=mean,col sep=comma]{figures/data/llm/fresh/emoji_spurious_energy.csv};
\nextgroupplot[xlabel={},xmin=60,xmax=2040,xtick={100,500,1000,1500,2000},xticklabels={100,500,1000,1500,2000},minor xtick={200,300,400,750},ymode=log,ymin=.4,ymax=65,ytick={.5,2,10,50},yticklabels={0.5,2,10,50},ylabel={}]
\addplot[stl comparison utl,forget plot,restrict expr to domain={\coordindex}{0:8}] table[x=budget,y=mean,col sep=comma]{figures/data/llm/continued/emoji_spurious_energy.csv};
\addplot[stl comparison influence,forget plot,restrict expr to domain={\coordindex}{18:26}] table[x=budget,y=mean,col sep=comma]{figures/data/llm/continued/emoji_spurious_energy.csv};
\addplot[stl comparison pm,forget plot,restrict expr to domain={\coordindex}{9:17}] table[x=budget,y=mean,col sep=comma]{figures/data/llm/continued/emoji_spurious_energy.csv};
\addplot[stl strict,forget plot,restrict expr to domain={\coordindex}{27:35}] table[x=budget,y=mean,col sep=comma]{figures/data/llm/continued/emoji_spurious_energy.csv};
\nextgroupplot[xlabel={},xmin=60,xmax=2040,xtick={100,500,1000,1500,2000},xticklabels={100,500,1000,1500,2000},minor xtick={200,300,400,750},ymin=68,ymax=88,ytick={70,75,80,85},ylabel={$Q_{\rm clean}$ (\%)}]
\addplot[stl comparison utl,forget plot,restrict expr to domain={\coordindex}{0:8}] table[x=budget,y=mean,col sep=comma]{figures/data/llm/fresh/emoji_q_clean.csv};
\addplot[stl comparison influence,forget plot,restrict expr to domain={\coordindex}{18:26}] table[x=budget,y=mean,col sep=comma]{figures/data/llm/fresh/emoji_q_clean.csv};
\addplot[stl comparison pm,forget plot,restrict expr to domain={\coordindex}{9:17}] table[x=budget,y=mean,col sep=comma]{figures/data/llm/fresh/emoji_q_clean.csv};
\addplot[stl strict,forget plot,restrict expr to domain={\coordindex}{27:35}] table[x=budget,y=mean,col sep=comma]{figures/data/llm/fresh/emoji_q_clean.csv};
\nextgroupplot[xlabel={},xmin=60,xmax=2040,xtick={100,500,1000,1500,2000},xticklabels={100,500,1000,1500,2000},minor xtick={200,300,400,750},ymin=68,ymax=88,ytick={70,75,80,85},ylabel={}]
\addplot[stl comparison utl,forget plot,restrict expr to domain={\coordindex}{0:8}] table[x=budget,y=mean,col sep=comma]{figures/data/llm/continued/emoji_q_clean.csv};
\addplot[stl comparison influence,forget plot,restrict expr to domain={\coordindex}{18:26}] table[x=budget,y=mean,col sep=comma]{figures/data/llm/continued/emoji_q_clean.csv};
\addplot[stl comparison pm,forget plot,restrict expr to domain={\coordindex}{9:17}] table[x=budget,y=mean,col sep=comma]{figures/data/llm/continued/emoji_q_clean.csv};
\addplot[stl strict,forget plot,restrict expr to domain={\coordindex}{27:35}] table[x=budget,y=mean,col sep=comma]{figures/data/llm/continued/emoji_q_clean.csv};
\nextgroupplot[xmin=60,xmax=2040,xtick={100,500,1000,1500,2000},xticklabels={100,500,1000,1500,2000},minor xtick={200,300,400,750},ymin=91,ymax=98,ytick={92,94,96,98},ylabel={$P$ (\%)}]
\addplot[stl comparison utl,forget plot,restrict expr to domain={\coordindex}{0:8}] table[x=budget,y=mean,col sep=comma]{figures/data/additional/emoji_fresh_p.csv};
\addplot[stl comparison influence,forget plot,restrict expr to domain={\coordindex}{18:26}] table[x=budget,y=mean,col sep=comma]{figures/data/additional/emoji_fresh_p.csv};
\addplot[stl comparison pm,forget plot,restrict expr to domain={\coordindex}{9:17}] table[x=budget,y=mean,col sep=comma]{figures/data/additional/emoji_fresh_p.csv};
\addplot[stl strict,forget plot,restrict expr to domain={\coordindex}{27:35}] table[x=budget,y=mean,col sep=comma]{figures/data/additional/emoji_fresh_p.csv};
\nextgroupplot[xmin=60,xmax=2040,xtick={100,500,1000,1500,2000},xticklabels={100,500,1000,1500,2000},minor xtick={200,300,400,750},ymin=91,ymax=98,ytick={92,94,96,98},ylabel={}]
\addplot[stl comparison utl,forget plot,restrict expr to domain={\coordindex}{0:8}] table[x=budget,y=mean,col sep=comma]{figures/data/additional/emoji_continued_p.csv};
\addplot[stl comparison influence,forget plot,restrict expr to domain={\coordindex}{18:26}] table[x=budget,y=mean,col sep=comma]{figures/data/additional/emoji_continued_p.csv};
\addplot[stl comparison pm,forget plot,restrict expr to domain={\coordindex}{9:17}] table[x=budget,y=mean,col sep=comma]{figures/data/additional/emoji_continued_p.csv};
\addplot[stl strict,forget plot,restrict expr to domain={\coordindex}{27:35}] table[x=budget,y=mean,col sep=comma]{figures/data/additional/emoji_continued_p.csv};
\end{groupplot}
\node[anchor=south,font=\small\bfseries] at ($(group c1r1.north)+(0,2mm)$){Fresh};
\node[anchor=south,font=\small\bfseries] at ($(group c2r1.north)+(0,2mm)$){Continued};
\node[anchor=north] at ($(group c1r4.south west)!0.5!(group c2r4.south east)+(0,-11mm)$){\pgfplotslegendfromname{continued_emoji-legend}};
\end{tikzpicture}
\caption{Emoji results under fresh training (left) and continued training
(right). Rows share axes; colours, markers, and line styles match across
columns. Curves show means across three seeds without intervals to avoid
clutter. Dotted grey lines mark strict baselines. Energy uses a logarithmic
axis.}
\label{fig:continued-emoji}
\end{stlresultfigure}

Section~\ref{sec:experiments} discusses accuracy under $Q_{\rm adv}$ after
fresh training (Figure~\ref{fig:neural-experiments}(c)).  We report the
other metrics and compare fresh with continued training.

\paragraph{Clean accuracy and energy.}
As for hotels, selection improves clean accuracy and lowers counterfactual
energy, with PM ahead of Practical influence (Figure~\ref{fig:continued-emoji}).
With 500 ties, PM reaches $81.3\%$ clean accuracy and energy $2.05$ (95\% CI
$[1.34,2.75]$), better on both metrics than UTL with 750 ties ($80.5\%$ and
$2.74$).  At 500 ties, Practical influence reaches $79.5\%$ and $3.36$ (95\%
CI $[2.38,4.33]$), and UTL reaches $77.5\%$ and $5.15$ (95\% CI
$[4.58,5.72]$).

\paragraph{Fresh versus continued training.}
As with hotels, continued training helps at low budgets but loses its advantage as
more ties enter training.  Continued PM reaches $61.3\%$ adversarial accuracy
at $k=100$, versus $50.3\%$ after fresh training; at $k=2{,}000$, fresh PM
reaches $75.5\%$, versus $66.8\%$ after continuation.

\paragraph{Accuracy and energy need not agree.}
Continued PM improves clean accuracy from $71.1\%$ to $81.7\%$ at the full
pool, yet its mean energy rises from $20.69$ to $24.65$ (95\% CI
$[20.48,28.82]$).  Fresh PM reaches $84.0\%$ clean accuracy with energy
$0.70$.  At every budget, the mean energy after continued training stays above the strict baseline, so higher
accuracy does not by itself show that the model has stopped using emojis.

\paragraph{In-distribution accuracy.}
Fresh training reduces $P$ accuracy from $96.1\%$ without ties to $92.6\%$,
whereas continued PM retains $96.4\%$.  The stronger mitigation under fresh
training thus comes with an in-distribution trade-off, and continuation
retains more of the original preference pattern.

\subsubsection{Additional Results: Qwen3-4B}

\begin{stlresultfigure}
\begin{tikzpicture}
\begin{groupplot}[stl appendix axis,
group style={group size=2 by 4,horizontal sep=16mm,vertical sep=8mm},
width=.35\linewidth,height=.115\linewidth,
xmin=60,xmax=2040,xtick={100,500,1000,1500,2000},
xticklabels={100,500,1000,1500,2000},minor xtick={200,300,400,750}]
\nextgroupplot[xlabel={},ymin=10,ymax=90,ytick={10,30,50,70,90},ylabel={$Q_{\rm adv}$ (\%)},legend to name=qwen-results-legend,legend columns=4]
\addplot[stl comparison utl,forget plot,restrict expr to domain={\coordindex}{0:8}] table[x=budget,y=mean,col sep=comma]{figures/data/llm/qwen3_4b_seed0/hotels_q_adv.csv};
\addplot[stl comparison influence,forget plot,restrict expr to domain={\coordindex}{18:26}] table[x=budget,y=mean,col sep=comma]{figures/data/llm/qwen3_4b_seed0/hotels_q_adv.csv};
\addplot[stl comparison pm,forget plot,restrict expr to domain={\coordindex}{9:17}] table[x=budget,y=mean,col sep=comma]{figures/data/llm/qwen3_4b_seed0/hotels_q_adv.csv};
\addplot[stl strict,forget plot,restrict expr to domain={\coordindex}{27:35}] table[x=budget,y=mean,col sep=comma]{figures/data/llm/qwen3_4b_seed0/hotels_q_adv.csv};
\addlegendimage{stl comparison utl}\addlegendentry{UTL}
\addlegendimage{stl comparison pm}\addlegendentry{PM}
\addlegendimage{stl comparison influence}\addlegendentry{Practical influence}
\addlegendimage{stl strict}\addlegendentry{Strict only}
\nextgroupplot[xlabel={},ymin=47,ymax=83,ytick={50,60,70,80},ylabel={}]
\addplot[stl comparison utl,forget plot,restrict expr to domain={\coordindex}{0:8}] table[x=budget,y=mean,col sep=comma]{figures/data/llm/qwen3_4b_seed0/emoji_q_adv.csv};
\addplot[stl comparison influence,forget plot,restrict expr to domain={\coordindex}{18:26}] table[x=budget,y=mean,col sep=comma]{figures/data/llm/qwen3_4b_seed0/emoji_q_adv.csv};
\addplot[stl comparison pm,forget plot,restrict expr to domain={\coordindex}{9:17}] table[x=budget,y=mean,col sep=comma]{figures/data/llm/qwen3_4b_seed0/emoji_q_adv.csv};
\addplot[stl strict,forget plot,restrict expr to domain={\coordindex}{27:35}] table[x=budget,y=mean,col sep=comma]{figures/data/llm/qwen3_4b_seed0/emoji_q_adv.csv};
\nextgroupplot[xlabel={},ymode=log,ymin=1,ymax=26,ytick={1,2,5,10,20},yticklabels={1,2,5,10,20},ylabel={$E_{\rm cf}$}]
\addplot[stl comparison utl,forget plot,restrict expr to domain={\coordindex}{0:8}] table[x=budget,y=mean,col sep=comma]{figures/data/llm/qwen3_4b_seed0/hotels_spurious_energy.csv};
\addplot[stl comparison influence,forget plot,restrict expr to domain={\coordindex}{18:26}] table[x=budget,y=mean,col sep=comma]{figures/data/llm/qwen3_4b_seed0/hotels_spurious_energy.csv};
\addplot[stl comparison pm,forget plot,restrict expr to domain={\coordindex}{9:17}] table[x=budget,y=mean,col sep=comma]{figures/data/llm/qwen3_4b_seed0/hotels_spurious_energy.csv};
\addplot[stl strict,forget plot,restrict expr to domain={\coordindex}{27:35}] table[x=budget,y=mean,col sep=comma]{figures/data/llm/qwen3_4b_seed0/hotels_spurious_energy.csv};
\nextgroupplot[xlabel={},ymode=log,ymin=.4,ymax=32,ytick={.5,2,10,30},yticklabels={0.5,2,10,30},ylabel={}]
\addplot[stl comparison utl,forget plot,restrict expr to domain={\coordindex}{0:8}] table[x=budget,y=mean,col sep=comma]{figures/data/llm/qwen3_4b_seed0/emoji_spurious_energy.csv};
\addplot[stl comparison influence,forget plot,restrict expr to domain={\coordindex}{18:26}] table[x=budget,y=mean,col sep=comma]{figures/data/llm/qwen3_4b_seed0/emoji_spurious_energy.csv};
\addplot[stl comparison pm,forget plot,restrict expr to domain={\coordindex}{9:17}] table[x=budget,y=mean,col sep=comma]{figures/data/llm/qwen3_4b_seed0/emoji_spurious_energy.csv};
\addplot[stl strict,forget plot,restrict expr to domain={\coordindex}{27:35}] table[x=budget,y=mean,col sep=comma]{figures/data/llm/qwen3_4b_seed0/emoji_spurious_energy.csv};
\nextgroupplot[xlabel={},ymin=60,ymax=90,ytick={60,70,80,90},ylabel={$Q_{\rm clean}$ (\%)}]
\addplot[stl comparison utl,forget plot,restrict expr to domain={\coordindex}{0:8}] table[x=budget,y=mean,col sep=comma]{figures/data/llm/qwen3_4b_seed0/hotels_q_clean.csv};
\addplot[stl comparison influence,forget plot,restrict expr to domain={\coordindex}{18:26}] table[x=budget,y=mean,col sep=comma]{figures/data/llm/qwen3_4b_seed0/hotels_q_clean.csv};
\addplot[stl comparison pm,forget plot,restrict expr to domain={\coordindex}{9:17}] table[x=budget,y=mean,col sep=comma]{figures/data/llm/qwen3_4b_seed0/hotels_q_clean.csv};
\addplot[stl strict,forget plot,restrict expr to domain={\coordindex}{27:35}] table[x=budget,y=mean,col sep=comma]{figures/data/llm/qwen3_4b_seed0/hotels_q_clean.csv};
\nextgroupplot[xlabel={},ymin=71,ymax=88,ytick={72,76,80,84,88},ylabel={}]
\addplot[stl comparison utl,forget plot,restrict expr to domain={\coordindex}{0:8}] table[x=budget,y=mean,col sep=comma]{figures/data/llm/qwen3_4b_seed0/emoji_q_clean.csv};
\addplot[stl comparison influence,forget plot,restrict expr to domain={\coordindex}{18:26}] table[x=budget,y=mean,col sep=comma]{figures/data/llm/qwen3_4b_seed0/emoji_q_clean.csv};
\addplot[stl comparison pm,forget plot,restrict expr to domain={\coordindex}{9:17}] table[x=budget,y=mean,col sep=comma]{figures/data/llm/qwen3_4b_seed0/emoji_q_clean.csv};
\addplot[stl strict,forget plot,restrict expr to domain={\coordindex}{27:35}] table[x=budget,y=mean,col sep=comma]{figures/data/llm/qwen3_4b_seed0/emoji_q_clean.csv};
\nextgroupplot[ymin=95.8,ymax=98.3,ytick={96,97,98},ylabel={$P$ (\%)}]
\addplot[stl comparison utl,forget plot,restrict expr to domain={\coordindex}{0:8}] table[x=budget,y=mean,col sep=comma]{figures/data/llm/qwen3_4b_seed0/hotels_p.csv};
\addplot[stl comparison influence,forget plot,restrict expr to domain={\coordindex}{18:26}] table[x=budget,y=mean,col sep=comma]{figures/data/llm/qwen3_4b_seed0/hotels_p.csv};
\addplot[stl comparison pm,forget plot,restrict expr to domain={\coordindex}{9:17}] table[x=budget,y=mean,col sep=comma]{figures/data/llm/qwen3_4b_seed0/hotels_p.csv};
\addplot[stl strict,forget plot,restrict expr to domain={\coordindex}{27:35}] table[x=budget,y=mean,col sep=comma]{figures/data/llm/qwen3_4b_seed0/hotels_p.csv};
\nextgroupplot[ymin=92.5,ymax=96.5,ytick={93,94,95,96},ylabel={}]
\addplot[stl comparison utl,forget plot,restrict expr to domain={\coordindex}{0:8}] table[x=budget,y=mean,col sep=comma]{figures/data/llm/qwen3_4b_seed0/emoji_p.csv};
\addplot[stl comparison influence,forget plot,restrict expr to domain={\coordindex}{18:26}] table[x=budget,y=mean,col sep=comma]{figures/data/llm/qwen3_4b_seed0/emoji_p.csv};
\addplot[stl comparison pm,forget plot,restrict expr to domain={\coordindex}{9:17}] table[x=budget,y=mean,col sep=comma]{figures/data/llm/qwen3_4b_seed0/emoji_p.csv};
\addplot[stl strict,forget plot,restrict expr to domain={\coordindex}{27:35}] table[x=budget,y=mean,col sep=comma]{figures/data/llm/qwen3_4b_seed0/emoji_p.csv};
\end{groupplot}
\node[anchor=south,font=\small\bfseries] at ($(group c1r1.north)+(0,2mm)$){Hotels};
\node[anchor=south,font=\small\bfseries] at ($(group c2r1.north)+(0,2mm)$){Emojis};
\node[anchor=north] at ($(group c1r4.south west)!0.5!(group c2r4.south east)+(0,-11mm)$){\pgfplotslegendfromname{qwen-results-legend}};
\end{tikzpicture}
\caption{Results after fresh training for Qwen3-4B-Instruct-2507 on hotels (left)
and emojis (right). Y-axis ranges differ between columns to show
differences within each task. Curves show seed~0 with one UTL draw;
uncertainty intervals are not estimated. Dotted grey lines mark strict
baselines. Energy uses a logarithmic axis.}
\label{fig:qwen-results}
\end{stlresultfigure}

We also test whether the selection gains extend beyond Llama-3.2-1B by repeating the experiment with fresh training using Qwen3-4B-Instruct-2507, a larger model from a different family.  The model runs in unquantized bfloat16 with LoRA rank 8 and scaling factor 16.  We use a batch of 2 per device with 8 accumulation steps, which keeps the effective batch size of 16.  All other settings follow Table~\ref{tab:llm-hyperparameters}: all methods select from the same pool of 2,000 ties, and Practical influence uses a probe of 500 ties and 1,000 comparisons for the GGN.  These results use one seed and one UTL draw, so they carry no uncertainty estimates (Figure~\ref{fig:qwen-results}).

\paragraph{Hotels.} Both selectors improve adversarial accuracy over UTL at every tested budget below the full pool.  With 500 ties, PM and Practical influence reach
$62.6\%$ and $66.2\%$, above the $62.0\%$ that UTL reaches with 1,000 ties and the $27.2\%$ it reaches with 500.  Their clean accuracies also improve to $80.5\%$ and $81.5\%$, versus $70.8\%$, while energy falls to $2.38$ and $2.56$, versus $6.70$.  Neither selector is better than the other at every budget.  With a single seed, the curves are also less smooth than the Llama means: PM falls from $55.7\%$ at $k=300$ to $46.0\%$ at $k=400$.

\paragraph{Emojis.} Selection gives smaller but consistent gains over UTL at budgets below the full pool.  With 500 ties, PM reaches $71.1\%$ adversarial accuracy, above the $70.9\%$ that UTL reaches with 750 ties, compared with $68.4\%$ for Practical influence and $63.4\%$ for UTL at 500.  Clean accuracy rises to $82.5\%$ for PM, versus $81.5\%$ and $80.1\%$, and energy falls to $2.20$, versus $3.25$ and $5.67$.  The accuracy gains therefore accompany lower sensitivity to emoji removal.  At the full pool, all three methods reach about $80\%$ adversarial accuracy and energy near $0.58$.

\paragraph{Full pool.} At $k=M$, all methods train on the same ties.  Their remaining differences in adversarial accuracy, at most $0.9$ percentage points on hotels and $0.5$ on emojis, come from separate training runs.

\paragraph{Comparison with Llama.} On hotels, Qwen relies more on the spurious fields than Llama.  Its strict model reaches $12.7\%$ accuracy under $Q_{\rm adv}$ and $60.9\%$ under $Q_{\rm clean}$, compared with means of $30.5\%$ and $73.4\%$ for Llama.  With the full pool, Qwen reaches $81.6$--$82.5\%$ under $Q_{\rm adv}$ and energy $1.2$, compared with $95.3\%$ and $0.48$ for Llama.  Its strict energy, however, is lower: $22.21$ versus $27.38$.  On emojis, Qwen's accuracies are similar to or higher than Llama's.

\paragraph{In-distribution accuracy.} On hotels, accuracy under $P$ stays high: with the full pool, PM reaches $97.6\%$, versus $96.1\%$ without ties.  For emojis, it falls from $95.6\%$ to $93.3\%$, an in-distribution trade-off like the one observed for Llama.

\FloatBarrier

\newpage 
\section{Limitations and Future Work}
\label{app:limitations}
\paragraph{Controlled evaluation.}
Our theory holds for log-linear policies with a known decomposition into core and spurious features, at a stationary point of the DPO loss, and to first-order in the tie weight. We therefore test its quantitative predictions only where these conditions hold, and in neural networks and language models we ask whether the selection gains persist.  Controlled tasks provide the quantities needed for both tests.  Because we know the spurious attribute, we can construct ties whose responses share their core content, define $Q_{\rm adv}$ and $Q_{\rm clean}$ by reversing or removing the spurious correlation, and measure spurious reliance with the counterfactual energy. These choices isolate the predictions of the analysis.  Applying the same tests to natural preference data would require identifying a spurious attribute, validating that the transformation preserves utility, and defining appropriate evaluation distributions. Evaluating STL under these conditions and with larger models remains future work.

\paragraph{Known spurious attribute and strict checkpoint.} STL selects from a given pool of ties, and our pools come from an operator $T$ that changes a known spurious attribute.  Scoring also requires a strict checkpoint trained on the preference data.  Both requirements hold naturally when STL corrects a behavior observed after preference training.  If a trained model favors longer responses, for example, the spurious attribute is verbosity, and the trained model itself serves as the strict checkpoint. STL then chooses which ties to add before retraining.  When the spurious attribute is unknown, constructing a useful tie pool remains an open question.

\paragraph{Relation to RRM.} RRM~\citep{DBLP:conf/iclr/000200CWJGS0YSM25} offers an easy way to construct comparisons that it calls ties: it pairs two responses written for other prompts and labels them as a tie for the current prompt.  This construction needs no knowledge of the spurious attribute.  Our theory, however, does not apply to these pairs.  It requires the two responses of a tie to share their core features, so that the tie gradient has no core component (Section~\ref{subsec:first_order_effect}) and tie training adds curvature only along spurious directions~\citep{moya2026spurious}.  Two responses to different prompts generally differ in core content, so an RRM tie also acts on the core weights.  We could not test RRM ties in the log-linear model, because our synthetic data have no prompts: features depend on the response alone, so responses from other prompts are undefined.  Extending the analysis to ties with a nonzero core difference, and comparing selection on RRM ties empirically, remain open.

\ifshowunfinishedsections
\section{Computational Accounting}
\label{app:compute}
\fi

\end{document}